\DeclareMathAlphabet\mathbb{U}{msb}{m}{n}
\DeclareMathOperator{\card}{Card}
\def\Rset{\mathbb{R}}
\DeclareMathOperator*{\E}{\mathbb E}
\DeclareMathOperator{\fat}{fat}
\let\Pr\relax 
\DeclareMathOperator*{\Pr}{\mathbb{P}}
\newcommand{\conf}[1]{}
\newcommand{\Sm}{\mathbb{S}}
\newcommand{\cN}{\mathcal{N}}
\newcommand{\sC}{{\mathscr C}}
\newcommand{\sD}{{\mathscr D}}
\newcommand{\sF}{{\mathscr F}}
\newcommand{\sG}{{\mathscr G}}
\newcommand{\sH}{{\mathscr H}}
\newcommand{\sK}{{\mathscr K}}
\newcommand{\sL}{{\mathscr L}}
\newcommand{\sU}{{\mathscr U}}
\newcommand{\sX}{{\mathscr X}}
\newcommand{\sY}{{\mathscr Y}}
\newcommand{\sZ}{{\mathscr Z}}
\newcommand{\bR}{{\mathbf R}}
\newcommand{\bW}{{\mathbf W}}
\newcommand{\bw}{{\mathbf w}}
\newcommand{\bx}{{\mathbf x}}
\newcommand{\R}{\mathfrak R}
\newcommand{\bsigma}{{\boldsymbol \sigma}}
\newcommand{\A}{\mathfrak r}
\newcommand{\h}{\widehat}
\newcommand{\wt}{\widetilde}
\newcommand{\e}{\epsilon}
\newcommand{\ignore}[1]{}
\newtheorem*{rep@theorem}{\rep@title}
\newcommand{\newreptheorem}[2]{%
\newenvironment{rep#1}[1]{%
 \def\rep@title{#2 \ref{##1}}%
 \begin{rep@theorem}}%
 {\end{rep@theorem}}}
\renewcommand{\L}{\sL}
\title{Relative Deviation Margin Bounds}
\begin{document}
\maketitle
\begin{abstract}
We present a series of new and more favorable margin-based learning
guarantees that depend on the empirical margin loss of a predictor.
We give two types of learning bounds, both distribution-dependent and
valid for general families, in terms of the Rademacher complexity or
the empirical $\ell_\infty$ covering number of the hypothesis set
used.  Furthermore, using our relative deviation margin bounds, we derive
distribution-dependent generalization bounds for unbounded loss
functions under the assumption of a finite moment.  We also briefly
highlight several applications of these bounds and discuss their
connection with existing results.
\end{abstract}

\section{Introduction}

Margin-based learning bounds provide a fundamental tool for the
analysis of generalization in classification
\citep{Vapnik1998,Vapnik2006,SchapireFreundBartlettLee1997,KoltchinskiiPanchenko2002,TaskarGuestrinKoller2003,BartlettShaweTaylor1998}. These
are guarantees that hold for real-valued functions based on the notion
of confidence margin. Unlike worst-case bounds based on standard
complexity measures such as the VC-dimension, margin bounds provide
optimistic guarantees: a strong guarantee holds for predictors
that achieve a relatively small empirical margin loss,
for a relatively large
value of the confidence margin.
More generally, guarantees similar to margin bounds can be derived
based on notion of a luckiness
\citep{ShaweTaylorBartlettWilliamsonAnthony1998,
  KoltchinskiiPanchenko2002}.

Notably, margin bounds do not have an explicit dependency on the
dimension of the feature space for linear or kernel-based hypotheses.
They provide strong guarantees for
large-margin maximization algorithms such as Support Vector Machines
(SVM) \citep{CortesVapnik1995}, including when used for positive
definite kernels such as Gaussian kernels, for which the dimension of
the feature space is infinite. Similarly, margin-based learning
bounds have helped derive significant guarantees for AdaBoost
\citep{FreundSchapire1997,SchapireFreundBartlettLee1997}.  More
recently, margin-based learning bounds have been derived for neural
networks (NNs) \citep{NeyshaburTomiokaSrebro2015,BartlettFosterTelgarsky2017} and convolutional
neural networks (CNNs) \citep{LongSedghi2020}.

An alternative family of tighter learning guarantees is that of
relative deviation bounds
\citep{Vapnik1998,Vapnik2006,AnthonyShaweTaylor1993,CortesGreenbergMohri2019}. These
are bounds on the difference of the generalization and empirical error
scaled by the square-root of the generalization error or empirical
error, or some other power of the error. The scaling is similar to
dividing by the standard deviation since, for smaller values of the
error, the variance of the error of a predictor roughly coincides with
its error. These guarantees translate into very useful bounds on the
difference of the generalization error and empirical error whose
complexity terms admit the empirical error as a factor.

This paper presents general relative deviation {\em margin} bounds.  These
bounds combine the benefit of standard margin bounds and that of
standard relative deviation bounds, thereby resulting in tighter
margin bounds (Section~\ref{sec:H-applications}).  
As an example, our learning bounds provide 
tighter guarantees for margin-based algorithms such as SVM and
boosting than existing ones.
We give two families of relative deviation 
bounds, both distribution-dependent and valid
for general hypothesis sets. Additionally, both families
of guarantees hold for an arbitrary $\alpha$-moment,
with $\alpha \in (1, 2]$.
In Section~\ref{sec:applications}, we
also briefly highlight several applications of our bounds and
discuss their connection with existing results.

Our first family of margin bounds are expressed in terms
of the empirical $\ell_\infty$-covering
number of the hypothesis set (Section~\ref{sec:covering}).
We show how these empirical covering numbers can be
upper bounded to derive empirical fat-shattering guarantees.
One benefit of these resulting guarantees is that there are
known upper bounds for various standard hypothesis sets,
which can be leveraged to derive explicit bounds (see Section~\ref{sec:H-applications}).

Our second family of margin bounds are expressed in terms 
of the Rademacher complexity of the hypothesis set used (Section~\ref{sec:rademacher}). Here, our learning bounds are first expressed in terms of a peeling-based Rademacher complexity term
we introduce. Next, we give a series of upper bounds on
this complexity measure, first simpler ones in terms of Rademacher
complexity, next in terms of empirical $\ell_2$ covering numbers,
and finally in terms of the so-called \emph{maximum Rademacher complexity}.
In particular, we show that a simplified version of our bounds yields a 
guarantee similar to the maximum Rademacher margin bound of \cite{SrebroSridharanTewari2010}, but with
more favorable constants and for a general $\alpha$-moment. 

We then use these family of margin bounds for $\alpha$-moments to provide generalization guarantees for unbounded loss functions (Section~\ref{sec:unbound}). We also illustrate these results by deriving explicit bounds for various standard hypothesis sets in Section~\ref{sec:applications}. In the next sub-section, we further highlight 
our contributions and compare them to the previous work.

\subsection{Previous work and our contributions}

\ignore{
We have three contributions in our paper: $\ell_\infty$ based bounds, Rademacher complexity bounds, and implications for unbounded losses. Below, we further
highlight our contributions and compare them to the previous work 
for each of the above results.

Relative deviation margin bounds have been studied by several authors in the literature. 
}

\textbf{$\ell_\infty$-covering based bounds:} A version of our main
result for empirical $\ell_\infty$-covering number bounds in the
special case $\alpha \!=\!2$ was postulated by \cite{Bartlett1998}
without a proof. The author suggested that the proof could be given by
combining various techniques with the results of
\cite{AnthonyShaweTaylor1993} and \cite{Vapnik1998,
Vapnik2006}. However, as pointed out by
\cite{CortesGreenbergMohri2019}, the proofs given by
\cite{AnthonyShaweTaylor1993} and \cite{Vapnik1998, Vapnik2006} are
incomplete and rely on a key lemma that is not
proven. 
\ignore{
Nevertheless, our proof and presentation 
partly benefit from the analysis of \cite{Bartlett1998}, in particular
the bound on the covering number (Corollary~\ref{cor:3}).
}
\cite{zhang2002covering} studied covering number-based
non-relative bounds for linear classifiers but postulated that his
techniques could be modified, using Bernstein-type concentration bounds,
to obtain relative deviation $\ell_\infty$-covering number bounds for
linear classifiers. However, a careful inspection suggests that this
is not a straightforward exercise and obtaining such bounds in fact
requires techniques such as those we use in this paper, or, perhaps,
somewhat similar ones.
\textit{Our contribution:} We provide a self-contained 
proof based on a margin-based symmetrization argument. 
The proof technique uses 
a new symmetrization argument that is different from those of 
\cite{Bartlett1998} and \cite{zhang2002covering}.

\textbf{Rademacher complexity bounds:} Using ideas from local Rademacher complexity~\citep{bartlett2005local}, Rademacher complexity bounds were given in \cite{SrebroSridharanTewari2010}, however their bounds are based on the so-called 
\emph{maximum Rademacher complexity}, 
which depends on the worst possible sample and is therefore independent 
of the underlying distribution.
\textit{Our contribution:} We provide the first 
distribution-dependent relative deviation margin bounds, 
in terms of a peeling-based Rademacher complexity. 
The proof is based on the a new peeling-based arguments, 
which were not known before. 
Finally, we show that we
can recover the bounds of \cite{SrebroSridharanTewari2010} with 
more favorable constants.

\textbf{Generalization bounds for unbounded loss functions}: Commonly
used loss functions such as cross-entropy are unbounded and thus standard
relative deviation bounds do not hold for them. \cite{CortesGreenbergMohri2019}
provided zero-one relative deviation bounds which they used to derive
bounds for unbounded losses, in terms of the discrete dichotomies generated
by the hypothesis class, under the assumption of a finite moment of
the loss.
\textit{Our contribution}: We present the first generalization bounds
for unbounded loss functions in terms of covering numbers and
Rademacher complexity, which are \emph{optimistic bounds} that, in
general, are more favorable than the previous known bounds of
\cite{CortesGreenbergMohri2019}, under the same finite moment
assumption. Doing so required us to derive relative deviation margin
bounds for general $\alpha$-moment ($\alpha \in (1, 2]$), in contrast
with previous work, which only focused on the special case $\alpha =
2$.

Recently, relative deviation margin bounds for the special case of
linear classifiers were studied by \cite{gronlund2020near}. Both their
results and the proof techniques are specific to linear classifiers. In
contrast, our bounds hold for any general hypothesis set and recovers
the bounds of \cite{gronlund2020near} for the special case of linear
classifiers up to logarithmic factors. Relative deviation PAC-Bayesian bounds
were also derived by \cite{mcallester2003simplified} for linear hypothesis
sets.
It is known, however, that Rademacher complexity learning bounds
are finer guarantees since, as shown recently
by \cite{KakadeSridharantTewari2008} and 
\cite{foster2019hypothesis}, they can be used to 
derive finer PAC-Bayesian guarantees than previously
known ones.
\ignore{
In view of that, we suspect that our Rademacher complexity 
bounds are more favorable, at least in the linear case. 
}

\ignore{
While this is in a
different learning scenario, based on recent results on relationship
between Rademacher bounds and PAC Bayesian
bounds~\citep{foster2019hypothesis}, we suspect our bounds are better,
at least in the linear case. 
}

\ignore{
\textbf{Novelty and proof techniques.}
A version of our main result for empirical $\ell_\infty$-covering number
bounds for the special case $\alpha \!=\!2$ was postulated by
\cite{Bartlett1998} without a proof. The author suggested that the
proof could be given by combining various techniques with the results of
\cite{AnthonyShaweTaylor1993} and \cite{Vapnik1998,
  Vapnik2006}. However, as pointed out by
\cite{CortesGreenbergMohri2019}, the proofs given by
\cite{AnthonyShaweTaylor1993} and \cite{Vapnik1998, Vapnik2006} are
incomplete and rely on a key lemma that is not proven.
Our proof and presentation follow \citep{CortesGreenbergMohri2019} but
also partly benefit from the analysis of \cite{Bartlett1998}, in particular
the bound on the covering number (Corollary~\ref{cor:3}).
To the best of our knowledge, our Rademacher
complexity learning bounds of Section~\ref{sec:rademacher} are new. The proof consists
of using a peeling technique combined with an application
of a bounded difference inequality finer than McDiarmid's 
inequality. For both families of bounds, the proof relies on
a margin-based symmetrization result (Lemma~\ref{lem:cover}) proven in the next section.
}

\section{Symmetrization}
\label{sec:symmetrization}

In this section, we prove two key symmetrization-type lemmas
for a relative deviation between the expected binary
loss and empirical margin loss.

We consider an input space $\sX$ and a binary output space
$\sY = \set{-1, +1}$ and a hypothesis set $\sH$
of functions mapping from $\sX$ to $\Rset$.
We denote by $\sD$ a distribution over
$\sZ = \sX \times \sY$ and denote by $R(h)$ the generalization
error and by $\h R_S(h)$ the empirical error of a hypothesis
$h \in \sH$:
\begin{equation}
R(h) = \E_{z = (x, y) \sim \sD}[1_{yh(x) \leq 0}],
\qquad \h R_{S}(h) = \E_{z = (x, y) \sim S}[1_{y h(x) \leq 0}],
\end{equation}
where we write $z \sim S$ to indicate that $z$ is randomly drawn 
from the empirical distribution defined by $S$. Given $\rho \geq 0$,
we similarly defined the $\rho$-margin loss and empirical 
$\rho$-margin loss of $h \in \sH$:
\begin{equation}
R^\rho(h) = \E_{z = (x, y) \sim \sD}[1_{yh(x) < \rho}],
\qquad \h R^\rho_{S}(h) = \E_{z = (x, y) \sim S}[1_{yh(x) < \rho}] .
\end{equation}
We will sometimes use the shorthand ${x_{1}^{m}}$ to denote a sample
of $m$ points $(x_{1}, \ldots, x_{m}) \in \sX^{m}$.

The following is our first symmetrization lemma in terms of empirical margin loss.
The parameter $\tau > 0$ is used to ensure a positive denominator
so that the relative deviations are mathematically well defined.
\begin{lemma}
\label{lem:four}
Fix $\rho \geq 0$ and $1 < \alpha \leq 2$ and assume that
$m \e^{\frac{\alpha}{\alpha - 1}} > 1$. Then, for any any
$\e, \tau > 0$, the following inequality holds:
\begin{equation*}
  \Pr_{S \sim \sD^{m}} \left[\sup_{h \in \sH} \frac{R(h) - \h
      R^\rho_{S}(h)}{\sqrt[\alpha]{R(h) + \tau}} > \e \right] 
  \leq 4 \Pr_{S, S^{\prime} \sim \sD^{m}} \left[\sup_{h \in \sH} \frac{ \h
      R_{S^{\prime}}(h) - \h R^\rho_{S}(h) }{\sqrt[\alpha]{\frac{1}{2} [ \h R_{S^{\prime}}(h) + \h R^\rho_{S}(h) +\frac{1}{m}] } } > \e \right].
\end{equation*}
\end{lemma}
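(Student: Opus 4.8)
My plan is to follow the classical "ghost sample" symmetrization scheme, but carefully adapted to (a) the relative-deviation normalization by $\sqrt[\alpha]{R(h)+\tau}$, and (b) the mismatch between the binary loss on $S'$ and the $\rho$-margin loss on $S$. First I would fix a hypothesis $h^*$ (depending on $S$) achieving, or nearly achieving, the supremum in the event on the left-hand side; this is the standard measurable-selection step. Conditioning on $S$ with $R(h^*) - \h R^\rho_S(h^*) > \e \sqrt[\alpha]{R(h^*)+\tau}$, I want to lower-bound the probability over the fresh sample $S'\sim\sD^m$ that $\h R_{S'}(h^*)$ is close to its mean $R(h^*)$ — specifically that $\h R_{S'}(h^*) \geq R(h^*) - \tfrac12\e\sqrt[\alpha]{R(h^*)+\tau}$ or some comparable fraction — so that, on that good event, the empirical-to-empirical relative deviation for $h^*$ on the right-hand side still exceeds (a constant multiple of) $\e$. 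The factor $4$ on the right is the usual slack absorbing (i) the constant $\ge \tfrac12$ (say) probability that the ghost sample behaves well, and (ii) the change of the normalization from $R(h^*)+\tau$ to the symmetrized denominator $\tfrac12[\h R_{S'}(h^*)+\h R^\rho_S(h^*)+\tfrac1m]$.

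The key quantitative input is a one-sided deviation (lower-tail) bound for $\h R_{S'}(h^*)$: since $m\h R_{S'}(h^*)$ is a $\mathrm{Binomial}(m, R(h^*))$ variable, I would use a Chebyshev- or Bernstein-type bound to show $\Pr_{S'}[\h R_{S'}(h^*) < R(h^*) - t]$ is at most a constant when $t$ is on the order of $\e\sqrt[\alpha]{R(h^*)+\tau}$ — this is exactly where the hypothesis $m\e^{\alpha/(\alpha-1)}>1$ is used, to guarantee the deviation $t$ is large enough relative to the standard deviation $\sqrt{R(h^*)(1-R(h^*))/m}$ that the tail probability stays below $1/2$ (or below $3/4$, whichever constant makes the final factor come out to $4$). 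Concretely, $\Var(\h R_{S'}(h^*)) \le R(h^*)/m$, and one checks $R(h^*)/m \le \big(\tfrac12 \e (R(h^*)+\tau)^{1/\alpha}\big)^2$ reduces, after dropping $\tau\ge 0$ and writing $r=R(h^*)$, to $r^{1-2/\alpha} \le \tfrac14 m \e^2$, i.e. $r^{(\alpha-2)/\alpha}\le\tfrac14 m\e^2$; since $(\alpha-2)/\alpha\le 0$ the left side is minimized... — one has to be a little careful about the direction of the inequality here, and the cleanest route is to split on whether $r \le (m\e^{\alpha/(\alpha-1)})^{-1}$ or not, handling the small-$r$ regime by the trivial bound $\h R_{S'}(h^*)\ge 0$ together with $\tfrac1m$ inside the symmetrized denominator, and the large-$r$ regime by Chebyshev. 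This case split is, I expect, the main technical obstacle: getting the thresholds to line up so that in \emph{both} regimes the conditional probability over $S'$ of the "good" event is at least a fixed constant, and so that on that event the symmetrized ratio for $h^*$ genuinely exceeds $\e$ after substituting the bound $R(h^*)\le \h R_{S'}(h^*) + t$ into the denominator.

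Having secured the constant lower bound $\Pr_{S'}[\text{good event for } h^*\mid S] \ge c$ (with $c=1/4$ the target, perhaps after absorbing a further factor), I would integrate over $S$: on the event in the left-hand probability, with conditional probability $\ge c$ over $S'$ the pair $(S,S')$ lands in the event on the right-hand side \emph{witnessed by the same $h^*$}, hence certainly in the supremum event on the right. By Fubini this gives $\Pr_{S,S'}[\text{RHS event}] \ge c\,\Pr_S[\text{LHS event}]$, i.e. the claimed inequality with $1/c = 4$. The one subtlety to flag in writing this up cleanly is the measurable selection of $h^*$; the standard fix is to pass to an $\eta$-approximate maximizer and let $\eta\to 0$ at the end, or to invoke the usual measurability assumptions on $\sH$ already implicit in statements like $\sup_{h\in\sH}$, so I would just remark on that and not belabor it.
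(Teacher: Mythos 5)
Your overall architecture (near-maximizer $h_S$, a ghost-sample event handled by Fubini, small-$R$ regime excluded because the left ratio is then automatically $\leq \e$) matches the paper, but the quantitative core has a genuine gap. You propose to take the good event $\h R_{S'}(h^*) \geq R(h^*) - t$ with slack $t \approx \tfrac12 \e \sqrt[\alpha]{R(h^*)+\tau}$ and bound its failure probability by Chebyshev. Two things break. First, the slack is fatal to the stated conclusion: after giving up half the deviation in the numerator, the symmetrized ratio can only be shown to exceed a constant fraction of $\e$ (your own hedge ``a constant multiple of $\e$''), whereas the lemma has threshold exactly $\e$ on the right-hand side --- the factor $4$ multiplies the probability, it does not rescale $\e$. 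A quick check at $\alpha=2$ shows that forcing the ratio above $\e$ with this slack would require $R(h^*) \lesssim \e^2$, which is precisely the regime already excluded. Second, the Chebyshev step itself does not close under the hypothesis: in the relevant regime $R(h^*) > \e^{\alpha/(\alpha-1)}$, the variance condition you write reduces to $m\e^{\alpha/(\alpha-1)} \geq c$ for a constant $c$ strictly larger than $1$ (around $16/3$ with your choice of constants), and the lemma only assumes $m\e^{\alpha/(\alpha-1)} > 1$.

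The paper avoids both problems by taking the \emph{zero-slack} ghost event $\h R_{S'}(h_S) \geq R(h_S)$ and lower-bounding its probability by the median-type binomial inequality of Lemma~\ref{lem:binomial} ($\Pr[X \geq \E[X]] > \tfrac14$ whenever $p > \tfrac1m$), not by a variance bound; the hypothesis $m\e^{\alpha/(\alpha-1)} > 1$ enters only to guarantee $R(h_S) > \e^{\alpha/(\alpha-1)} > \tfrac1m$ (so the binomial lemma applies) and to absorb the $+\tfrac1m$ term in the symmetrized denominator. With no slack in the numerator, the deterministic implication --- if the left ratio exceeds $\e$ and $\h R_{S'}(h) \geq R(h)$, then $F(\h R_{S'}(h), \h R^\rho_S(h)) > \e$ for $F(x,y) = (x-y)/\sqrt[\alpha]{\tfrac12(x+y+\tfrac1m)}$ --- goes through exactly, using the monotonicity of $F$ (Lemma~\ref{lem:monotone}), and the constant $4$ is exactly the reciprocal of the binomial lemma's $\tfrac14$. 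To repair your write-up you would need to replace Chebyshev-with-slack by this (or an equivalent) zero-slack median argument.
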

The proof is presented in Appendix~\ref{app:symmetrization}. It consists of extending the proof technique of \cite{CortesGreenbergMohri2019} for standard
empirical error to the empirical margin case
and of using the binomial inequality \citep[Lemma~\ref{lem:binomial}]{GreenbergMohri2013}.
The lemma helps us bound the relative deviation in terms of the empirical margin loss on a sample $S$ and the empirical error on an independent sample $S'$, both of size $m$.

\begin{figure}[t]
\centering
\includegraphics[scale=0.4]{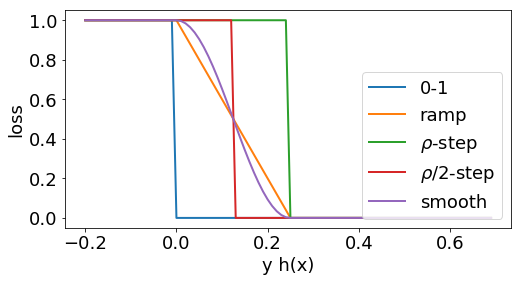}
\caption{Illustration of different choices of function $\phi$ for $\rho = 0.25$.}
\label{fig:margin}
\end{figure}

We now introduce some notation needed for the presentation and
discussion of our relative deviation margin bound. Let $\phi\colon \Rset \to \Rset_+$ be a function such that the following inequality holds for all $x \in \Rset$:
\[
1_{x <0} \leq \phi(x) \leq 1_{x < \rho}.
\]
As an example, we can choose $\phi(x) = 1_{x < \rho/2}$ as in the previous sections. For a sample $z = (x,y)$, let $g(z) = \phi(yh(x))$. Then,
\begin{equation}
\label{eq:margin-loss}
1_{yh(x) <0} \leq g(z) \leq 1_{yh(x) < \rho}.
\end{equation}
Let the family $\sG$ be defined as follows: $\sG = \{z = (x, y) \mapsto \phi(yh(x)) \colon h \in \sH \}$ and let $R(g) = \E_{z \sim \sD}[g(z)]$ denote the expectation of $g$ and $\h R_{S}(g) = \E_{z \sim S}[g(z)]$
its empirical expectation for a sample $S$. There are several choices for function $\phi$, as illustrated by Figure~\ref{fig:margin}. For example, $\phi(x)$ can be chosen to be $1_{x < \rho}$ or $1_{x < \rho/2}$ \citep{Bartlett1998}. $\phi$ can also be chosen to be the so-called \emph{ramp loss}:
\begin{equation*}
  \phi(x) =
    \begin{cases}
      1 & \text{if } x < 0 \\
      1 - \frac{x}{\rho} & \text{if }x \in [0, \rho] \\
      0 & \text{if } x > \rho,
    \end{cases}       
\end{equation*}
or the smoothed margin loss chosen by \citep{SrebroSridharanTewari2010}:
\begin{equation*}
  \phi(x) =
    \begin{cases}
      1 & \text{if } x< 0 \\
      \frac{1 + \cos(\pi x/ \rho)}{2} & \text{if } x \in [0, \rho] \\
      0 & \text{if } x > \rho.
    \end{cases}       
\end{equation*}
Fix $\rho > 0$. Define the $\rho$-truncation function $\beta_\rho
\colon \Rset \to [-\rho, +\rho]$ by
$\beta_\rho(u) = \max\set{u, -\rho} 1_{u \leq 0} + \min\set{u, +\rho}
1_{u \geq 0}$, for all $u \in \Rset$. For any $h \in \sH$, we denote
by $h_\rho$ the $\rho$-truncation of $h$, $h_\rho = \beta_\rho(h)$,
and define $\sH_\rho = \set{h_\rho \colon h \in \sH}$.

For any family of functions $\sF$, we also denote by
$\cN_\infty(\sF, \e, x_1^m)$ the empirical covering number of $\sF$
over the sample $(x_1, \ldots, x_m)$ and by $\sC(\sF, \e, x_1^m)$
a minimum empirical cover. Then, the following symmetrization lemma holds.
\begin{lemma}
\label{lem:cover}
Fix $\rho \geq 0$ and $1 < \alpha \leq 2$. Then, the
following inequality holds:
\begin{equation*}
\Pr_{S, S^{\prime} \sim \sD^{m}} \left[\sup_{h \in \sH} \frac{ \h
R_{S^{\prime}}(h) - \h R^\rho_{S}(h) }{ \sqrt[\alpha]{\frac{1}{2} [ \h
  R_{S^{\prime}}(h) + \h R^\rho_{S}(h) +\frac{1}{m}] } } > \e
\right] \leq
\Pr_{S, S^{\prime} \sim \sD^{m}} \left[\sup_{g \in \sG} \frac{ \h
R_{S^{\prime}}(g) - \h R_{S}(g) }{ \sqrt[\alpha]{\frac{1}{2} [ \h R_{S^{\prime}}(g) + \h R_{S}(g) +\frac{1}{m}] } } > \e \right].
\end{equation*}
Further for $g(z)= 1_{yh(x) < \rho/2}$, using the shorthand $\sK = \sC(\sH_\rho,
    \frac{\rho}{2}, S \cup S')$, the following holds:
\begin{equation*}
\Pr_{S, S^{\prime} \sim \sD^{m}} \left[\sup_{h \in \sH} \frac{ \h
R_{S^{\prime}}(h) - \h R^\rho_{S}(h) }{ \sqrt[\alpha]{\frac{1}{2} [ \h
  R_{S^{\prime}}(h) + \h R^\rho_{S}(h) +\frac{1}{m}] } } > \e
\right] \leq
\Pr_{S, S^{\prime} \sim \sD^{m}} \left[\sup_{h \in \sK} \frac{ \h
R^{\frac{\rho}{2}}_{S^{\prime}}(h) - \h R^{\frac{\rho}{2}}_{S}(h) }{ \sqrt[\alpha]{\frac{1}{2} [ \h R^{\frac{\rho}{2}}_{S^{\prime}}(h) + \h R^{\frac{\rho}{2}}_{S}(h) +\frac{1}{m}] } } > \e \right].
\end{equation*}
\end{lemma}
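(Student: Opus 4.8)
The plan is to prove both inequalities by a comparison argument carried out conditionally on the double sample $(S,S')$, each reduction resting on the following elementary monotonicity fact. For $1<\alpha\le2$ and $a,b\ge0$, set $\psi(a,b)=\frac{a-b}{\sqrt[\alpha]{\tfrac12(a+b+\tfrac1m)}}$. A short computation shows that $\psi$ is non-decreasing in $a$: its $a$-derivative has the sign of $1-\frac{a-b}{\alpha(a+b+1/m)}$, which is $\ge1-\tfrac1\alpha\ge0$ since $a-b\le a+b+\tfrac1m$ and $\alpha>1$. It also shows that $\psi$ is non-increasing in $b$ on the set $\{a\ge b\}$, hence in particular wherever $\psi(a,b)>0$. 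Consequently, if $a\le a'$, $b\ge b'$ and $\psi(a,b)>\e>0$, then $\psi(a',b')\ge\psi(a,b)>\e$.

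For the first inequality, fix $(S,S')$ and let $h\in\sH$ with corresponding $g=\phi(y\,h(\cdot))\in\sG$. The pointwise bounds~\eqref{eq:margin-loss} give $\h R_{S'}(h)\le\h R_{S'}(g)$ and $\h R^\rho_S(h)\ge\h R_S(g)$ (the only mild subtlety is the threshold value at the origin, where one uses that the displayed choices of $\phi$ satisfy $\phi(0)=1$). Applying the monotonicity fact with $a=\h R_{S'}(h)$, $a'=\h R_{S'}(g)$, $b=\h R^\rho_S(h)$, $b'=\h R_S(g)$, we get that if some $h$ makes the left-hand ratio exceed $\e$ then that $g$ makes the right-hand ratio exceed $\e$; taking suprema over $h\in\sH$ and $g\in\sG$ respectively, and then probabilities, yields the inequality.

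For the second inequality I would proceed in three steps. First, truncation invariance: for every $h\in\sH$ and every $(x,y)$ one has $1_{yh(x)\le0}=1_{yh_\rho(x)\le0}$ and $1_{yh(x)<\rho}=1_{yh_\rho(x)<\rho}$, because $\beta_\rho$ changes a value $h(x)$ only when $|h(x)|>\rho$, and for such $x$ the numbers $yh(x)$ and $yh_\rho(x)$ have the same sign and lie on the same side outside $[-\rho,\rho)$; hence $\h R_{S'}(h)=\h R_{S'}(h_\rho)$, $\h R^\rho_S(h)=\h R^\rho_S(h_\rho)$, and the left-hand supremum over $h\in\sH$ equals the supremum of the same margin-$\rho$ ratio over $\tilde h\in\sH_\rho$. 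Second, covering: fix $(S,S')$ and, for $\tilde h\in\sH_\rho$, choose $\bar h\in\sK=\sC(\sH_\rho,\tfrac\rho2,S\cup S')$ with $|\tilde h(x_i)-\bar h(x_i)|\le\tfrac\rho2$ at all $2m$ points $x_i$ of $S\cup S'$. Then, point by point, $y\tilde h(x_i)\le0$ forces $y\bar h(x_i)<\tfrac\rho2$, so $1_{y\tilde h(x_i)\le0}\le1_{y\bar h(x_i)<\rho/2}$, and $y\bar h(x_i)<\tfrac\rho2$ forces $y\tilde h(x_i)<\rho$, so $1_{y\bar h(x_i)<\rho/2}\le1_{y\tilde h(x_i)<\rho}$; summing over $S'$ and over $S$ respectively gives $\h R_{S'}(\tilde h)\le\h R^{\rho/2}_{S'}(\bar h)$ and $\h R^\rho_S(\tilde h)\ge\h R^{\rho/2}_S(\bar h)$. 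Third, apply the monotonicity fact with $a=\h R_{S'}(\tilde h)$, $a'=\h R^{\rho/2}_{S'}(\bar h)$, $b=\h R^\rho_S(\tilde h)$, $b'=\h R^{\rho/2}_S(\bar h)$ to conclude that whenever the margin-$\rho$ ratio of some $\tilde h\in\sH_\rho$ exceeds $\e$, the margin-$\tfrac\rho2$ ratio of the associated $\bar h\in\sK$ exceeds $\e$ as well; hence the left-hand event is contained in the right-hand event and the probability bound follows.

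I expect the only genuinely substantive ingredient to be the monotonicity fact, together with the structural observation that makes the covering step work: the asymmetric shape of the left-hand ratio — empirical error (margin $0$) on $S'$ against margin-$\rho$ loss on $S$ — is exactly what lets a single $\tfrac\rho2$-cover of $\sH_\rho$ bring both samples to the common margin $\tfrac\rho2$, whereas first passing through the symmetric intermediate form of the first inequality (with $g=1_{yh(x)<\rho/2}$) would cost a factor in the margin on one of the two samples; this is why the second inequality is proved directly rather than by composing with the first. Everything else is bookkeeping; the only remaining fuss is $\le$-versus-$<$ at threshold values of the indicators (at the origin, and at $\tfrac\rho2$ in the covering step), which is cosmetic — it can be absorbed, e.g., by taking the cover at a scale infinitesimally below $\tfrac\rho2$ and passing to the limit — and does not affect the stated bounds.
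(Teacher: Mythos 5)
Your proof is correct and follows essentially the same route as the paper's: pointwise domination of the indicator losses (by $\phi$ for the first inequality, and by the $\tfrac{\rho}{2}$-margin indicator of a covering element of $\sH_\rho$ after the truncation-invariance observation for the second), combined with the monotonicity of $(a,b)\mapsto (a-b)/\sqrt[\alpha]{\tfrac12(a+b+\tfrac1m)}$, which you verify directly while the paper simply invokes it as Lemma~\ref{lem:monotone}. The $\le$-versus-$<$ threshold subtlety you flag is equally present (and glossed over) in the paper's own argument, so it is not a point of divergence.
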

The proof consists
of using inequality~\ref{eq:margin-loss}, it is given in Appendix~\ref{app:symmetrization}. The first result of the lemma gives an upper bound for a general choice of functions $g$, that is for an arbitrary choices of the $\Phi$ loss function. This inequality will be used in Section~\ref{sec:rademacher} to derive our Rademacher complexity bounds. The second inequality is for the specific choice of $\Phi$ that corresponds to $\rho/2$-step function. We will use this inequality in the next section to derive $\ell_\infty$ covering number bounds.

\section{Relative deviation margin bounds -- Covering numbers}
\label{sec:covering}

In this section, we present a general relative deviation margin-based learning bound, 
expressed in terms of the expected empirical covering number of
$\sH_\rho$. The learning guarantee is thus distribution-dependent. It is also 
very general since it is given for any
$1\!<\! \alpha\! \leq\! 2$ and an arbitrary hypothesis set.

\begin{theorem}[General relative deviation margin bound]
  \label{th:relative} Fix $\rho \geq 0$ and $1 < \alpha \leq 2$. Then,
  for any hypothesis set $\sH$ of functions mapping from $\sX$ to
  $\Rset$ and any $\tau > 0$, the following inequality holds:
\begin{align*}
& \Pr_{S \sim \sD^{m}} \left[\sup_{h \in \sH} \frac{R(h) - \h
  R^\rho_{S}(h)}{\sqrt[\alpha]{R(h) + \tau}} > \e \right] 
\leq 4 \, \E_{x_1^{2m} \sim \sD^{2m}}[\cN_\infty(\sH_\rho, \tfrac{\rho}{2}, x_1^{2m})] \ 
\exp \left[ \frac{-m^{\frac{2 (\alpha - 1)}{\alpha}} \e^2}{2^{\frac{\alpha + 2}{\alpha}}} \right].
\end{align*}
\end{theorem}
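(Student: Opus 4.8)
The plan is to chain together the two symmetrization lemmas already proven and then bound the resulting two-sample probability by a union bound over a minimal empirical cover, controlling each fixed function via a Bernstein-type tail bound. Concretely, first I would apply Lemma~\ref{lem:four} to pass from the one-sample relative deviation (with denominator $\sqrt[\alpha]{R(h)+\tau}$) to four times the symmetrized two-sample probability with the symmetric denominator $\sqrt[\alpha]{\tfrac12[\h R_{S'}(h) + \h R^\rho_S(h) + \tfrac1m]}$; the hypothesis $m\e^{\frac{\alpha}{\alpha-1}}>1$ is exactly what that lemma needs. Next I would apply the second inequality of Lemma~\ref{lem:cover} with the choice $g(z) = 1_{yh(x)<\rho/2}$, which replaces the supremum over all of $\sH$ by a supremum over the finite cover $\sK = \sC(\sH_\rho, \tfrac\rho2, S\cup S')$ and replaces the empirical $\rho$-margin and $0$-losses by empirical $\tfrac\rho2$-margin losses, which are evaluated only on the $\tfrac\rho2$-step function.

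Having reduced to $\Pr_{S,S'}[\sup_{h\in\sK} (\h R^{\rho/2}_{S'}(h) - \h R^{\rho/2}_{S}(h))/\sqrt[\alpha]{\tfrac12[\h R^{\rho/2}_{S'}(h)+\h R^{\rho/2}_{S}(h)+\tfrac1m]} > \e]$, I would condition on $S\cup S'$ as a fixed multiset of $2m$ points and use the standard double-sample randomization: the conditional distribution of $(S,S')$ given their union is obtained by drawing a uniformly random subset of size $m$ (equivalently, independently swapping each of the $m$ pairs). Since $\sK$ depends only on $S\cup S'$, it is fixed under this conditioning, so I can take a union bound over the at most $\cN_\infty(\sH_\rho,\tfrac\rho2, x_1^{2m})$ elements $h\in\sK$. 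For each fixed $h$, writing $a_i = 1_{y_i h(x_i)<\rho/2}$ over the $2m$ points, the numerator $\h R^{\rho/2}_{S'}(h) - \h R^{\rho/2}_{S}(h)$ is a centered sum of the randomized differences and the denominator involves the total count $\tfrac1{2m}\sum_{i=1}^{2m} a_i$ (up to the $\tfrac1m$ additive term). This is precisely the setting in which a Bernstein/binomial-type concentration inequality gives, for a fixed $h$, a tail of the form $\exp[-m^{2(\alpha-1)/\alpha}\e^2 / 2^{(\alpha+2)/\alpha}]$ — the scaling $m^{2(\alpha-1)/\alpha}$ arising because the relative deviation is normalized by the $\alpha$-th root rather than the square root (for $\alpha=2$ this is the usual $\sqrt m$).

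The main obstacle I expect is the per-hypothesis tail bound: making the Bernstein-type argument yield exactly the stated exponent $m^{2(\alpha-1)/\alpha}\e^2 / 2^{(\alpha+2)/\alpha}$ for a general $\alpha\in(1,2]$, rather than only $\alpha=2$. The subtlety is that after dividing by $\sqrt[\alpha]{\tfrac12[\cdots]}$, the event $\{\h R^{\rho/2}_{S'} - \h R^{\rho/2}_{S} > \e\,\sqrt[\alpha]{\tfrac12[\h R^{\rho/2}_{S'}+\h R^{\rho/2}_{S}+\tfrac1m]}\}$ must be recast as a deviation bound whose "variance proxy" is the total empirical mass of $a_i$; one then needs the (one-sided) inequality for the hypergeometric/Rademacher-swapped sum whose variance is controlled by that mass, and the $\alpha$-th-root normalization converts the Bernstein exponent $t^2/(2v + \cdots)$ into the claimed form after optimizing. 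I would handle this by invoking the binomial inequality of \citet{GreenbergMohri2013} (Lemma~\ref{lem:binomial}) referenced right after Lemma~\ref{lem:four}, applied conditionally on the union sample, exactly as in the analysis it is cited for. Finally, multiplying the union-bound cardinality by the per-hypothesis tail and taking expectation over $x_1^{2m}\sim\sD^{2m}$ (to replace the sample-dependent covering number by its expectation), together with the factor $4$ from Lemma~\ref{lem:four}, yields the theorem.
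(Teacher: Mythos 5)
Your overall route is the same as the paper's: apply Lemma~\ref{lem:four}, then the second inequality of Lemma~\ref{lem:cover}, condition on the double sample (over which the cover $\sK$ is fixed, since it depends only on $S \cup S'$), take a union bound over the at most $\cN_\infty(\sH_\rho,\tfrac{\rho}{2},x_1^{2m})$ cover elements, and finish with a per-hypothesis tail bound before taking expectations. The genuine gap is in the per-hypothesis tail bound, which is the heart of the argument. You propose to obtain the tail $\exp[-m^{2(\alpha-1)/\alpha}\e^2/2^{(\alpha+2)/\alpha}]$ by ``invoking the binomial inequality of Greenberg--Mohri (Lemma~\ref{lem:binomial})''. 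That lemma is an \emph{anti-concentration lower bound}: it states $\Pr[X \geq \E[X]] > \tfrac14$ (resp.\ $\Pr[X \leq \E[X]] > \tfrac14$) for a binomial $X$, and its only role is inside the proof of Lemma~\ref{lem:four}, where it lower-bounds $\Pr_{S'}[\h R_{S'}(h_S) > R(h_S)]$ and produces the factor $4$. It cannot deliver any exponential upper tail, let alone one with the stated exponent, so your plan fails exactly at the step you yourself flag as the main obstacle. What the paper does instead is introduce i.i.d.\ Rademacher swap variables $\sigma_i$ on the $m$ pairs $(z_i,z_{m+i})$ (a valid randomization because the $2m$ points are i.i.d.; note this is not literally the same as conditioning on the unordered union and drawing a uniform size-$m$ subset, though either can be made to work), and then, for a fixed $g = 1_{yh(x)<\rho/2}$ in the cover, applies \emph{Hoeffding's inequality} to $\sum_i \sigma_i\bigl(g(z_{m+i}) - g(z_i)\bigr)$. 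The exponent then comes from the $\{0,1\}$-valuedness of $g$: one has
\[
\sum_{i=1}^m \bigl(g(z_{m+i}) - g(z_i)\bigr)^2 \;\leq\; \sum_{i=1}^m \bigl(g(z_{m+i}) + g(z_i)\bigr) \;\leq\; \Bigl[\sum_{i=1}^m \bigl(g(z_{m+i}) + g(z_i)\bigr)\Bigr]^{2/\alpha},
\]
the last step using $\alpha \leq 2$ and the fact that the (integer) sum is either $0$ or at least $1$, so the variance proxy cancels against the $2/\alpha$ power of the denominator and leaves exactly $\exp[-m^{2(\alpha-1)/\alpha}\e^2/2^{(\alpha+2)/\alpha}]$. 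You correctly intuit that the variance proxy must be the total empirical mass, but without this Hoeffding-plus-cancellation step (or an equivalent substitute), your write-up does not establish the claimed tail for general $\alpha \in (1,2]$.

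A second, minor omission: the theorem is asserted for every $\e > 0$, whereas Lemma~\ref{lem:four} requires $m\e^{\alpha/(\alpha-1)} > 1$. You use that hypothesis but never dispose of the complementary case; the paper handles it by noting that when $m\e^{\alpha/(\alpha-1)} \leq 1$ the right-hand side already exceeds $1$, so the bound is trivial. You should add that one-line case distinction.
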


The proof is given in Appendix~\ref{app:covering}.
As mentioned earlier, a version of this result for $\alpha = 2$ was postulated by
\cite{Bartlett1998}. The result can be alternatively expressed as follows, taking the limit $\tau \to 0$.

\begin{corollary}
\label{cor:1} 
Fix $\rho \geq 0$ and $1 < \alpha \leq 2$. Then, for any hypothesis
set $\sH$ of functions mapping from $\sX$ to $\Rset$, with
probability at least $1 - \delta$, the following
inequality holds for all $h \in \sH$:
\begin{equation*}
R(h) 
\leq  \h R^\rho_{S}(h) + 2^{\frac{\alpha + 2}{2 \alpha}} \sqrt[\alpha]{R(h)}
  \sqrt{\frac{\log \E[\cN_\infty(\sH_\rho, \tfrac{\rho}{2}, x_1^{2m})] + \log \frac{1}{\delta}}{m^{\frac{2 (\alpha - 1)}{\alpha}}}}.
\end{equation*}
\end{corollary}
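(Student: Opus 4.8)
The plan is to obtain Corollary~\ref{cor:1} directly from Theorem~\ref{th:relative} in two moves: first send $\tau \to 0$, then invert the resulting tail bound.

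\textbf{Passing to the limit $\tau\to 0$.} The key observation is that the right-hand side of the bound in Theorem~\ref{th:relative} does not depend on $\tau$. Fix $\e > 0$ and, for $\tau > 0$, let $A_\tau$ be the event $\bigl\{\sup_{h \in \sH}\frac{R(h) - \h R^\rho_S(h)}{\sqrt[\alpha]{R(h) + \tau}} > \e\bigr\}$. Since $\tau \mapsto \sqrt[\alpha]{R(h) + \tau}$ is positive and strictly increasing, each ratio is nondecreasing as $\tau \downarrow 0$, so the events $A_\tau$ increase along any sequence $\tau_n \downarrow 0$, and their union equals $\bigl\{\sup_{h \in \sH}\frac{R(h) - \h R^\rho_S(h)}{\sqrt[\alpha]{R(h)}} > \e\bigr\}$. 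Here a hypothesis with $R(h) = 0$ has nonpositive numerator $R(h) - \h R^\rho_S(h) \le 0$, hence never makes the supremum exceed $\e > 0$, so the degenerate $0/0$ case is immaterial. By continuity of measure from below, $\Pr\bigl[\bigcup_{\tau>0} A_\tau\bigr] = \lim_{\tau \to 0}\Pr[A_\tau]$, and since every $\Pr[A_\tau]$ is bounded by the same $\tau$-free quantity,
\[
\Pr_{S \sim \sD^{m}}\!\left[\sup_{h \in \sH}\frac{R(h) - \h R^\rho_{S}(h)}{\sqrt[\alpha]{R(h)}} > \e\right] \le 4\,\E_{x_1^{2m}\sim\sD^{2m}}[\cN_\infty(\sH_\rho, \tfrac{\rho}{2}, x_1^{2m})]\,\exp\!\left[\frac{-m^{\frac{2(\alpha - 1)}{\alpha}}\e^2}{2^{\frac{\alpha + 2}{\alpha}}}\right].
\]

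\textbf{Inversion.} Write $N = \E_{x_1^{2m}\sim\sD^{2m}}[\cN_\infty(\sH_\rho, \tfrac{\rho}{2}, x_1^{2m})]$ and set the right-hand side above equal to $\delta$. Solving for $\e$ yields $\e = 2^{\frac{\alpha+2}{2\alpha}}\sqrt{(\log N + \log\tfrac{1}{\delta})/m^{\frac{2(\alpha-1)}{\alpha}}}$ (up to the additive $\log 4$ inside the square root, which the statement absorbs). For this choice, the displayed tail bound asserts that with probability at least $1-\delta$ every $h \in \sH$ satisfies $R(h) - \h R^\rho_S(h) \le \e\,\sqrt[\alpha]{R(h)}$, i.e. $R(h) \le \h R^\rho_S(h) + 2^{\frac{\alpha+2}{2\alpha}}\sqrt[\alpha]{R(h)}\sqrt{\frac{\log N + \log\frac1\delta}{m^{\frac{2(\alpha-1)}{\alpha}}}}$, which is exactly the claim.

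The only step that genuinely requires care is the justification of the $\tau \to 0$ limit inside the probability, together with the $R(h)=0$ edge case; both are handled by the monotonicity-plus-continuity-of-measure argument above. Once the $\tau$-free tail bound is in hand, the remainder is the routine inversion of a subgaussian-type bound, so I do not anticipate any real obstacle here.
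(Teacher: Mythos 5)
Your proof is correct and takes essentially the same route as the paper: the paper offers no separate argument for Corollary~\ref{cor:1} beyond ``taking the limit $\tau \to 0$'' in Theorem~\ref{th:relative}, and your monotone-events/continuity-of-measure justification of that limit, followed by the routine inversion of the tail bound, is exactly that argument made explicit. The only mismatch---the extra additive $\log 4$ inside the square root coming from the factor $4$ in the theorem---is inherited from the paper's own statement and you flag it, so it is not a gap in your reasoning.
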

Note that a smaller value of $\alpha$ ($\alpha$ closer to $1$) might be advantageous
for some values of $R(h)$, at the price of a worse complexity in terms of the sample size.
For $\alpha = 2$, the result can be rewritten as follows.
\begin{corollary}
\label{cor:2} 
Fix $\rho \geq 0$. Then, for any hypothesis
set $\sH$ of functions mapping from $\sX$ to $\Rset$, with
probability at least $1 - \delta$, the following
inequality holds for all $h \in \sH$:
\begin{align*}
R(h) 
\leq  \h R^\rho_{S}(h) + 2\sqrt{\h R^\rho_{S}(h) \frac{\log
  \E[\cN_\infty(\sH_\rho, \tfrac{\rho}{2}, x_1^{2m})] + \log
  \frac{1}{\delta}}{m}}
+ 4 \frac{\log \E[\cN_\infty(\sH_\rho, \tfrac{\rho}{2}, x_1^{2m})] + \log \frac{1}{\delta}}{m}.
\end{align*}
\end{corollary}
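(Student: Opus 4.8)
The plan is to derive this directly from Corollary~\ref{cor:1} by specializing to $\alpha = 2$ and then solving the resulting quadratic inequality in $\sqrt{R(h)}$. First I would substitute $\alpha = 2$ into Corollary~\ref{cor:1}: there $2^{\frac{\alpha + 2}{2\alpha}} = 2$ and $m^{\frac{2(\alpha - 1)}{\alpha}} = m$, so with probability at least $1 - \delta$, simultaneously for all $h \in \sH$,
\[
R(h) \leq \h R^\rho_S(h) + 2 \sqrt{R(h)}\, \sqrt{u}, \qquad u := \frac{\log \E[\cN_\infty(\sH_\rho, \tfrac{\rho}{2}, x_1^{2m})] + \log \tfrac{1}{\delta}}{m}.
\]
Since this already holds uniformly over $\sH$ on a single event of probability at least $1 - \delta$, it suffices to manipulate the displayed inequality pointwise, for a fixed $h$, on that event.

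Next I would view the inequality as a quadratic inequality in the nonnegative quantity $t = \sqrt{R(h)}$, namely $t^2 - 2\sqrt{u}\, t - \h R^\rho_S(h) \leq 0$. Solving for $t$ yields $t \leq \sqrt{u} + \sqrt{u + \h R^\rho_S(h)}$, and therefore
\[
R(h) = t^2 \leq \bigl(\sqrt{u} + \sqrt{u + \h R^\rho_S(h)}\bigr)^2 = \h R^\rho_S(h) + 2u + 2\sqrt{u\,(u + \h R^\rho_S(h))}.
\]

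Finally, to put the bound in the stated form I would apply subadditivity of the square root, $\sqrt{u(u + \h R^\rho_S(h))} = \sqrt{u^2 + u\, \h R^\rho_S(h)} \leq u + \sqrt{u\, \h R^\rho_S(h)}$, which gives $R(h) \leq \h R^\rho_S(h) + 4u + 2\sqrt{u\, \h R^\rho_S(h)}$; substituting back the definition of $u$ is exactly the claimed inequality. There is no genuine obstacle here beyond keeping track of the constants: the only point worth noting is that the subadditivity step is precisely what trades the sharper term $2\sqrt{u(u + \h R^\rho_S(h))} + 2u$ for the cleaner term $2\sqrt{u\, \h R^\rho_S(h)} + 4u$, so the constant $4$ in the last summand is an artifact of that simplification rather than something intrinsic to the argument.
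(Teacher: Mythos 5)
Your proposal is correct and follows essentially the same route as the paper: specialize Corollary~\ref{cor:1} to $\alpha = 2$, solve the resulting quadratic inequality in $\sqrt{R(h)}$ (the paper writes this as $(\sqrt{a}-\sqrt{c})^2 \leq b + c$), and then apply subadditivity of the square root to obtain the constants $2$ and $4$. The constant bookkeeping and the uniformity-over-$\sH$ remark are both handled as in the paper.
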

\begin{proof}
  Let $a = R(h)$, $b = \h R^\rho_{S}(h)$, and
  $c = \frac{\log \E[\cN_\infty(\sH_\rho, \tfrac{\rho}{2}, x_1^{2m})),
    \frac{\rho}{2})] + \log \frac{1}{\delta}}{m}$. Then, for
  $\alpha = 2$, the inequality of Corollary~\ref{cor:1} can be
  rewritten as
\[
a \leq b + 2 \sqrt{ca}.
\]
This implies that $(\sqrt{a} - \sqrt{c})^2 \leq b + c$ and hence
$\sqrt{a} \leq \sqrt{b + c} + \sqrt{c}$. Therefore,
$a \leq b + 2c + 2\sqrt{(b+c)c} \leq b + 4c +
2\sqrt{cb}$. Substituting the values of $a, b,$ and $c$ yields the
bound.
\end{proof}
The guarantee just presented provides a tighter margin-based learning bound than standard margin bounds since the dominating term admits the empirical margin loss as a factor. Standard margin bounds are subject to a trade-off: a large value of $\rho$ reduces the complexity term while leading to a larger empirical margin loss term. Here, the presence of the empirical loss factor favors this
trade-off by allowing a smaller choice of $\rho$.
The bound is distribution-dependent 
since it is expressed in terms of the expected covering number 
and it holds for an arbitrary hypothesis set $\sH$.

The learning bounds just presented hold for a fixed value of $\rho$. They can be extended to hold uniformly for all values of $\rho \in [0,1]$, at the price of an additional $\log \log$-term. We illustrate that extension for Corollary~\ref{cor:1}.
\begin{corollary}
\label{cor:4} 
Fix  $1 < \alpha \leq 2$. Then, for any hypothesis
set $\sH$ of functions mapping from $\sX$ to $\Rset$ and any $\rho \in (0,r]$, with
probability $\geq 1 - \delta$, the following
inequality holds for all $h \in \sH$:
\begin{equation*}
R(h) 
\leq  \h R^\rho_{S}(h) + 2^{\frac{\alpha + 2}{2 \alpha}} \sqrt[\alpha]{R(h)}
  \sqrt{\frac{\log \E[\cN_\infty(\sH_\rho, \tfrac{\rho}{4}, x_1^{2m})] + \log \frac{1}{\delta} + \log \log_2 \frac{2r}{\rho}}{m^{\frac{2 (\alpha - 1)}{\alpha}}}}.
\end{equation*}
\end{corollary}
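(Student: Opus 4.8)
The plan is to discretize the margin parameter over a geometric grid, invoke Corollary~\ref{cor:1} at each grid point with a separately allocated confidence level, take a union bound, and then transfer the resulting guarantee to an arbitrary $\rho \in (0,r]$ by routing $\rho$ to the closest grid point from below. Concretely, I would set $\rho_k = 2^{-k} r$ for $k = 0, 1, 2, \ldots$ and fix positive reals $(\delta_k)_{k \ge 0}$ with $\sum_{k \ge 0} \delta_k = \delta$, for instance $\delta_k = \delta/((k+1)(k+2))$. For each $k$, Corollary~\ref{cor:1} applied with margin $\rho_k$ and confidence $\delta_k$ shows that, outside an event $B_k$ of probability at most $\delta_k$, every $h \in \sH$ satisfies
\[
R(h) \le \h R^{\rho_k}_{S}(h) + 2^{\frac{\alpha+2}{2\alpha}} \sqrt[\alpha]{R(h)} \sqrt{\frac{\log \E[\cN_\infty(\sH_{\rho_k}, \tfrac{\rho_k}{2}, x_1^{2m})] + \log \tfrac{1}{\delta_k}}{m^{\frac{2(\alpha-1)}{\alpha}}}}.
\]
By the union bound $\Pr\bigl[\bigcup_{k} B_k\bigr] \le \sum_k \delta_k = \delta$, so with probability at least $1-\delta$ all of these inequalities hold simultaneously for every $k$.

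Next, for a fixed $\rho \in (0,r]$ I would let $k = \lceil \log_2(r/\rho)\rceil$, the unique index with $\rho_k \le \rho < 2\rho_k$ (in particular $k \le \log_2(2r/\rho)$), and replace each $\rho_k$-dependent quantity in the inequality above by its $\rho$-counterpart. First, since $\rho_k \le \rho$ we have $1_{t<\rho_k}\le 1_{t<\rho}$ pointwise, hence $\h R^{\rho_k}_{S}(h) \le \h R^{\rho}_{S}(h)$. Second, I would establish the covering-number comparison $\cN_\infty(\sH_{\rho_k}, \tfrac{\rho_k}{2}, x_1^{2m}) \le \cN_\infty(\sH_\rho, \tfrac{\rho}{4}, x_1^{2m})$: because $\rho_k \le \rho$, the truncation maps obey the composition identity $\beta_{\rho_k} = \beta_{\rho_k}\circ\beta_\rho$, and $\beta_{\rho_k}$ is $1$-Lipschitz, so composing every element of a minimal $\tfrac{\rho}{4}$-cover of $\sH_\rho$ over the sample with $\beta_{\rho_k}$ yields a $\tfrac{\rho}{4}$-cover of $\sH_{\rho_k}$ of no larger cardinality; since $\tfrac{\rho}{4} < \tfrac{\rho_k}{2}$, this is in particular a $\tfrac{\rho_k}{2}$-cover, and taking expectations over $x_1^{2m}\sim\sD^{2m}$ preserves the bound. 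Third, $\log\tfrac{1}{\delta_k} = \log\tfrac{1}{\delta} + \log((k+1)(k+2))$, which with $k \le \log_2(2r/\rho)$ is of the form $\log\tfrac{1}{\delta} + O\bigl(\log\log_2\tfrac{2r}{\rho}\bigr)$, matching the stated $\log\log_2\tfrac{2r}{\rho}$ term up to the routine handling of absolute constants (any summable $(\delta_k)$ with $\log(1/\delta_k) = \log(1/\delta)+O(\log k)$ works). Substituting these three bounds into the grid-point inequality for this $k$ gives the claimed bound for $\rho$, and since the choice of $k$ was a deterministic function of $\rho$ the bound holds for all $\rho$ on the same $1-\delta$ event.

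I expect the main obstacle to be the covering-number comparison across truncation levels, and in particular justifying why one must cover $\sH_\rho$ at the finer scale $\tfrac{\rho}{4}$ rather than $\tfrac{\rho}{2}$: the factor-$2$ slack between $\rho$ and the grid value $\rho_k$ forces a factor-$2$ contraction of the usable covering radius, and the identity $\beta_{\rho_k} = \beta_{\rho_k}\circ\beta_\rho$ has to be checked case-by-case on the regions $|u|\le \rho_k$, $\rho_k < |u| \le \rho$, and $|u| > \rho$. The remaining ingredients — the monotonicity of the empirical margin loss in $\rho$ and the choice of a summable confidence allocation producing only a $\log\log$ overhead — are standard bookkeeping.
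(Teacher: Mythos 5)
Your proof is correct and follows essentially the same route as the paper's: a geometric grid $\rho_k = r/2^k$, a union bound with a summable confidence allocation, monotonicity of the empirical margin loss in $\rho$, the bound $k \leq \log_2\frac{2r}{\rho}$, and a covering-number comparison at scale $\frac{\rho}{4}$ using $\rho_k \geq \rho/2$. The only notable difference is that you also justify passing from $\sH_{\rho_k}$ to $\sH_\rho$ via the identity $\beta_{\rho_k} = \beta_{\rho_k}\circ\beta_\rho$ and $1$-Lipschitzness of $\beta_{\rho_k}$ (a point the paper glosses over by writing $\sH_\rho$ throughout), and the acknowledged constant-factor slack in your $\log\log_2\frac{2r}{\rho}$ term is no worse than what the paper's own proof incurs with $2\log k$.
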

\begin{proof}
For $k \geq 1$, let $\rho_k = r / 2^{k}$ and $\delta_k = \delta / k^2$. For all such $\rho_k$, by Corollary~\ref{cor:1} and the union bound,
\begin{equation*}
R(h) 
\leq  \h R^{\rho_k}_{S}(h) + 2^{\frac{\alpha + 2}{2 \alpha}} \sqrt[\alpha]{R(h)}
  \sqrt{\frac{\log \E[\cN_\infty(\sH_\rho, \tfrac{\rho_k}{2}, x_1^{2m})] + \log \frac{1}{\delta} + 2 \log k}{m^{\frac{2 (\alpha - 1)}{\alpha}}}}.
\end{equation*}
By the union bound, the error probability is most $\sum_k \delta_k = \delta \sum_k (1/k^2) \leq \delta$. For any $\rho \in (0,r]$, there exists a $k$ such that $\rho \in (\rho_k, \rho_{k-1}]$. For this $k$, $\rho \leq \rho_{k-1}  = r/2^{k-1}$. Hence, $k \leq \log_2 (2r/\rho)$. By the definition of margin, for all $h \in \sH$, $\h R^{\rho_k}_{S}(h) \leq \h R^{\rho}_{S}(h)$. Furthermore, as $\rho_k = \rho_{k-1}/2 \geq \rho / 2$, 
$\cN_\infty(\sH_\rho, \frac{\rho_k}{2}, x_1^{2m}) \leq \cN_\infty(\sH_\rho, \frac{\rho}{4}, x_1^{2m})$. Hence, 
for all $\rho \in (0,r]$, 
\begin{equation*}
R(h) 
\leq  \h R^\rho_{S}(h) + 2^{\frac{\alpha + 2}{2 \alpha}} \sqrt[\alpha]{R(h)}
  \sqrt{\frac{\log \E[\cN_\infty(\sH_\rho, \tfrac{\rho}{4}, x_1^{2m})] + \log \frac{1}{\delta} + \log \log_2 \frac{2r}{\rho}}{m^{\frac{2 (\alpha - 1)}{\alpha}}}}.
\end{equation*}
\end{proof}

Our previous bounds can be expressed in terms of the fat-shattering dimension, as illustrated below. Recall that, given $\gamma > 0$, a set of 
points $\sU = \set{u_1, \ldots, u_m}$ is said to be \emph{$\gamma$-shattered} by a family of real-valued functions 
$\sH$ if there exist real numbers $(r_1, \ldots, r_m)$ (witnesses) such that for all binary vectors
$(b_1, \ldots, b_m) \in \set{0, 1}^m$, 
there exists $h \in \sH$ such that:
\[
h(x)  
\begin{cases}
\geq r_i + \gamma & \text{if } b_i = 1;\\
\leq r_i - \gamma & \text{otherwise}.
\end{cases}
\]
The \emph{fat-shattering dimension $\fat_\gamma(\sH)$} of the family
$\sH$ is the cardinality of the largest set $\gamma$-shattered set 
by $\sH$ \citep{AnthonyBartlett99}.
\begin{corollary}
\label{cor:3} 
Fix $\rho \geq 0$. Then, for any hypothesis set $\sH$ of functions
mapping from $\sX$ to $\Rset$ with $d = \fat_{\frac{\rho}{16}}(\sH)$,
with probability at least $1 - \delta$, the following holds for all
$h \in \sH$:
\begin{align*}
R(h) \leq \h R^\rho_{S}(h) + 2\sqrt{\h R^\rho_{S}(h) \frac{1 +  d \log_2(2c^2m)
                                \log_2\frac{2cem}{d} + \log
                                \frac{1}{\delta}}{m}}  
+ \frac{ 1 + d \log_2(2c^2m) \log_2\frac{2cem}{d} + \log \frac{1}{\delta}}{m},
\end{align*}
where $c = 17$.
\end{corollary}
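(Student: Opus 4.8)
The plan is to deduce this statement from Corollary~\ref{cor:2} (the $\alpha=2$ covering-number bound) by replacing the expected empirical $\ell_\infty$-covering number of $\sH_\rho$ at scale $\rho/2$ with an explicit upper bound in terms of the fat-shattering dimension of $\sH$. In other words, the only new quantity to control is $\log\E_{x_1^{2m}}[\cN_\infty(\sH_\rho,\tfrac{\rho}{2},x_1^{2m})]$: once it is shown to be at most $1 + d\log_2(2c^2m)\log_2\tfrac{2cem}{d}$ with $d=\fat_{\rho/16}(\sH)$ and $c=17$, substituting this into the bound of Corollary~\ref{cor:2} and performing routine simplification of the remaining numerical constants yields the stated inequality. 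Since the covering-number estimate below is uniform over $x_1^{2m}$, it automatically bounds the expectation appearing in Corollary~\ref{cor:2}, and $\log(\cdot)\le\log_2(\cdot)$ lets us pass from the base-$2$ logarithms here to the natural logarithm used there.

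To bound the covering number I would invoke the classical estimate of $\ell_\infty$-covering numbers in terms of the fat-shattering dimension (due to Alon--Ben-David--Cesa-Bianchi--Haussler, in the refined form used by \cite{Bartlett1998}; see also \cite{AnthonyBartlett99}): for any class $\sF$ of real-valued functions with range in $[-\rho,\rho]$ and any $\e\in(0,\rho)$ and any $x_1^n$, $\log_2\cN_\infty(\sF,\e,x_1^n)$ is at most $1$ plus a product of two logarithmic factors times $\fat_{c_0\e}(\sF)$ for an absolute constant $c_0$, the whole expression being homogeneous (invariant under simultaneously scaling $\sF$ and $\e$ by $\rho$). Applying this with $\sF=\sH_\rho$, $\e=\rho/2$ and $n=2m$ makes the $\rho$-dependence cancel and reduces the relevant fat-shattering scale to a fixed multiple of $\rho$; I would then track the constants so that this multiple is exactly $\rho/16$ and the two logarithmic arguments are dominated by $2c^2m$ and $2cem/d$ with $c=17$. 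Two elementary facts complete the reduction. First, $\rho$-truncation does not increase the fat-shattering dimension, i.e. $\fat_\gamma(\sH_\rho)\le\fat_\gamma(\sH)$ for all $\gamma>0$: since $\beta_\rho$ is a nondecreasing contraction and the witnesses of a $\gamma$-shattering by $\sH_\rho$ may be taken in $[-\rho,\rho]$, the same point set is already $\gamma$-shattered by $\sH$. Second, the right-hand side of the covering estimate is nondecreasing in the fat-shattering dimension, so $\fat_{\rho/16}(\sH_\rho)\le d$ allows replacing it by $d$, giving $\log_2\cN_\infty(\sH_\rho,\tfrac{\rho}{2},x_1^{2m})\le 1 + d\log_2(2c^2m)\log_2\tfrac{2cem}{d}$ uniformly in $x_1^{2m}$.

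The main obstacle I anticipate is essentially bookkeeping: pinning down the precise form and absolute constant of the covering-number bound so that the single value $c=17$ and the single fat-shattering scale $\rho/16$ suffice simultaneously in both logarithmic factors, and handling the degenerate small-sample regime (where $2m<d$, or the logarithmic arguments fall below $1$) either via an implicit assumption that $m$ is large enough or via the trivial bound $R(h)\le 1$. No idea beyond Corollary~\ref{cor:2} and the standard fat-shattering/covering-number relationship is required.
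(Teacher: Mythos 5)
Your proposal matches the paper's own proof: the paper likewise plugs the fat-shattering covering-number estimate from \cite[Proof of Theorem 2]{Bartlett1998} (with $c=17$ and scale $\rho/16$) into Corollary~\ref{cor:2}, uses $\fat_{\rho/16}(\sH_\rho)\le\fat_{\rho/16}(\sH)=d$, and bounds the expectation by the maximum over $x_1^{2m}$, exactly as you outline. The only difference is that the paper simply cites Bartlett's bound rather than re-deriving its constants, so your anticipated ``bookkeeping'' is absorbed into that citation.
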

\begin{proof}
By \cite[Proof of theorem 2]{Bartlett1998}, we have
\[
\log \max_{x_1^{2m}} [\cN_\infty(\sH_\rho, \tfrac{\rho}{2}, x_1^{2m})
\leq  1 + d' \log_2(2c^2m) \log_2\frac{2cem}{d'},
\]
where $d' =  \fat_{\frac{\rho}{16}}(\sH_\rho) \leq
\fat_{\frac{\rho}{16}}(\sH) = d$. Upper bounding the expectation by
the maximum completes the proof.
\end{proof}
We will use this bound in Section~\ref{sec:H-applications} to derive explicit guarantees for several standard hypothesis sets.
\section{Relative deviation margin bounds -- Rademacher complexity}
\label{sec:rademacher}

In this section, we present relative deviation margin bounds expressed in terms of the Rademacher complexity of the hypothesis sets. As with the previous section, these bounds are general: they hold for any $1 < \alpha \leq 2$ and arbitrary hypothesis sets.

As in the previous section, we will define the family $\sG$ by
$\sG = \set{\phi(yh(x)) \colon h \in \sH }$, where
$\phi$ is a function such that 
\[
1_{x <0} \leq \phi(x) \leq 1_{x < \rho}.
\]

\subsection{Rademacher complexity-based margin bounds}

We first relate the symmetric relative deviation bound to a quantity similar to the Rademacher average, modulo a rescaling.

\begin{lemma}
\label{lem:to_rad}
Fix $1 < \alpha \leq 2$. Then, the
following inequality holds:
\begin{equation*}
\Pr_{S, S^{\prime} \sim \sD^{m}} \left[\sup_{g \in \sG} \frac{ \h
R_{S^{\prime}}(g) - \h R_{S}(g) }{ \sqrt[\alpha]{\frac{1}{2} [ \h R_{S^{\prime}}(g) + \h R_{S}(g) +\frac{1}{m}] } } > \e \right]
\leq 
2 \Pr_{z_{1}^{m} \sim \sD^{m}, \bsigma} \left[\sup_{g \in \sG} \frac{ \frac{1}{m} \sum_{i = 1}^{m} \sigma_{i} g(z_{i}) }{ \sqrt[\alpha]{\frac{1}{m} [\sum_{i = 1}^{m}
(g(z_{i})) + 1]} } > \frac{\e}{2\sqrt{2}} \right].
\end{equation*}
\end{lemma}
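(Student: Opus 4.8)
The plan is to prove Lemma~\ref{lem:to_rad} by a ghost-sample (``swapping'') symmetrization followed by a Rademacher symmetrization; the only non-routine feature is that the relative-deviation denominator must be carried through both steps. Throughout, write $S=(z_1,\dots,z_m)$, $S'=(z_1',\dots,z_m')$, draw $\bsigma$ uniform on $\{-1,+1\}^m$ independent of $(S,S')$, and abbreviate $D(g)=\sqrt[\alpha]{\tfrac12[\h R_{S'}(g)+\h R_S(g)+\tfrac1m]}$, so the left-hand side is $\Pr_{S,S'}[\sup_{g\in\sG}\frac{\h R_{S'}(g)-\h R_S(g)}{D(g)}>\e]$.

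First I would do the swapping step. For each fixed $\bsigma$, let $\wt S,\wt{S'}$ be obtained from $S,S'$ by swapping $z_i\leftrightarrow z_i'$ whenever $\sigma_i=-1$. Since the pairs $(z_i,z_i')$ are i.i.d., hence exchangeable, $(\wt S,\wt{S'})$ has the same law as $(S,S')$ for every fixed $\bsigma$, so the left-hand probability equals its average over $\bsigma$ with $(S,S')$ replaced by $(\wt S,\wt{S'})$. Under the swap, the numerator $\h R_{S'}(g)-\h R_S(g)=\frac1m\sum_i(g(z_i')-g(z_i))$ becomes $\frac1m\sum_i\sigma_i(g(z_i')-g(z_i))$, whereas the denominator depends on the two samples only through $\h R_{S'}(g)+\h R_S(g)=\frac1m\sum_i(g(z_i')+g(z_i))$, which is \emph{invariant} under the swap. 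Hence the left-hand side equals $\Pr_{S,S',\bsigma}[\sup_{g\in\sG}\frac{\frac1m\sum_i\sigma_i(g(z_i')-g(z_i))}{D(g)}>\e]$.

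Next I would split the symmetrized numerator as $\frac1m\sum_i\sigma_i g(z_i')+(-\frac1m\sum_i\sigma_i g(z_i))$. Since $D(g)>0$, if the supremum of the ratio exceeds $\e$ then for the witnessing $g$ at least one summand exceeds $\frac\e2 D(g)$; a union bound bounds the probability by the sum of the two corresponding probabilities, and using $\bsigma\overset{d}{=}-\bsigma$, $S\overset{d}{=}S'$, and the invariance of $D(g)$ under exchanging the samples, both equal $\Pr_{S,S',\bsigma}[\sup_g\frac{\frac1m\sum_i\sigma_i g(z_i')}{D(g)}>\frac\e2]$ -- this is where the factor $2$ in the statement comes from. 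Finally, $g\ge 0$ gives $\h R_S(g)\ge0$, so $D(g)^\alpha\ge\frac12[\h R_{S'}(g)+\frac1m]=\frac1{2m}[\sum_i g(z_i')+1]$, i.e.\ $D(g)\ge 2^{-1/\alpha}\sqrt[\alpha]{\frac1m[\sum_i g(z_i')+1]}$. Substituting this lower bound (the numerator being positive on the relevant event) shows the event is contained in $\{\sup_g\frac{\frac1m\sum_i\sigma_i g(z_i')}{\sqrt[\alpha]{\frac1m[\sum_i g(z_i')+1]}}>\frac{\e}{2\cdot 2^{1/\alpha}}\}$, whose threshold is $\frac{\e}{2\sqrt2}$ when $\alpha=2$, matching the displayed constant. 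This event involves only $z_1',\dots,z_m'$ and $\bsigma$, so its probability over $(S,S',\bsigma)$ equals its probability over a single i.i.d.\ sample $z_1^m\sim\sD^m$ together with $\bsigma$; relabeling $z_i'$ as $z_i$ yields the right-hand side of the lemma, with the outer factor $2$ from the split.

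The one place this departs from the textbook Rademacher symmetrization -- and the step to get right -- is the relative-deviation denominator: one must verify it is genuinely fixed by the coordinate swaps (so that $\bsigma$ appears only in the numerator after the ghost-sample step) and then, after the union-bound split, trade it for a single-sample quantity at the cost of only the constant $2^{1/\alpha}$, which is exactly what the elementary inequality $\h R_S(g)\ge0$ together with monotonicity of $t\mapsto t^{1/\alpha}$ provides. Ensuring all of this remains valid uniformly for $\alpha\in(1,2]$, not just for $\alpha=2$, is the remaining (minor) care required.
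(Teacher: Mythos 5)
Your proposal follows essentially the same route as the paper's own proof: a swapping/ghost-sample symmetrization that exploits the fact that the denominator depends on $(S,S')$ only through $\h R_{S}(g)+\h R_{S'}(g)$ and is therefore invariant under coordinate swaps, then a union-bound split of the symmetrized numerator producing the factor $2$ (with the two halves identified via $\bsigma \mapsto -\bsigma$ and exchanging the roles of $S$ and $S'$), and finally discarding the other sample's contribution to the denominator to reduce to a single-sample quantity. Your explicit exchangeability argument is exactly what the paper's one-line insertion of the $\sigma_i$'s tacitly uses, so structurally the two proofs coincide.

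The only discrepancy is the final constant, which you flag but do not resolve: your argument (correctly) yields the threshold $\frac{\e}{2\cdot 2^{1/\alpha}}$, which equals the displayed $\frac{\e}{2\sqrt{2}}$ only at $\alpha=2$; for $\alpha\in(1,2)$ one has $\frac{\e}{2\cdot 2^{1/\alpha}}<\frac{\e}{2\sqrt{2}}$, so the event you end up with is larger and what you have proved is slightly weaker than the inequality as stated. However, this is not a defect of your argument relative to the paper: the paper's last step performs the same rescaling of the denominator from $\frac{1}{2m}[\cdot]$ to $\frac{1}{m}[\cdot]$ and asserts the threshold $\frac{\e}{2\sqrt{2}}$ ``by observing $\alpha \geq 1$,'' but that observation only supports the threshold $\frac{\e}{2\cdot 2^{1/\alpha}} \geq \frac{\e}{4}$; the constant $\frac{\e}{2\sqrt{2}}$ is justified by this argument only when $\alpha = 2$, and for general $\alpha\in(1,2]$ the lemma should be read with $\frac{\e}{2\cdot 2^{1/\alpha}}$ (or uniformly $\frac{\e}{4}$), which merely changes absolute constants downstream. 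In that sense your write-up is, if anything, more careful than the paper at the one step where its constant claim is loose.
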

The proof is given in Appendix~\ref{app:rademacher}. 
It consists of introducing Rademacher variables
and deriving an upper bound in terms of the first $m$
points only.

Now, to bound the right-hand side of the Lemma~\ref{lem:to_rad}, we use a
peeling argument, that is we partition $\sG$ into subsets $\sG_k$, give a learning bound for each $\sG_k$, and then take a weighted union bound.  For any non-negative integer $k$ with
$0 \leq k \leq \log_2 m$, let $\sG_k(z_1^m)$ denote the family of
hypotheses defined by
\[
\sG_k(z_1^m) = \set[\bigg]{g \in \sG\colon 2^{k} \leq \Big( \sum_{i =
      1}^{m} g(z_{i}) \Big) + 1 < 2^{k+1}}.
\]
Using the above inequality and a peeling argument, we show the following upper bound expressed in terms of Rademacher complexities.
\begin{lemma}
\label{lem:indic}
Fix $1 < \alpha \leq 2 $ and $z_1^m \in \sZ^m$. Then, the following
inequality holds:
\[
\Pr_{\bsigma} \left[\sup_{g \in \sG} \frac{ \frac{1}{m} \sum_{i = 1}^{m} \sigma_{i} g(z_{i}) }{ \sqrt[\alpha]{\frac{1}{m} [\sum_{i = 1}^{m}
(g(z_{i})) + 1]} } > {\e} \, \Bigg\mid \, z^{m}   \right] \leq 2 \sum_{k =
0}^{\lfloor \log_2 m \rfloor}
\exp \left( \frac{m^2\h \R_m^2(\sG_k(z_1^m))}{2^{k+5}}  - \frac{\e^2}{64\frac{2^{k(1-2/\alpha)}}{m^{2-2/\alpha}}} \right)  1_{\e \leq 2
\left(\frac{2^{k}}{m} \right)^{1-1/\alpha}}.
\]
\end{lemma}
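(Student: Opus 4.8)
The plan is to combine a peeling decomposition of $\sG$ with a variance-sensitive concentration inequality for the supremum of the Rademacher process over each piece, in the spirit of local Rademacher complexity analyses. First, since $0 \le \phi \le 1$, for every $g \in \sG$ and the fixed sample $z_1^m$ the quantity $\sum_{i=1}^m g(z_i) + 1$ lies in $[1, m+1]$, so the sets $\sG_0(z_1^m), \ldots, \sG_{\lfloor \log_2 m\rfloor}(z_1^m)$ cover $\sG$, and membership in $\sG_k(z_1^m)$ forces $\frac1m\big[\sum_{i=1}^m g(z_i) + 1\big] \in \big[\tfrac{2^k}{m}, \tfrac{2^{k+1}}{m}\big)$. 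A union bound over $k$ then reduces the claim to bounding, for each fixed $k$, the conditional probability that $\sup_{g\in\sG_k(z_1^m)}\frac{\frac1m\sum_i\sigma_ig(z_i)}{\sqrt[\alpha]{\frac1m[\sum_i g(z_i)+1]}}$ exceeds $\e$.

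Next I would introduce $\Phi_k(\bsigma) = \sup_{g\in\sG_k(z_1^m)}\frac1m\sum_{i=1}^m\sigma_i g(z_i)$, so that $\E_{\bsigma}[\Phi_k(\bsigma)\mid z_1^m] = \h\R_m(\sG_k(z_1^m))$. Since the denominator is at least $(2^k/m)^{1/\alpha}$ on $\sG_k(z_1^m)$ and $\e>0$, the event above forces some $g \in \sG_k$ with positive numerator exceeding $a_k := \e(2^k/m)^{1/\alpha}$, hence $\Phi_k(\bsigma) > a_k$; so it suffices to bound $\Pr_{\bsigma}[\Phi_k > a_k \mid z_1^m]$. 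Moreover, because $g \ge 0$ and $\sum_i g(z_i) < 2^{k+1}$ on $\sG_k$, the numerator never exceeds $2^{k+1}/m$, so the ratio is always strictly below $2(2^k/m)^{1-1/\alpha}$; the $k$-th probability therefore vanishes unless $\e \le 2(2^k/m)^{1-1/\alpha}$, which accounts for the indicator in the statement.

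The crux, and the step I expect to be the main obstacle, is concentrating $\Phi_k$ around its mean with an exponent scaling like $m^2/2^k$ rather than like $m$. Plain McDiarmid is too weak here: flipping a single $\sigma_i$ changes $\Phi_k$ by at most $2/m$, giving only a sub-Gaussian parameter of order $1/m$, independent of $k$. Instead I would invoke a bounded-differences inequality of Bernstein/Talagrand type (finer than McDiarmid's), whose exponent is driven by a variance proxy, and exploit the self-bounding estimate $\sum_{i=1}^m g(z_i)^2 \le \sum_{i=1}^m g(z_i) < 2^{k+1}$, valid for all $g \in \sG_k(z_1^m)$ because $g(z_i)\in[0,1]$; this makes the variance proxy of $\Phi_k$ of order $2^k/m^2$. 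One delicate point is that, whenever the indicator above is active, the deviation $t = a_k - \h\R_m(\sG_k(z_1^m))$ is itself only of order $2^k/m$ (since $a_k \le 2\cdot2^k/m$), so the linear Bernstein correction term is comparable to — not dominated by — the variance proxy, and the absolute constants have to be tracked carefully to obtain, for $a_k \ge \h\R_m(\sG_k(z_1^m))$,
\[
\Pr_{\bsigma}\big[\Phi_k(\bsigma) > a_k \,\big|\, z_1^m\big] \le 2\exp\!\left(-\frac{\big(a_k - \h\R_m(\sG_k(z_1^m))\big)^2\, m^2}{2^{k+5}}\right).
\]

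Finally, substituting $a_k = \e(2^k/m)^{1/\alpha}$ makes $\frac{a_k^2 m^2}{2^{k+6}}$ equal to the negative term $\frac{\e^2}{64}\cdot\frac{m^{2-2/\alpha}}{2^{k(1-2/\alpha)}}$ in the statement, and the elementary identity $\big(a_k - 2\h\R_m(\sG_k)\big)^2 \ge 0$ rearranges to $-\frac{(a_k-\h\R_m(\sG_k))^2}{2^{k+5}} \le \frac{\h\R_m(\sG_k)^2}{2^{k+5}} - \frac{a_k^2}{2^{k+6}}$, which converts the displayed bound into the claimed per-$k$ summand $2\exp\!\big(\frac{m^2\h\R_m^2(\sG_k(z_1^m))}{2^{k+5}} - \frac{\e^2}{64}\cdot\frac{m^{2-2/\alpha}}{2^{k(1-2/\alpha)}}\big)$; in the complementary regime $a_k < \h\R_m(\sG_k)$ the probability is trivially at most $1$, still dominated by the same summand since then $\frac{\h\R_m^2}{2^{k+5}} > \frac{a_k^2}{2^{k+6}}$. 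Summing over $0 \le k \le \lfloor\log_2 m\rfloor$ and reinstating the indicators gives the lemma.
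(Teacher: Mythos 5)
Your proposal is correct and follows essentially the same route as the paper's proof: peeling with a union bound over $k$, the trivial-regime indicator from the ratio bound $2(2^k/m)^{1-1/\alpha}$ on $\sG_k$, reduction to the deviation of $\Phi_k$ above $\e(2^k/m)^{1/\alpha}$, a variance-sensitive bounded-difference inequality with the self-bounding estimate $\sum_i g^2(z_i)\leq \sum_i g(z_i) < 2^{k+1}$ giving the $2^k/m^2$ proxy, and the same quadratic inequality $-(a-r)^2 \leq r^2 - a^2/2$ to split off the Rademacher term. The only cosmetic difference is that the paper gets its factor $2$ from an absolute-value symmetrization and invokes the specific sub-Gaussian one-sided bounded-difference inequality (van Handel, Theorem 3.18) with $4\sum_i c_i^2 \leq 2^{k+5}/m^2$, which delivers your displayed concentration bound directly, so no Bernstein-type linear correction term actually needs to be tracked.
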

The proof is given in Appendix~\ref{app:rademacher}. Instead of 
applying Hoeffding's bound to each term of the left-hand side
for a fixed $g$ and then
using covering and the union bound to bound the supremum, 
here, we seek to bound the supremum over $\sG$ directly. 
To do so, we use
a bounded difference inequality that leads to a finer result
than McDiarmid's inequality.

Let $\A_m(\sG)$ be defined as the following \emph{peeling-based Rademacher} complexity of $\sG$:
\[
\A_m(\sG) = \sup_{0 \leq k \leq \log_2 (m)} \log \left[ \E_{z_1^m \sim
    \sD^m} \left[\exp \left( \frac{m^2\h
        \R_m^2(\sG_k(z_1^m))}{2^{k+5}} \right) \right] \, \right].
\]
Then, the following is a margin-based relative deviation bound expressed in terms of $\A_m(\sG)$, that is in terms of Rademacher complexities.

\begin{theorem}
\label{thm:rad}
Fix $1 < \alpha \leq 2$. 
Then, with probability at least $1 - \delta$, 
for all hypothesis $h \in \sH$, the following inequality holds:
\begin{align*}
R(h) - \h R^\rho_{S}(h) 
& \leq  16\sqrt{2}  \sqrt[\alpha]{R(h)} \left( \frac{\A_m(\sG) + \log \log m + \log \frac{16}{\delta}}{m}\right)^{1-1/\alpha} \\
& = 16\sqrt{2} \left( \frac{\A_m(\sG) + \log \log m + \log \frac{16}{\delta}}{m}\right)
\left( \frac{mR(h)}{\A_m(\sG) + \log \log m + \log \frac{16}{\delta}}\right)^{1/\alpha} .
\end{align*}
\end{theorem}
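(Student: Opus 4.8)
The plan is to chain together the three lemmas of this and the previous section and then convert the resulting exponential-tail bound into a high-probability statement by integrating over the threshold $\e$. First I would invoke Lemma~\ref{lem:four} to pass from the one-sided relative deviation $\frac{R(h) - \h R^\rho_S(h)}{\sqrt[\alpha]{R(h)+\tau}}$ to the symmetrized quantity involving two independent samples $S, S'$, at the cost of a factor $4$; then take $\tau \to 0$ as is done to derive Corollary~\ref{cor:1}. Next, apply the first inequality of Lemma~\ref{lem:cover} to replace $\h R_{S'}(h) - \h R^\rho_S(h)$ by the $\sG$-version $\h R_{S'}(g) - \h R_S(g)$ (this is exactly what the general-$\phi$ form of that lemma is for). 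Then Lemma~\ref{lem:to_rad} removes the second sample, introducing Rademacher variables and a further factor $2$, and rescales the threshold to $\e/(2\sqrt2)$. Finally Lemma~\ref{lem:indic} bounds the conditional-on-$z_1^m$ Rademacher tail by the peeling sum $2\sum_{k=0}^{\lfloor \log_2 m\rfloor} \exp\!\big(\tfrac{m^2 \h\R_m^2(\sG_k)}{2^{k+5}} - \tfrac{\e^2 m^{2-2/\alpha}}{64\cdot 2^{k(1-2/\alpha)}}\big) 1_{\e \le 2(2^k/m)^{1-1/\alpha}}$; taking expectation over $z_1^m$ and bounding each $\E\exp(\cdots)$ term by $\exp(\A_m(\sG))$ (using the definition of $\A_m(\sG)$ and pulling the deterministic exponent out) turns the bound into $2\sum_k \exp(\A_m(\sG) - \tfrac{\e^2 m^{2-2/\alpha}}{64\cdot 2^{k(1-2/\alpha)}})$ on the event the indicator is satisfied.

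The second half of the argument is the standard "$\delta$-inversion for each peeling level" bookkeeping. For a fixed $k$, setting the $k$-th exponent equal to $\log(\text{something}/\delta_k)$ and solving for $\e$ gives a bound of the form $\e_k \asymp \big(\tfrac{2^{k(1-2/\alpha)}}{m^{2-2/\alpha}}\big)^{1/2}\sqrt{\A_m(\sG) + \log\frac{1}{\delta_k}}$, i.e. $\e_k \asymp \big(\tfrac{2^k}{m}\big)^{1/2 - 1/\alpha}\sqrt{\tfrac{\A_m(\sG)+\log(1/\delta_k)}{m^{?}}}$ — I would be careful to track the exponents exactly here. The key observation is that on the event $\sG_k(z_1^m) \ni g$ we have $\sum_i g(z_i) + 1 < 2^{k+1}$, and since $m\h R_S(g) \le \sum_i g(z_i)$ (recall $g$ is an indicator-type function bounded by $1_{x<\rho}$, and $\h R_S^{\rho/2}$ or $\h R_S(g)$ counts these), one has roughly $2^k \lesssim m\,\widehat{R}(g) + 1 \lesssim m\,R(h) + (\text{deviation})$; on the complement of this event the tail is zero. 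Hence one can replace the $\big(\tfrac{2^k}{m}\big)^{?}$ factor by $\big(R(h)\big)^{?}$ up to constants and a lower-order term. Combining with $g(z) \le 1_{yh(x)<\rho}$ so that $\h R_S(g) \le \h R^\rho_S(h)$, and taking the union bound over the at most $\lfloor\log_2 m\rfloor + 1$ peeling levels with $\delta_k = \delta/(k+1)^2$ or a uniform $\delta/\log m$ split (hence the $\log\log m$ term), I would arrive at $R(h) - \h R^\rho_S(h) \le 16\sqrt2\,\sqrt[\alpha]{R(h)}\,\big(\tfrac{\A_m(\sG)+\log\log m + \log(16/\delta)}{m}\big)^{1-1/\alpha}$, which is the first displayed inequality; the second is a trivial algebraic rewriting ($\sqrt[\alpha]{R(h)} \cdot m^{-(1-1/\alpha)} = m^{-1}\cdot(mR(h))^{1/\alpha}$, pulling out one full power of $\tfrac{\A_m+\cdots}{m}$).

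The main obstacle I anticipate is not any single step but the careful tracking of constants and exponents through the composition, so that the factors $4$, $2$, $2\sqrt2$, $64$, $2^{k+5}$, and the various $\alpha$-dependent powers assemble into precisely $16\sqrt2$ and $(1-1/\alpha)$ with no slack. In particular, the step where one replaces $(2^k/m)^{1-1/\alpha}$ (coming from the indicator constraint $\e \le 2(2^k/m)^{1-1/\alpha}$ in Lemma~\ref{lem:indic}) by $(R(h))^{1/\alpha}$ times a complexity power requires relating $2^k$ to $m R(h)$ — this uses that on $\sG_k$ we have $2^k \le \sum_i g(z_i) + 1$ combined with a crude bound $\E[\sum_i g(z_i)] = m R(g) \le m R(h)$ (since $g \le 1_{yh(x)<\rho}$ would give $R(g)\le R^\rho(h)$, not $R(h)$ — so more care is needed, perhaps going back through $g \ge 1_{yh(x)<0}$ is the wrong direction, and one instead argues directly that the final bound's dependence is on $R(h)$ via the form of $\e_k$ after substituting the worst-case $k$). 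Resolving this cleanly — deciding whether the $R(h)$ in the statement comes out of the peeling geometry or is simply the largest relevant scale — is where I would spend the most effort; everything else is routine union-bound and exponential-inversion algebra.
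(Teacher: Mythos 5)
Your chain of lemmas is the same one the paper uses (Lemma~\ref{lem:four}, the first inequality of Lemma~\ref{lem:cover}, Lemma~\ref{lem:to_rad}, Lemma~\ref{lem:indic}, then bounding each $\E[\exp(\cdot)]$ by $e^{\A_m(\sG)}$ and paying $\log\log m$ for the $O(\log_2 m)$ peeling levels), so the first half of your plan is sound. The genuine gap is in the final assembly, exactly at the point you flag as unresolved. In the paper, the quantity carried through the whole argument is the \emph{normalized} deviation $\bigl(R(h)-\h R^\rho_S(h)\bigr)/\sqrt[\alpha]{R(h)+\tau}$; after inverting the tail bound one gets, with probability $1-\delta$,
\[
\sup_{h\in\sH}\frac{R(h)-\h R^\rho_S(h)}{\sqrt[\alpha]{R(h)+\tau}}
\;\leq\;
\sup_{k}\,\min\!\left(16\sqrt{2}\,\frac{2^{k(1/2-1/\alpha)}}{m^{1-1/\alpha}}\sqrt{B},\;
4\sqrt{2}\Bigl(\tfrac{2^{k}}{m}\Bigr)^{1-1/\alpha}\right),
\qquad B=\A_m(\sG)+\log\log m+\log\tfrac{16}{\delta},
\]
where the second entry of the min comes from the indicator constraint in Lemma~\ref{lem:indic}. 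For $1<\alpha\leq 2$ the first entry is nonincreasing in $k$ and the second nondecreasing, so the supremum of the min is controlled by balancing at the crossover level $2^{k_0}=16B$, which yields $16\sqrt{2}\,(B/m)^{1-1/\alpha}$. The factor $\sqrt[\alpha]{R(h)}$ in the theorem then appears by simply multiplying back the denominator $\sqrt[\alpha]{R(h)+\tau}$ and letting $\tau\to 0$ (which is why $\tau$ must be kept until the end, not dropped right after Lemma~\ref{lem:four} as you suggest). No relation between $2^k$ and $mR(h)$ is needed anywhere.

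By contrast, the mechanism you propose for producing the $R(h)$ factor --- arguing $2^k \lesssim m\,\h R_S(g)+1 \lesssim m\,R(h)+\text{deviation}$ on the peeling event and substituting $(2^k/m)$ by $R(h)$ --- does not work as stated: the event $g\in\sG_k(z_1^m)$ constrains only the \emph{empirical} sum $\sum_i g(z_i)$, so converting it into a statement about $R(h)$ would require a separate concentration argument (circular here), and, as you yourself noticed, the inequalities $1_{yh(x)<0}\leq g \leq 1_{yh(x)<\rho}$ give $R(g)\leq R^\rho(h)$, not $R(h)$, so the comparison points the wrong way. The resolution to your dilemma (``peeling geometry vs.\ largest relevant scale'') is that it is neither: the $R(h)$ dependence is inherited for free from the relative-deviation normalization of Lemma~\ref{lem:four}, and the peeling index is eliminated purely by the monotone min--max balancing at $k_0$. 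Without that balancing step and the final rearrangement, your argument does not reach the stated bound.
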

Combining the above lemma with Theorem~\ref{thm:rad} yields the following.

\begin{corollary}
\label{cor:rad}
Fix $1 < \alpha \leq 2$ and let $\sG$ be defined as above.  Then, with
probability at least $1 - \delta$, for all hypothesis $h \in \sH$,
\[
R(h) - \h
  R^\rho_{S}(h) \leq  32  \sqrt[\alpha]{  \h R^\rho_{S}(h)}  \left(
    \frac{\A_m(\sG) + \log \log m + \log
      \frac{16}{\delta}}{m}\right)^{1-\frac{1}{\alpha}} \mspace{-10mu}
+ \ 2 (32)^{\frac{\alpha}{\alpha-1}} 
  \left(  \frac{\A_m(\sG) + \log \log m + \log \frac{16}{\delta}}{m}\right).
\]
\end{corollary}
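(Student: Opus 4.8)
The plan is to obtain Corollary~\ref{cor:rad} from Theorem~\ref{thm:rad} by a purely deterministic inversion of the self-referential bound, exactly the manoeuvre used to pass from Corollary~\ref{cor:1} to Corollary~\ref{cor:2}, but now carried out for a general $\alpha$-moment. Fix $h \in \sH$ and abbreviate $a = R(h)$, $b = \h R^\rho_S(h)$ and $\Lambda = \frac{\A_m(\sG) + \log\log m + \log(16/\delta)}{m}$. By Theorem~\ref{thm:rad}, with probability at least $1-\delta$ the inequality $a \le b + 16\sqrt{2}\, a^{1/\alpha}\Lambda^{1-1/\alpha}$ holds simultaneously for all $h \in \sH$, so on that event it suffices to establish the elementary implication: if $a, b, K \ge 0$ and $1 < \alpha \le 2$, then $a \le b + K a^{1/\alpha}$ forces $a \le b + 2^{1/\alpha} K\, b^{1/\alpha} + (2K)^{\alpha/(\alpha-1)}$.

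To prove this implication I would split according to the size of $a$. If $a^{1-1/\alpha} < 2K$, then $a < (2K)^{\alpha/(\alpha-1)}$ and the conclusion holds a fortiori (with $b$ dropped). Otherwise $a^{1-1/\alpha} \ge 2K$, hence $a = a^{1-1/\alpha}\, a^{1/\alpha} \ge 2K a^{1/\alpha}$; feeding this lower bound on $a$ back into $a \le b + K a^{1/\alpha}$ yields $K a^{1/\alpha} \le b$, so $a \le b + K a^{1/\alpha} \le 2b$, and therefore $a^{1/\alpha} \le (2b)^{1/\alpha} = 2^{1/\alpha} b^{1/\alpha}$; substituting into $a - b \le K a^{1/\alpha}$ gives $a - b \le 2^{1/\alpha} K\, b^{1/\alpha}$. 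Taking the maximum over the two cases and bounding a maximum by a sum proves the elementary implication. The case-split threshold $2b$ is convenient but not canonical: tuning it (e.g. splitting instead at $a \le 2^{\alpha/2}b$) makes the leading constant come out as exactly $16\sqrt2\cdot 2^{1/2} = 32$, at the cost of a slightly different lower-order constant.

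It then remains to substitute $K = 16\sqrt2\,\Lambda^{1-1/\alpha}$. Since $(\Lambda^{1-1/\alpha})^{\alpha/(\alpha-1)} = \Lambda$, the first term becomes a numerical constant times $b^{1/\alpha}\Lambda^{1-1/\alpha}$ and the second a numerical constant times $\Lambda$; collecting the prefactors — using $1 < \alpha \le 2$ to control $2^{1/\alpha}$ and the $\alpha/(\alpha-1)$-powers, which at $\alpha = 2$ reduce to exactly $32$ and $2(32)^{\alpha/(\alpha-1)}$ — and recalling $b = \h R^\rho_S(h)$ together with the definition of $\Lambda$ produces the bound of the corollary. All the probabilistic content is already packaged into Theorem~\ref{thm:rad}, so the only real work, and the main (minor) obstacle, is the bookkeeping of the $\alpha$-dependent constants: verifying that the prefactors generated by the inversion are dominated by the clean values $32$ and $2(32)^{\alpha/(\alpha-1)}$ uniformly over $\alpha \in (1,2]$, which is where the precise choice of case-split threshold enters.
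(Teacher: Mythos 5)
Your proposal is correct and takes essentially the same route as the paper: the paper proves Corollary~\ref{cor:rad} by combining Theorem~\ref{thm:rad} with the elementary inversion Lemma~\ref{lem:ineq} (if $x - y\sqrt[\alpha]{x} \leq z$ then $x \leq z + 2y\sqrt[\alpha]{z} + (2y)^{\frac{\alpha}{\alpha-1}}$), which is exactly your case-split/bootstrap argument, in slightly looser form. The constant bookkeeping you flag is a looseness in the paper itself rather than in your argument (Theorem~\ref{thm:rad} is stated with $16\sqrt{2}$ but invoked elsewhere with $16$; with the constant $16$, Lemma~\ref{lem:ineq} immediately yields the stated $32$ and $2(32)^{\frac{\alpha}{\alpha-1}}$), so your derivation is as rigorous as the original.
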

The above result can be extended to hold for all $\alpha$ simultaneously. 

\begin{corollary}
\label{cor:all_alpha}
Let $\sG$ be defined as above. Then, with probability at least $1 - \delta$, for all
hypothesis $h \in \sH$ and $\alpha \in (1, 2]$,
\[
R(h) - \h
  R^\rho_{S}(h) \leq  32\sqrt{2}  \sqrt[\alpha]{R(h) }  \left(  \frac{\A_m(\sG) + \log \log m + \log \frac{16}{\delta}}{m}\right)^{1-1/\alpha}.
\]
\end{corollary}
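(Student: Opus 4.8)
The plan is to deduce the statement directly from Theorem~\ref{thm:rad} applied at the single value $\alpha = 2$, exploiting monotonicity of the right-hand side in the parameter $1/\alpha$. Write $c = \A_m(\sG) + \log\log m + \log\frac{16}{\delta}$. Invoking Theorem~\ref{thm:rad} with $\alpha = 2$ and confidence $\delta$, there is an event $\mathcal{E}$ of probability at least $1-\delta$ on which $R(h) - \h R^\rho_S(h) \le 16\sqrt 2\,\sqrt{R(h)}\,\sqrt{c/m}$ holds simultaneously for all $h \in \sH$. Everything after this is deterministic: I fix a sample in $\mathcal{E}$, a hypothesis $h \in \sH$, and a value $\alpha \in (1,2]$, set $\beta = 1/\alpha \in [\tfrac12, 1)$, and write $T(\beta) = R(h)^\beta (c/m)^{1-\beta}$, so that the target inequality reads $R(h) - \h R^\rho_S(h) \le 32\sqrt 2\, T(\beta)$.

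The core of the argument is a two-case comparison of $T(\beta)$ with $T(\tfrac12) = \sqrt{R(h)\,c/m}$. If $R(h) > c/m$, then $\ln T(\beta) = \beta \ln R(h) + (1-\beta)\ln(c/m)$ has nonnegative slope $\ln(R(h) m / c) \ge 0$ in $\beta$, so $T$ is nondecreasing on $[\tfrac12,1)$ and $T(\beta) \ge T(\tfrac12) = \sqrt{R(h)\,c/m}$; combined with the bound on $\mathcal{E}$ this yields $R(h) - \h R^\rho_S(h) \le 16\sqrt 2\, T(\beta)$. If instead $R(h) \le c/m$, then $(c/m)^{1-\beta} \ge R(h)^{1-\beta}$, hence $T(\beta) \ge R(h)^\beta R(h)^{1-\beta} = R(h) \ge R(h) - \h R^\rho_S(h)$, using $\h R^\rho_S(h) \ge 0$. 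In either case $R(h) - \h R^\rho_S(h) \le 16\sqrt 2\, T(\beta) \le 32\sqrt 2\, T(\beta)$, and since $h$ and $\alpha$ were arbitrary, the claimed bound holds on $\mathcal{E}$ for all $h \in \sH$ and all $\alpha \in (1,2]$ at once. Degenerate cases (e.g.\ $R(h) = 0$) fall under the second case and are immediate.

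I do not expect a genuine obstacle here: the only thing to notice is that, in the nontrivial regime $R(h) > c/m$, the quantity $R(h)^{1/\alpha}(c/m)^{1-1/\alpha}$ is minimized over $\alpha \in (1,2]$ at $\alpha = 2$, so the single $\alpha = 2$ instance of Theorem~\ref{thm:rad} already dominates the whole family, with no union bound over $\alpha$ required — which is precisely why $c$ is exactly the constant of Theorem~\ref{thm:rad}, with no additional logarithmic terms. A minor point worth recording is that this argument in fact delivers the sharper constant $16\sqrt 2$; we state $32\sqrt 2$ only to match the constant appearing in Corollary~\ref{cor:rad}.
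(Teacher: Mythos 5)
Your proof is correct, but it takes a genuinely different route from the paper. The paper proves Corollary~\ref{cor:all_alpha} by discretizing the range of $\alpha$ (a grid $\alpha_k = 1 + e^{-\epsilon k}$ with confidence levels $\delta_k = \delta/k^2$), applying Theorem~\ref{thm:rad} at each grid point, interpolating for intermediate $\alpha$, and absorbing the resulting $\log\frac{1}{\alpha-1}$ penalty at the price of a factor $2$ — which is exactly where the constant $32\sqrt{2}$ (rather than $16\sqrt{2}$) comes from. You instead exploit the fact that the complexity term $\A_m(\sG) + \log\log m + \log\frac{16}{\delta}$ does not depend on $\alpha$, so the single high-probability event from the $\alpha=2$ instance of Theorem~\ref{thm:rad} already dominates the entire family: when $R(h) > c/m$ the map $\alpha \mapsto R(h)^{1/\alpha}(c/m)^{1-1/\alpha}$ is minimized at $\alpha = 2$ (your monotonicity-in-$\beta=1/\alpha$ computation is right, and $c>0$ so the logarithms are well defined), while when $R(h) \le c/m$ the claimed bound is trivially true because its right-hand side is at least $R(h) \ge R(h) - \h R^\rho_S(h)$. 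Both case analyses are sound, no union bound over $\alpha$ is needed, and your argument indeed delivers the sharper constant $16\sqrt{2}$. What the paper's union-bound technique buys is robustness: it would still go through if the complexity or confidence terms varied with $\alpha$; what your argument buys is simplicity, a better constant, and the observation that, for bounds of this form with an $\alpha$-independent complexity term and a constant at least $1$, the uniform-in-$\alpha$ statement carries no information beyond its $\alpha=2$ instance.
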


\subsection{Upper bounds on peeling-based Rademacher complexity}

We now present several upper bounds on $\A_m(\sG)$. We provide proofs for all the results in Appendix~\ref{app:peel}. For any hypothesis
set $\sG$, we denote by $\Sm_{\sG}({x_{1}^{m}})$ the number of
distinct dichotomies generated by $\sG$ over that sample:
\begin{align*}
\Sm_{\sG}({z_{1}^{m}}) 
= \card \left(\set[\Big]{\big(g(z_{1}), \ldots, g(z_{m})\big) 
\colon g \in \sG } \right).
\end{align*}
We note that we do not make any assumptions over range of $\sG$.
\begin{lemma}
\label{lem:card}
If the range of $g$ is in $\{0,1\}$, then the following upper bounds hold on the peeling-based Rademacher complexity of $\sG$:
\[
\A_m(\sG) \leq \frac{1}{8} \log \E_{z_1^m} [\Sm_{\sG}({z_{1}^{m}})].
\]
\end{lemma}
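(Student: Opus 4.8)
The plan is to obtain, for every index $k$ with $0 \le k \le \log_2 m$ and every fixed $z_1^m$, a pointwise bound of the form $\frac{m^2\h\R_m^2(\sG_k(z_1^m))}{2^{k+5}} \le \frac18 \log \Sm_{\sG}(z_1^m)$, and then to pass from this pointwise estimate to the statement about $\A_m(\sG)$ by exponentiating, taking expectations, and invoking Jensen's inequality for the concave map $t \mapsto t^{1/8}$. Since $\A_m(\sG)$ is a supremum over $k$ of logarithms of expectations of $\exp(\frac{m^2\h\R_m^2(\sG_k(z_1^m))}{2^{k+5}})$, a uniform-in-$k$ pointwise bound immediately propagates through.

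The heart of the argument is the pointwise bound, which I would get from Massart's finite-class lemma. Fix $z_1^m$ and $k$. Because the range of each $g$ lies in $\{0,1\}$, for any $g \in \sG_k(z_1^m)$ the vector $v_g = (g(z_1),\dots,g(z_m))$ satisfies $\|v_g\|_2^2 = \sum_{i=1}^m g(z_i)^2 = \sum_{i=1}^m g(z_i) < 2^{k+1}$, directly from the defining inequality of $\sG_k(z_1^m)$. The set $\{v_g : g \in \sG_k(z_1^m)\}$ is finite with cardinality at most $\Sm_{\sG_k(z_1^m)}(z_1^m) \le \Sm_{\sG}(z_1^m)$. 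Applying Massart's lemma to this finite set of vectors of $\ell_2$-norm less than $\sqrt{2^{k+1}}$ gives
\[
\h\R_m(\sG_k(z_1^m)) = \E_{\bsigma}\Bigl[\sup_{g \in \sG_k(z_1^m)} \tfrac1m \sum_{i=1}^m \sigma_i g(z_i)\Bigr] \le \frac{\sqrt{2^{k+1}}\,\sqrt{2\log \Sm_{\sG}(z_1^m)}}{m},
\]
hence $m^2 \h\R_m^2(\sG_k(z_1^m)) \le 2^{k+2} \log \Sm_{\sG}(z_1^m)$, and dividing by $2^{k+5}$ yields the claimed pointwise bound $\frac{m^2\h\R_m^2(\sG_k(z_1^m))}{2^{k+5}} \le \frac18 \log \Sm_{\sG}(z_1^m)$.

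To finish, I would exponentiate to get $\exp\bigl(\frac{m^2\h\R_m^2(\sG_k(z_1^m))}{2^{k+5}}\bigr) \le \Sm_{\sG}(z_1^m)^{1/8}$, take $\E_{z_1^m}$ of both sides, and use Jensen's inequality to move the $1/8$-th power outside, obtaining $\E_{z_1^m}\bigl[\exp(\cdots)\bigr] \le \bigl(\E_{z_1^m}[\Sm_{\sG}(z_1^m)]\bigr)^{1/8}$; taking logarithms and then the supremum over $k$ gives $\A_m(\sG) \le \frac18 \log \E_{z_1^m}[\Sm_{\sG}(z_1^m)]$. I do not expect a genuine obstacle here; the only points requiring a little care are bookkeeping of the powers of two so that the $2^k$ factors cancel exactly (the norm bound $\|v_g\|_2^2 < 2^{k+1}$ combined with the $2\log$ in Massart's lemma produces $2^{k+2}$, which is precisely $2^{k+5}$ up to the factor $8$), and the degenerate cases where $\sG_k(z_1^m)$ is empty or a singleton, in which the Rademacher term equals $0$ and both sides of the pointwise inequality hold trivially (with the convention that the supremum over the empty family contributes $\exp(0)=1$).
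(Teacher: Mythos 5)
Your proposal is correct and follows essentially the same route as the paper: Massart's finite-class lemma applied to the peeled class $\sG_k(z_1^m)$, using the $\ell_2$-norm bound $\sum_i g(z_i)^2 \leq \sum_i g(z_i) < 2^{k+1}$ coming from the binary range and the peeling constraint, followed by Jensen's inequality to pass to the expected number of dichotomies; the powers of two cancel exactly as you computed. Your write-up is in fact a cleaner rendering of the paper's argument, since you make the pointwise-in-$z_1^m$ bound explicit before applying Jensen to the concave map $t \mapsto t^{1/8}$.
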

Combining the above result with Corollary~\ref{cor:rad}, improves the
relative deviation bounds of \cite[Corollary
2]{CortesGreenbergMohri2019} for  $\alpha < 2$. In particular, we improve the $\sqrt{\E_{z_1^m} [\Sm_{\sG}({z_{1}^{m}})]}$ term in their bounds to $\left(\E_{z_1^m} [\Sm_{\sG}({z_{1}^{m}})] \right)^{1-1/\alpha}$, which is an improvement for $\alpha < 2$. 

We next upper bound the peeling based Rademacher complexity in terms of the covering number. 
\begin{lemma}
\label{lem:covering}
For a set of hypotheses $\sG$,
\[
\A_m(\sG) \leq \sup_{0 \leq k \leq \log_2 (m)} \log \left[ \E_{z_1^m \sim
    \sD^m} \left[\exp \left\{ \frac{1}{16} \left(1 + \int^{1}_{\frac{1}{\sqrt{m}}}
\log N_2 \left(\sG_k(z_1^m),  \sqrt{\tfrac{2^k}{m}} \, \e, z_1^m \right) \, d \epsilon  \right)\right\} \right] \, \right].
\]
\end{lemma}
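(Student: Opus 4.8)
The plan is to reduce the claimed inequality to a deterministic statement that holds pointwise in the peeling index $k$ and the sample $z_1^m$, and then to prove that statement by Dudley's entropy integral followed by the Cauchy--Schwarz inequality (this last step is what turns the usual $\sqrt{\log N_2}$ integrand into the $\log N_2$ integrand appearing in the lemma). Since $u \mapsto e^u$ is nondecreasing and both expectation and supremum preserve inequalities, it suffices to show that for every integer $k$ with $0 \le k \le \log_2 m$ and every $z_1^m \in \sZ^m$, writing $\sF := \sG_k(z_1^m)$ and $\rho_k := \sqrt{2^k/m}$,
\[
\frac{m^2\, \h\R_m^2(\sF)}{2^{k+5}} \;\le\; \frac{1}{16}\left(1 + \int_{1/\sqrt m}^{1} \log N_2\big(\sF,\, \rho_k\,\e,\, z_1^m\big)\, d\e\right),
\]
where each $g\in\sF$ is identified with the vector $(g(z_1),\dots,g(z_m))$ and $N_2(\sF,\cdot,z_1^m)$ is the covering number for the empirical $\ell_2$ metric $d_2(g,g') = \big(\frac1m\sum_i(g(z_i)-g'(z_i))^2\big)^{1/2}$.

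First I would control the $\ell_2$-diameter of $\sF$. Since $\phi$ is $[0,1]$-valued, every $g\in\sG$ has $g(z_i)^2\le g(z_i)$, and membership in $\sG_k(z_1^m)$ forces $\sum_i g(z_i)+1<2^{k+1}$, so $\frac1m\sum_i g(z_i)^2 < 2^{k+1}/m = 2\rho_k^2$; hence every $g\in\sF$ lies in the empirical $\ell_2$-ball of radius $\sqrt2\,\rho_k$ about the origin, and $\log N_2(\sF,u,z_1^m)=0$ once $u$ exceeds an absolute multiple of $\rho_k$. Next I would invoke Dudley's entropy integral: for any resolution cutoff $\gamma\ge 0$,
\[
\h\R_m(\sF) \;\le\; c_1\gamma \;+\; \frac{c_2}{\sqrt m}\int_{\gamma}^{\infty}\sqrt{\log N_2(\sF,u,z_1^m)}\;du ,
\]
taking $\gamma = \rho_k/\sqrt m$; this choice is what produces both the isolated ``$1$'' on the right-hand side of the target and the lower limit $\e = 1/\sqrt m$ of the integral after the substitution $u=\rho_k\e$. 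The crucial manipulation is then Cauchy--Schwarz on the remaining integral,
\[
\int_{\gamma}^{C\rho_k}\sqrt{\log N_2(\sF,u,z_1^m)}\,du \;\le\; \sqrt{C\rho_k}\,\Big(\int_{\gamma}^{C\rho_k}\log N_2(\sF,u,z_1^m)\,du\Big)^{1/2},
\]
which trades a factor $\sqrt{C\rho_k}$ for converting $\sqrt{\log N_2}$ into $\log N_2$; this extra $\sqrt{\rho_k}$ is exactly what makes the powers of $2^k$ and $m$ balance. After the substitution $u=\rho_k\e$, the contribution of the scales $\e>1$ is absorbed into $\int_{1/\sqrt m}^1\log N_2(\sF,\rho_k\e,z_1^m)\,d\e$ using that $\log N_2$ is nonincreasing in its radius argument; squaring via $(a+b)^2\le 2a^2+2b^2$ and using the identity $m\rho_k^2/2^k = 1$ to cancel all powers of $m$ and $2^k$ then yields a bound of exactly the claimed form.

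The high-level structure here — deterministic pointwise reduction, diameter bound from the peeling constraint, chaining, Cauchy--Schwarz — is routine; the main obstacle is the constant bookkeeping. Getting the leading constant down to exactly $1/16$ requires a sharp form of the chaining bound and, plausibly, a chaining that exploits the uniform boundedness of $\sG$ (so that the Rademacher increments at fine scales are bounded, not merely sub-Gaussian, and the crude Cauchy--Schwarz step can be sharpened there). In other words, the particular constants $2^{k+5}$ and $1/16$ in the statement are tuned to be precisely the slack left by this argument, and verifying that they actually work out — together with the careful truncation of the entropy integral at scale $\rho_k$ — is the delicate, if not conceptually deep, part of the proof.
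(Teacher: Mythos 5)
Your outline is essentially the paper's proof: reduce to the deterministic, per-$k$ inequality $\frac{m^2\h\R_m^2(\sG_k(z_1^m))}{2^{k+5}}\le\frac1{16}\bigl(1+\int_{1/\sqrt m}^1\log N_2(\sG_k(z_1^m),\sqrt{2^k/m}\,\e,z_1^m)\,d\e\bigr)$, prove it by Dudley's entropy integral with a cutoff at $2^{k/2}/m$, rescale the radius variable, use $(a+b)^2\le 2a^2+2b^2$, and finish with Cauchy--Schwarz to turn $\sqrt{\log N_2}$ into $\log N_2$; the monotonicity of $\exp$, $\sup$ and $\E$ then gives the lemma.

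The one place you stop short is exactly where the paper is concrete: the constants. No ``sharpened chaining exploiting boundedness'' is needed. The paper uses the refined Dudley bound in the form $\h\R_m(\sG_k(z_1^m))\le\min_\tau\bigl\{\tau+\frac1{\sqrt m}\int_\tau\sqrt{\log N_2(\sG_k(z_1^m),\e,z_1^m)}\,d\e\bigr\}$ with no multiplicative constants $c_1,c_2$, takes $\tau=2^{k/2}/m$, and truncates the entropy integral at $2^k/m$ rather than at the empirical $\ell_2$-radius $\sqrt{2^{k+1}/m}$ that you use; after the change of variable $\e\mapsto\e\sqrt{2^k/m}$ the integration range is $[1/\sqrt m,\,2^{k/2}/\sqrt m]\subseteq[1/\sqrt m,1]$ because $2^k\le m$, so there is no mass at scales $\e>1$ to ``absorb'' and no constant is lost there. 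Then $m^2\h\R_m^2/2^{k+5}\le\frac{(1+I)^2}{32}\le\frac{1+I^2}{16}$ with $I=\int_{1/\sqrt m}^1\sqrt{\log N_2}\,d\e$, and Cauchy--Schwarz over an interval of length at most $1$ gives $I^2\le\int_{1/\sqrt m}^1\log N_2\,d\e$, which is precisely the stated $1/16$ bound. With a generic $c_1\gamma+c_2(\cdot)$ chaining bound and your $\sqrt2\,\rho_k$ radius (forcing the monotonicity-based absorption of $\e\in(1,\sqrt2]$), the same argument goes through but only yields the lemma with a constant worse than $1/16$; so to reproduce the statement as written you should quote Dudley in the constant-free form and truncate the integral as the paper does.
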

One can further simplify the above bound using the smoothed margin loss from~\cite{SrebroSridharanTewari2010}. 
Let the worst case Rademacher complexity be defined as follows.
\[
\h \R_m^{\max} (\sH) = \sup_{z^m_1} \h \R_{m} (\sH).
\]

\begin{lemma}
\label{lem:smooth}
Let $g$ be the smoothed margin loss from~\citep[Section 5.1]{SrebroSridharanTewari2010},
with its second moment bounded by $\pi^2/4\rho^2$.
Then, the following holds:
\[
\A_m(\sG) \leq \frac{16 \pi^2 m}{\rho^2} ( \h \R^{\max}_m)^2 (\sH)
 \left(2 \log^{3/2} \frac{m}{\h \R_m^{\max} (\sH)} - \log^{3/2} \frac{2\pi m}{ \rho\h \R_m^{\max} (\sH)}\right)^2.
\]
\end{lemma}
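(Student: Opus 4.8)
The plan is to bound, for every sample $z_1^m$ and every index $k$, the quantity $\tfrac{m^2\h\R_m^2(\sG_k(z_1^m))}{2^{k+5}}$ entering the definition of $\A_m(\sG)$ by an expression depending on $z_1^m$ and $k$ only through absolute constants; since that definition is $\sup_k\log\E_{z_1^m}[\exp(\cdot)]$, such a uniform estimate passes straight through, the expectation and outer logarithm simply collapsing. I would \emph{not} route this through Lemma~\ref{lem:covering}: its Cauchy--Schwarz step $\big(\int\!\sqrt f\big)^2\le\int\! f$ is too lossy for the smoothed loss and costs both the $\log^{3/2}$ refinement and a polynomial factor in $m$. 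Instead I would bound each $\h\R_m(\sG_k(z_1^m))$ directly by Dudley's entropy integral, reducing the entropy of the peeled class $\sG_k$ first to that of $\sH_\rho$ and then to $\h\R_m^{\max}(\sH)$. This is essentially the local-complexity computation of \cite{SrebroSridharanTewari2010} carried out inside our peeling device (so one may alternatively quote their entropy estimates and only perform the final substitution); we assume $\h\R_m^{\max}(\sH)<\infty$, else the bound is vacuous.

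The contraction step goes as follows. Since $\phi$ is differentiable with $\sup_x(\phi'(x))^2\le\pi^2/(4\rho^2)$, it is $\tfrac{\pi}{2\rho}$-Lipschitz; moreover $\phi\circ\beta_\rho=\phi$ and $\beta_\rho$ is odd, so $\sG=\{z=(x,y)\mapsto\phi(y\,h_\rho(x))\colon h_\rho\in\sH_\rho\}$ and the labels $y_i=\pm1$ leave the empirical $L_2$ metric unchanged; hence $N_2(\sG_k(z_1^m),\e,z_1^m)\le N_2(\sH_\rho,\tfrac{2\rho}{\pi}\e,z_1^m)$ for every $\e>0$. I would then bound $\log N_2(\sH_\rho,\zeta,z_1^m)$ by a standard fat-shattering entropy estimate \citep{AnthonyBartlett99} of the form $c_1\,\fat_{\zeta/c_2}(\sH_\rho)\log\tfrac{c_3 m}{\zeta}$, use $\fat_\gamma(\sH_\rho)\le\fat_\gamma(\sH)$ (as in the proof of Corollary~\ref{cor:3}), and convert $\fat_\gamma(\sH)\le c\,m(\h\R_m^{\max}(\sH))^2/\gamma^2$, valid in the regime $\fat_\gamma(\sH)\le m$, i.e. for $\gamma$ above a threshold of order $\h\R_m^{\max}(\sH)$.

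The estimate then splits on $k$. Since every $g\in\sG_k(z_1^m)$ is nonnegative with $\sum_i g(z_i)\le 2^{k+1}$, the crude bound $\h\R_m(\sG_k(z_1^m))\le 2^{k+1}/m$ gives $\tfrac{m^2\h\R_m^2(\sG_k(z_1^m))}{2^{k+5}}\le 2^{k}/8$, disposing of all $k$ below the threshold where $\fat(\sH_\rho)$ crosses $m$ — a threshold of order $m(\h\R_m^{\max}(\sH))^2/\rho^2$ — which contributes the ``$(\h\R_m^{\max})^2m/\rho^2$''-scale term. For the remaining, larger $k$, the empirical $L_2$-diameter $D_k$ of $\sG_k(z_1^m)$ satisfies $D_k^2\le 2^{k+3}/m$ (again from $g\in[0,1]$ and $\sum_i g(z_i)\le 2^{k+1}$), and I would feed the entropy bound into $\h\R_m(\sG_k(z_1^m))\le\tfrac{c}{\sqrt m}\int_0^{D_k}\sqrt{\log N_2(\sG_k(z_1^m),\eta,z_1^m)}\,d\eta$, truncating at the scale of order $\h\R_m^{\max}(\sH)$ below which only the trivial $L_2$-ball cover is available. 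The antiderivative $\int\frac{\sqrt{\log(a/\eta)}}{\eta}\,d\eta=-\tfrac23\log^{3/2}(a/\eta)$ produces the closed form: evaluated between the two limits, the lower/truncation endpoint yields $\log^{3/2}\tfrac{m}{\h\R_m^{\max}(\sH)}$ — the leading factor $2$ arising once the fine-scale trivial cover below that endpoint is folded in — and the upper endpoint $D_k$ yields $\log^{3/2}\tfrac{2\pi m}{\rho\,\h\R_m^{\max}(\sH)}$. Squaring the integral and multiplying by $\tfrac{m}{2^{k+5}}$, the factor $2^k$ cancels against $D_k^2\le 2^{k+3}/m$ — exactly what the peeling buys — so the bound is uniform in $k$; folding in $\tfrac{\pi^2}{4\rho^2}$ from the contraction scale and the Dudley constant yields the claimed inequality.

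The main obstacle is the bookkeeping at the threshold scale $\sim\h\R_m^{\max}(\sH)$. The naive Dudley estimate is badly lossy for small $k$ (it can be of order $m$, not the target $\log^3 m$), so one must verify that it is replaced by the mass-based crude bound precisely on the correct range of $k$, that the two ranges meet without a gap, and that in the Dudley range the lower truncation plus the fine-scale cover genuinely assemble into the $2\log^{3/2}(\cdot)$ term with the stated argument while the $D_k$ endpoint gives the second $\log^{3/2}(\cdot)$ term and the $2^k$-cancellation holds \emph{uniformly}. Matching the absolute constants ($16\pi^2$ and the two logarithmic arguments) is then a matter of carrying Dudley's constant, $c_1,c_2,c_3$, and the conversion constant $c$ through carefully; as noted, this part coincides with the proof behind the maximum Rademacher margin bound of \cite{SrebroSridharanTewari2010} (see also \cite{bartlett2005local}) and can be imported from there.
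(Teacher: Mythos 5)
Your fallback suggestion---``quote their entropy estimates and only perform the final substitution''---is essentially the paper's entire proof: it bounds the expectation in $\A_m(\sG)$ by a supremum over $z_1^m$, encloses $\sG_k(z_1^m)$ in $\sG'_k(z_1^m)=\set{g\in\sG\colon \sum_{i}g(z_i)+1\le 2^{k+1}}$, observes that $\h\R_m(\sG'_k(z_1^m))$ is exactly the local Rademacher complexity of \citet[Section 2]{SrebroSridharanTewari2010}, and invokes their Lemma~2.2, which gives $\h\R_m(\sG'_k(z_1^m))\le \frac{16\pi}{\rho}\,\h\R_m^{\max}(\sH)\sqrt{2^{k+1}/m}\,\big(2\log^{3/2}\frac{m}{\h\R_m^{\max}(\sH)}-\log^{3/2}\frac{2\pi m}{\rho\h\R_m^{\max}(\sH)}\big)$. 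The factor $\sqrt{2^{k+1}/m}$ in that lemma is what cancels the $2^{k+5}$ and yields the stated constant, uniformly in $k$, with no case split.

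Your primary route, however, has a genuine gap precisely at that point. The contraction step you propose uses only the $\frac{\pi}{2\rho}$-Lipschitz property, $N_2(\sG_k(z_1^m),\e,z_1^m)\le N_2(\sH_\rho,\frac{2\rho}{\pi}\e,z_1^m)$, which discards the smoothness (bounded second derivative) of $\phi$ that the localization exploits. With this comparison the entropy scales like $1/\eta^2$, so Dudley's integral is scale-invariant: the diameter $D_k$ enters only through the argument of the logarithm, not as a multiplicative prefactor, and one gets $\h\R_m(\sG_k(z_1^m))\le C\,\frac{\pi}{\rho}\,\h\R_m^{\max}(\sH)\log^{3/2}(\cdot)+\tau$ with no $\sqrt{2^k/m}$ in front. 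Consequently the claimed cancellation of $2^k$ against $D_k^2$ does not occur, and in the intermediate regime $m(\h\R_m^{\max}(\sH))^2/\rho^2\ll 2^k\ll m$ neither your crude mass bound $2^{k-3}$ nor the Lipschitz-based Dudley bound stays below the target $\frac{16\pi^2 m}{\rho^2}(\h\R_m^{\max})^2(\sH)(\cdots)^2$; the Dudley term is off by a factor of order $m/2^k$, and the additive truncation term of order $\h\R_m^{\max}(\sH)$, once squared and multiplied by $m^2/2^{k+5}$, overshoots there as well. What is needed is the self-bounding property of smooth nonnegative losses, $|\phi(a)-\phi(b)|^2\le c\,H(\phi(a)+\phi(b))(a-b)^2$ with $H$ of order $\pi^2/\rho^2$, which on the peel $\sG_k$ makes the effective comparison scale $\sqrt{H\,2^k/m}$ rather than the global constant $\pi/(2\rho)$ and restores the missing $\sqrt{2^k/m}$. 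That is exactly the content of the Srebro--Sridharan--Tewari lemma the paper cites; if you insist on re-deriving it inside the peeling device, the smoothness-based covering comparison must replace your Lipschitz one, after which your constant-tracking plan can proceed.
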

Combining Lemma~\ref{lem:smooth} with Corollary~\ref{cor:rad} yields the
following bound, which is a generalization of \cite[Theorem 5]{SrebroSridharanTewari2010} holding for all $\alpha \in (1, 2]$. Furthermore, our constants are more favorable.

\begin{corollary}
\label{cor:smooth_alpha}
For any $\delta > 0$, with probability at least $1 - \delta$, the following inequality holds for all $\alpha \in (0, 1]$ and all $h \in \sH$:
\[
R(h) - \h
  R^\rho_{S}(h) \leq  32 \sqrt{2} \sqrt[\alpha]{  \h R^\rho_{S}(h)} \, \beta_m^{1-\frac{1}{\alpha}} 
+ \ 2 (32)^{\frac{\alpha}{\alpha-1}} \beta_m,
\]
where 
\[
\beta_m = \frac{16 \pi^2}{\rho^2}  (\h \R^{\max}_m)^2 (\sH)
 \left[2 \log^{3/2} \frac{m}{\h \R_m^{\max} (\sH)} -   \log^{3/2} \frac{2\pi m}{ \rho\h \R_m^{\max} (\sH)}\right]^2 + \frac{\log \log m + \log \frac{16}{\delta}}{m}.
\]
\end{corollary}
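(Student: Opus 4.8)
The plan is to read Corollary~\ref{cor:smooth_alpha} off as the composition of the Rademacher relative deviation bounds of Section~\ref{sec:rademacher} with the explicit estimate of the peeling-based Rademacher complexity $\A_m(\sG)$ furnished by Lemma~\ref{lem:smooth}; the only intermediate step is a monotonic substitution, and the uniformity over $\alpha$ comes for free, as in Corollary~\ref{cor:all_alpha}.

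Concretely, I would first instantiate the construction of Section~\ref{sec:rademacher} with $\phi$ equal to the smoothed margin loss of \cite{SrebroSridharanTewari2010}. One checks that it satisfies $1_{x<0}\le\phi(x)\le 1_{x<\rho}$, so $\sG=\set{z=(x,y)\mapsto\phi(yh(x))\colon h\in\sH}$ is an admissible choice, and that its second moment is bounded by $\pi^2/4\rho^2$, which is precisely the hypothesis of Lemma~\ref{lem:smooth}; hence that lemma applies to this $\sG$. For this $\sG$, Corollary~\ref{cor:rad} together with the observation (as in Corollary~\ref{cor:all_alpha}) that the underlying high-probability event is produced by the peeling bound of Lemma~\ref{lem:indic} through Theorem~\ref{thm:rad} and does not depend on $\alpha$, yields, on a single event of probability at least $1-\delta$ valid for all $h\in\sH$ and all $\alpha\in(1,2]$,
\[
R(h)-\h R^\rho_{S}(h)\le 32\sqrt2\,\sqrt[\alpha]{\h R^\rho_{S}(h)}\,\Lambda_m^{1-\frac1\alpha}+2(32)^{\frac{\alpha}{\alpha-1}}\Lambda_m,\qquad \Lambda_m:=\frac{\A_m(\sG)+\log\log m+\log\frac{16}{\delta}}{m},
\]
the constants being inherited (up to the routine inversion of Corollary~\ref{cor:2}, generalized from square roots to $\alpha$-th powers) from Theorem~\ref{thm:rad}.

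It remains to bound $\Lambda_m$. Because $\A_m(\sG)$ is a deterministic quantity --- an expectation over $z_1^m\sim\sD^m$ --- Lemma~\ref{lem:smooth} gives a deterministic upper bound on it; dividing that bound by $m$ and adding $\frac{\log\log m+\log\frac{16}{\delta}}{m}$ shows $\Lambda_m\le\beta_m$, with $\beta_m$ exactly the quantity in the statement. Since the right-hand side of the displayed inequality is nondecreasing in $\Lambda_m$ (both $x\mapsto x^{1-1/\alpha}$ and $x\mapsto x$ are nondecreasing on $[0,\infty)$ and the coefficients are nonnegative), substituting $\beta_m$ for $\Lambda_m$ can only weaken the inequality, which gives the claim. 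There is no genuinely hard step: the substance lives entirely in Lemma~\ref{lem:smooth} and the already-established Rademacher relative deviation bounds, and the only point requiring a little care is the constant bookkeeping in passing from the $\sqrt[\alpha]{R(h)}$-form of those bounds to their $\sqrt[\alpha]{\h R^\rho_{S}(h)}$-form --- routine, and no obstacle.
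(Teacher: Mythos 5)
Your proposal matches the paper's own route: the paper obtains Corollary~\ref{cor:smooth_alpha} precisely by combining Corollary~\ref{cor:rad} (whose underlying event, as in Corollary~\ref{cor:all_alpha}, does not depend on $\alpha$) with the deterministic bound on $\A_m(\sG)$ from Lemma~\ref{lem:smooth} for the smoothed margin loss, then substituting $\beta_m$ by monotonicity. Your handling of the $\alpha$-uniformity and the constant bookkeeping is consistent with what the paper intends, so the argument is correct and essentially identical.
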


\section{Generalization bounds for unbounded loss functions}
\label{sec:unbound}

Standard generalization bounds hold for bounded loss functions.
For the more general and more realistic 
case of unbounded loss functions, a number
of different results have been presented in the past, under
different assumption on the family of functions. 
This includes
learning bounds assuming the
existence of an \emph{envelope}, that is a single non-negative function with
a finite expectation lying above the absolute value of the loss of
every function in the hypothesis set
\citep{Dudley84,Pollard84,Dudley87,Pollard89,Haussler92},
or 
an assumption similar to Hoeffding's inequality based on the
expectation of a hyperbolic function, a quantity similar to the
moment-generating function \citep{MeirZhang2003},
or the weaker assumption that the $\alpha$th-moment of the loss
is bounded for some value of $\alpha > 1$ 
\citep{vapnik98,vapnik06,CortesGreenbergMohri2019}. Here,
we will also adopt this latter assumption and 
present distribution-dependent learning bounds for unbounded losses
that improve upon the previous bounds of 
\cite{CortesGreenbergMohri2019}.
To do so, we will leverage 
the relative deviation margin bounds given 
in the previous sections, which hold 
for any $\alpha \leq 2$.

Let $L$ be an unbounded loss function and $L(h, z)$ denote the loss of hypothesis $h$ for sample $z$. Let $\L_{\alpha}(h) = \E_{z \sim D}[L(h, z)^{\alpha}]$ be the $\alpha^\text{th}$-moment of the loss function $\L$, which is assumed finite for all $h \in \sH$. In what follows, we will use the shorthand $\Pr [L(h, z) > t]$
instead of $\Pr_{z \sim D}[L(h, z) > t]$, and similarly $\h \Pr [L(h, z) > t]$ instead of $\Pr_{z \sim \h D}[L(h, z) > t]$.

\begin{theorem}
\label{th:unbound} Fix $\rho \geq 0$. Let $1 < \alpha \leq 2$, $0 < \epsilon \leq 1$, and $0 < \tau^{\frac{\alpha - 1}{\alpha}} < \epsilon^{\frac{\alpha}{\alpha - 1}}$. For any loss function $L$ (not necessarily
bounded) and hypothesis set $\sH$ such that $\L_{\alpha}(h) < +\infty$ for all $h \in \sH$, 
\begin{multline*}
\Pr \left[\sup_{h \in H} \L(h) - \h \L_{S}(h)\,  > {\Gamma}_{\tau}(\alpha, \epsilon)\, \epsilon {\sqrt[\alpha]{\L_{\alpha}(h) + \tau}}+ \rho
\right] \\
\leq \Pr \left[\sup_{h \in \sH, t \in \Rset} 
\frac{\Pr[L(h, z) > t] - \h \Pr[L(h, z) > t - \rho]}{{\sqrt[\alpha]{\Pr[L(h, z) > t] + \tau}}}
 > \epsilon\right],
\end{multline*}
where ${\Gamma}_{\tau}(\alpha, \epsilon) = \frac{\alpha - 1}{\alpha} (1 + \tau)^{\frac{1}{\alpha}} + \frac{1}{\alpha} \left( \frac{\alpha}{\alpha - 1} \right)^{\alpha - 1} (1 + \left(\frac{\alpha -
1}{\alpha}\right)^{\alpha} \tau^{\frac{1}{\alpha}})^{\frac{1}{\alpha}} \left[ 1 + \frac{\log (1/\epsilon)}{\left( \frac{\alpha}{\alpha - 1} \right)^{\alpha - 1}} \right]^{\frac{\alpha - 1}{\alpha}}$
\ignore{ ${\Gamma}_{\tau}(\beta, \epsilon) = \frac{1}{\beta} + \left( \frac{\beta - 1}{\beta} \right) \beta^{\frac{1}{\beta - 1}} \left[1 + \frac{\log(1/\epsilon)}{\beta^{\frac{1}{\beta - 1}}}
\right]^{\frac{1}{\beta}}$, with $\frac{1}{\alpha} + \frac{1}{\beta} = 1$}.
\end{theorem}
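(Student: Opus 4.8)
The statement is really an event inclusion that holds pointwise in the sample $S$, so I would fix $S$, assume the right-hand side event \emph{fails}, i.e.
\[
\Pr[L(h,z) > t] - \h\Pr[L(h,z) > t-\rho] \;\le\; \e\,\sqrt[\alpha]{\Pr[L(h,z) > t] + \tau}\qquad\text{for all }h\in\sH,\ t\in\Rset,
\]
and deduce $\L(h) - \h\L_S(h) \le \Gamma_\tau(\alpha,\e)\,\e\,\sqrt[\alpha]{\L_\alpha(h)+\tau} + \rho$ for every $h$; the theorem then follows by contraposition and taking probabilities over $S$. The starting point is the layer-cake representation for the non-negative loss: with $F_h(t)=\Pr[L(h,z)>t]$ and $\h F_h(t)=\h\Pr[L(h,z)>t]$ one has $\L(h)=\int_0^\infty F_h(t)\,dt$, $\h\L_S(h)=\int_0^\infty \h F_h(t)\,dt$, and, via Fubini, $\L_\alpha(h)=\int_0^\infty \alpha t^{\alpha-1}F_h(t)\,dt$. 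The $\rho$-shift is absorbed by the substitution $s=t-\rho$ together with $\h F_h(s)=1$ for $s<0$, giving $\int_0^\infty \h F_h(t-\rho)\,dt=\rho+\h\L_S(h)$, hence
\[
\L(h)-\h\L_S(h)-\rho \;=\; \int_0^\infty\bigl(F_h(t)-\h F_h(t-\rho)\bigr)\,dt .
\]

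Next I would split this integral into three regions using two cutoffs $0<t_0<T$ to be chosen (in terms of $\L_\alpha(h)$, $\tau$, $\e$). On $[0,t_0]$ I use the assumed relative-deviation inequality together with $F_h\le 1$, bounding the integrand by $\e(F_h(t)+\tau)^{1/\alpha}\le\e(1+\tau)^{1/\alpha}$, which produces the $(1+\tau)^{1/\alpha}$ factor of $\Gamma_\tau$. On $(T,\infty)$ I use the trivial bound $F_h(t)-\h F_h(t-\rho)\le F_h(t)$ and Markov's inequality $F_h(t)=\Pr[L^\alpha>t^\alpha]\le \L_\alpha(h)/t^\alpha$, so $\int_T^\infty F_h(t)\,dt\le \L_\alpha(h)/((\alpha-1)T^{\alpha-1})$ (using $\alpha>1$). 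On $[t_0,T]$, the main term, I again invoke the relative-deviation inequality to reduce to $\e\int_{t_0}^T(F_h(t)+\tau)^{1/\alpha}\,dt$ and apply H\"older with exponents $\alpha$ and $\alpha/(\alpha-1)$ to the factorization $(F_h(t)+\tau)^{1/\alpha}=\bigl(\alpha t^{\alpha-1}(F_h(t)+\tau)\bigr)^{1/\alpha}\,(\alpha t^{\alpha-1})^{-1/\alpha}$; since $\int_{t_0}^T(\alpha t^{\alpha-1})^{-1/(\alpha-1)}dt=\alpha^{-1/(\alpha-1)}\log(T/t_0)$ and $\int_{t_0}^T \alpha t^{\alpha-1}(F_h+\tau)\,dt\le \L_\alpha(h)+\tau T^\alpha$, this yields a bound of the form $\e\,(\L_\alpha(h)+\tau T^\alpha)^{1/\alpha}\,\alpha^{-1/\alpha}\bigl(\log(T/t_0)\bigr)^{(\alpha-1)/\alpha}$. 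It is this H\"older step — not the cruder estimate $F_h\le\min\{1,\L_\alpha/t^\alpha\}$ — that is responsible for the exponent $(\alpha-1)/\alpha$ on the logarithm and for the $(\alpha/(\alpha-1))^{\alpha-1}$ constants.

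Finally I would choose $T$ to balance the Markov tail $\L_\alpha(h)/((\alpha-1)T^{\alpha-1})$ against $\e\,\L_\alpha(h)^{1/\alpha}$ and $t_0$ proportional to $\L_\alpha(h)^{1/\alpha}$ (the natural scale of $L$ under the $\alpha$-moment). With $T$ of order $\L_\alpha(h)^{1/\alpha}\e^{-1/(\alpha-1)}$ up to $\alpha$-dependent constants, $\log(T/t_0)$ collapses to $\tfrac{1}{\alpha-1}\log(1/\e)$ plus an additive constant, so $\bigl(\log(T/t_0)\bigr)^{(\alpha-1)/\alpha}$ reproduces $\bigl[1+\log(1/\e)/(\alpha/(\alpha-1))^{\alpha-1}\bigr]^{(\alpha-1)/\alpha}$; the hypotheses $0<\e\le 1$ and $\tau^{(\alpha-1)/\alpha}<\e^{\alpha/(\alpha-1)}$ are exactly what is needed to ensure $T\ge\L_\alpha(h)^{1/\alpha}$ and that $\tau T^\alpha/\L_\alpha(h)\lesssim\tau^{1/\alpha}$, so that $(\L_\alpha(h)+\tau T^\alpha)^{1/\alpha}$ can be folded into $\L_\alpha(h)^{1/\alpha}\bigl(1+(\tfrac{\alpha-1}{\alpha})^\alpha\tau^{1/\alpha}\bigr)^{1/\alpha}$ rather than dominating. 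Summing the three regional bounds and factoring out $\e\sqrt[\alpha]{\L_\alpha(h)+\tau}$ then gives precisely the coefficient $\Gamma_\tau(\alpha,\e)$. The main obstacle, and the only genuinely delicate part, is this last bookkeeping: selecting $t_0$ and $T$ so that the three pieces recombine into the exact closed form $\Gamma_\tau(\alpha,\e)$ with its $(1+\tau)^{1/\alpha}$ and $\bigl(1+(\tfrac{\alpha-1}{\alpha})^\alpha\tau^{1/\alpha}\bigr)^{1/\alpha}$ factors and the sharp $(\alpha-1)/\alpha$ power on the logarithmic term, rather than merely into an expression of the same order; everything else is the layer-cake reduction plus H\"older and Markov.
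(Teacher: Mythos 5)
Your proposal is correct and follows essentially the same route as the paper: contraposition on the event, the layer-cake representation with the $\rho$-shift absorbed into an additive $\rho$, a split of the integral at a threshold of order $\sqrt[\alpha]{\L_\alpha(h)+\tau}\,\e^{-1/(\alpha-1)}$ with the assumed relative-deviation inequality below it and the Markov/moment bound above it. The paper's write-up only makes the split at $t_1$ explicit and defers the remaining bookkeeping to Theorem~3 of \cite{CortesGreenbergMohri2019}; your further decomposition of the head integral (trivial bound on $[0,t_0]$, H\"older with exponents $\alpha$ and $\alpha/(\alpha-1)$ on $[t_0,T]$) is exactly that deferred argument, so the approaches coincide.
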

The proof is provided in Appendix~\ref{app:ubound}.
The above theorem can be used in conjunction with our relative deviation margin bounds to obtain strong guarantees for unbounded loss functions and we illustrate it with our $\ell_\infty$ based bounds. Similar techniques can be used to obtain peeling-based Rademacher complexity bounds. Combining Theorems~\ref{th:unbound} and \eqref{th:relative} yields the following corollary.
\begin{corollary}
\label{cor:main1vc} Fix $\rho \geq 0$.  Let $\epsilon < 1$, $1 < \alpha \le 2$.
and hypothesis set $\sH$ such that $\L_{\alpha}(h) < +\infty$ for all $h \in \sH$,
\begin{align*}
  \L(h) - \h \L_{S}(h) 
& \leq \gamma
  \sqrt[\alpha]{\L_{\alpha}(h)} 
  \sqrt{\frac{\log \E[\cN_\infty(\L(\sH), \tfrac{\rho}{2}, x_1^{2m})] + \log \frac{1}{\delta}}{m^{\frac{2 (\alpha - 1)}{\alpha}}}} + \rho,
\end{align*}
where $\gamma = \Gamma_0\left(\alpha,   \sqrt{\frac{\log \E[\cN_\infty(\L(\sH), \tfrac{\rho}{2}, x_1^{2m})] + \log \frac{1}{\delta}}{m^{\frac{2 (\alpha - 1)}{\alpha}}}}
\right) = \mathcal{O}(\log m)$.
\end{corollary}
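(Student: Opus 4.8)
The plan is to chain Theorem~\ref{th:unbound} with Theorem~\ref{th:relative}. Theorem~\ref{th:unbound} already reduces the unbounded-loss deviation $\L(h)-\h\L_S(h)$ to a \emph{relative deviation margin} probability for the threshold events $\{L(h,z)>t\}$, so the only work left is to recognize the right-hand side of Theorem~\ref{th:unbound} as an instance of the covering-number margin bound of Theorem~\ref{th:relative}, applied to a suitable auxiliary hypothesis set, and then to optimize the free parameter $\epsilon$.

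\textbf{The reduction.} First I would introduce the auxiliary class $\sH' = \{z=(x,y)\mapsto t - L(h,z)\colon h\in\sH,\ t\in\Rset\}$ of real-valued functions on $\sZ$, equipped with the constant label $y=+1$ (the label plays no role here). For the member $h'_{h,t}=t-L(h,\cdot)$ one reads off from the definitions of $R$ and $\h R^\rho_S$ that $R(h'_{h,t})=\Pr[\,L(h,z)\ge t\,]$ and $\h R^\rho_S(h'_{h,t})=\h\Pr[\,L(h,z)>t-\rho\,]$. Since $a\mapsto (a-b)/\sqrt[\alpha]{a+\tau}$ is nondecreasing in $a$ for $b,\tau\ge0$ and $1<\alpha\le2$, and $\Pr[L(h,z)\ge t]\ge\Pr[L(h,z)>t]$, replacing the strict threshold by the nonstrict one only increases the ratio, so the right-hand side of Theorem~\ref{th:unbound} is upper bounded by $\Pr_{S\sim\sD^m}\big[\sup_{h'\in\sH'}(R(h')-\h R^\rho_S(h'))/\sqrt[\alpha]{R(h')+\tau}>\epsilon\big]$. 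Applying Theorem~\ref{th:relative} to $\sH'$ bounds this in turn by $4\,\E[\cN_\infty((\sH')_\rho,\tfrac\rho2,x_1^{2m})]\exp[-m^{2(\alpha-1)/\alpha}\epsilon^2/2^{(\alpha+2)/\alpha}]$, and I would then identify $\cN_\infty((\sH')_\rho,\tfrac\rho2,x_1^{2m})$ with $\cN_\infty(\L(\sH),\tfrac\rho2,x_1^{2m})$: up to the $1$-Lipschitz truncation $\beta_\rho$ and the extra threshold parameter, covering $(\sH')_\rho$ over a $2m$-sample reduces to covering $\L(\sH)$ over that sample.

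\textbf{Putting it together.} The combination then reads $\Pr\big[\sup_h \L(h)-\h\L_S(h)>\Gamma_\tau(\alpha,\epsilon)\,\epsilon\,\sqrt[\alpha]{\L_\alpha(h)+\tau}+\rho\big]\le 4\,\E[\cN_\infty(\L(\sH),\tfrac\rho2,x_1^{2m})]\exp[-m^{2(\alpha-1)/\alpha}\epsilon^2/2^{(\alpha+2)/\alpha}]$. Setting the right-hand side equal to $\delta$ and solving gives $\epsilon = 2^{(\alpha+2)/(2\alpha)}\sqrt{(\log\E[\cN_\infty(\L(\sH),\tfrac\rho2,x_1^{2m})]+\log(4/\delta))/m^{2(\alpha-1)/\alpha}}$, so with probability at least $1-\delta$, for all $h\in\sH$, $\L(h)-\h\L_S(h)\le\Gamma_\tau(\alpha,\epsilon)\,\epsilon\,\sqrt[\alpha]{\L_\alpha(h)+\tau}+\rho$. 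Letting $\tau\to0$ by the same limiting argument used to pass from Theorem~\ref{th:relative} to Corollary~\ref{cor:1}, absorbing the constant $2^{(\alpha+2)/(2\alpha)}$ and the $\log4$ into the factor and into $\log(1/\delta)$, and writing $\gamma=\Gamma_0(\alpha,\epsilon)$, yields the stated inequality. The claim $\gamma=\mathcal O(\log m)$ then follows by substituting this $\epsilon$ into the explicit formula for $\Gamma_0$: since $\E[\cN_\infty(\L(\sH),\tfrac\rho2,x_1^{2m})]$ and $1/\delta$ are sub-exponential in $m$, $\log(1/\epsilon)=\mathcal O(\log m)$, and the dominant term $\big[1+\log(1/\epsilon)/(\alpha/(\alpha-1))^{\alpha-1}\big]^{(\alpha-1)/\alpha}$ is $\mathcal O(\log^{(\alpha-1)/\alpha}m)=\mathcal O(\log m)$.

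\textbf{Main obstacle.} The delicate point is the covering-number identification: $\sH'$ carries a continuous threshold parameter $t$, and naively $\ell_\infty$-covering $\{\beta_\rho(t-L(h,\cdot))\}$ over $t$ seems to cost a factor proportional to the spread of the loss values on the sample, which need not be finite under the $\alpha$-moment assumption alone. The observation to exploit is that, over a fixed sample, $t\mapsto(\beta_\rho(t-L(h,x_i)))_i$ is constant outside the union of the ``active windows'' of length $2\rho$ around the sampled loss values, so only $\mathcal O(m)$ distinct truncated threshold functions per $h$ are ever needed; this contributes at most an additive $\mathcal O(\log m)$ to the exponent of the covering-number bound, absorbed into $\gamma$ (or, depending on the precise bookkeeping, into a mild change of the covering scale). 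Handling the strict-versus-nonstrict inequalities at the loss thresholds and the $\tau\to0$ passage are routine and parallel to arguments already carried out in the paper.
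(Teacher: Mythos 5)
Your proposal is correct and follows what is essentially the paper's own (implicit) route: the paper gives no proof beyond "combining Theorems~\ref{th:unbound} and~\ref{th:relative}," and that is exactly what you do — reduce the unbounded-loss deviation to the relative deviation of the thresholded loss class, apply the $\ell_\infty$-covering bound, solve the exponential tail for $\epsilon$, and check $\Gamma_0(\alpha,\epsilon)=\mathcal{O}(\log m)$, with the $\tau\to 0$ limit and constant absorption as in Corollary~\ref{cor:1}. Your handling of the threshold parameter $t$ — observing that over a fixed $2m$-sample only $\mathcal{O}(m)$ distinct truncated threshold functions per cover element of $\L(\sH)$ are needed (more precisely, the truncated vector is piecewise constant in $t$ with $\mathcal{O}(m)$ pieces plus an $\mathcal{O}(1)$ grid per active window), so the extra $\log m$ is absorbed into $\gamma$ — correctly fills in the one detail the paper glosses over.
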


The upper bound in the above corollary has two terms. The first term is 
based on the covering number and 
decreases with $\rho$ while the second term increases with $\rho$. 
One can choose a suitable value of $\rho$ that minimizes the sum to obtain favorable bounds.\footnote{This requires that the bound holds uniformly for all $\rho$, which can be shown with an additional $\log \log \frac{1}{\rho}$ term (See Corollary~\ref{cor:ubound_all_rho})} Furthermore, the above bound depends on the covering number as opposed to the result of \cite{CortesGreenbergMohri2019}, which depends on the number of dichotomies generated by the hypothesis set. Hence, the above bound is \emph{optimistic} 
and in general is more favorable than the previous known bounds of \cite{CortesGreenbergMohri2019}. We note that instead of using the $\ell_\infty$ based bounds, one can use the Rademacher complexity bounds to obtain better results.
\ignore{TODO: add why $\rho$ is important, why sweet spots. This could be viewed as an extension to the unbounded loss functions of the known margin bounds in the classification setting. Second, there is a trade off that as in the standard classification if its the case for some what larger rho, there ratio of rho / moment is fairly small, covering small numbers. There is a trade off. As in the margin bounds, the bounds is subject to trade off, larger trade off covering number more favorable, they increase the multiplicative term.
\begin{corollary}
\label{cor:main1vc} Fix $\rho \geq 0$. Let $\epsilon < 1$, $1 < \alpha \le 2$, and $0 < \tau^{\frac{\alpha - 1}{\alpha}} < \epsilon^{\frac{\alpha}{\alpha - 1}}$. For any loss function $L$ (not necessarily bounded)
and hypothesis set $\sH$ such that $\L_{\alpha}(h) < +\infty$ for all $h \in H$, the following inequalities hold:
\begin{multline*}
\Pr \left[\sup_{h \in H} \frac{\L(h) - \h \L_{S}(h)}{\sqrt[\alpha]{\L_{\alpha}(h) + \tau}} \,  > {\Gamma}_{\tau}(\alpha, \epsilon)\, \epsilon + \min \left( \frac{\rho}{\sqrt[\alpha]{\L_{\alpha}(h) + \tau}}, 
 \frac{\alpha - 1}{\alpha}\left[ \frac{1}{\epsilon}
\right]^{\frac{1}{\alpha - 1}}
\right) \right] \\
\leq 4 \, \E_{x_1^{2m} \sim \sD^{2m}}[\cN_\infty(\L(\sH), \tfrac{\rho}{2}, x_1^{2m})] \exp \left(
\frac{-m^{\frac{2 (\alpha -
1)}{\alpha}} \epsilon^{2} }{2^{\frac{\alpha + 2}{\alpha}} } \right),
\end{multline*}
where ${\Gamma}_{\tau}(\alpha, \epsilon) = \frac{\alpha - 1}{\alpha} (1 +
\tau)^{\frac{1}{\alpha}} + \frac{1}{\alpha} \left( \frac{\alpha}{\alpha - 1} \right)^{\alpha - 1} (1 + \left(\frac{\alpha - 1}{\alpha}\right)^{\alpha}
\tau^{\frac{1}{\alpha}})^{\frac{1}{\alpha}} \left[ 1 + \frac{\log (1/\epsilon)}{\left( \frac{\alpha}{\alpha - 1} \right)^{\alpha - 1}} \right]^{\frac{\alpha - 1}{\alpha}}$.
\end{corollary}
This in turn yields the following corollary.
\begin{corollary}
\label{cor:main1vc} Fix $\rho \geq 0$.  Let $\epsilon < 1$, $1 < \alpha \le 2$, and $0 < \tau^{\frac{\alpha - 1}{\alpha}} < \epsilon^{\frac{\alpha}{\alpha - 1}}$. For any loss function $L$ (not necessarily bounded)
and hypothesis set $H$ such that $\L_{\alpha}(h) < +\infty$ for all $h \in H$,
\begin{align*}
  \L(h) - \h \L_{S}(h) 
  &\lesssim 
  \sqrt[\alpha]{\L_{\alpha}(h)} 
  \sqrt{\frac{\log \E[\cN_\infty(\L(\sH), \tfrac{\rho}{2}, x_1^{2m})] + \log \frac{1}{\delta}}{m^{\frac{2 (\alpha - 1)}{\alpha}}}} \\
  & \quad + \min\left(\rho,  \sqrt[\alpha]{\L_{\alpha}(h)} \left[ \frac{m^{1/\alpha}}{\left(\log \E[\cN_\infty(\sH_\rho, \tfrac{\rho}{2}, x_1^{2m})\right)^{\frac{1}{2(\alpha - 1)}}}.
\right]\right) \\
& \lesssim 
  \sqrt[\alpha]{\L_{\alpha}(h)} 
  \sqrt{\frac{\log \E[\cN_\infty(\L(\sH), \tfrac{\rho}{2}, x_1^{2m})] + \log \frac{1}{\delta}}{m^{\frac{2 (\alpha - 1)}{\alpha}}}} + \rho.
\end{align*}
\end{corollary}
By taking a union bound over all values of $\rho$, we obtain
\begin{corollary}[Needs to be proven by taking union bound]
\label{cor:main1vc} Let $\epsilon < 1$, $1 < \alpha \le 2$, and $0 < \tau^{\frac{\alpha - 1}{\alpha}} < \epsilon^{\frac{\alpha}{\alpha - 1}}$. For any loss function $L$ (not necessarily bounded)
and hypothesis set $H$ such that $\L_{\alpha}(h) < +\infty$ for all $h \in H$,
\begin{align*}
  \L(h) - \h \L_{S}(h) 
  &
\lesssim \min_{\rho}
  \sqrt[\alpha]{\L_{\alpha}(h)} 
  \sqrt{\frac{\log \E[\cN_\infty(\L(\sH), \tfrac{\rho}{2}, x_1^{2m})] + \log \frac{1}{\delta} +  \log \log \frac{1}{\rho}}{m^{\frac{2 (\alpha - 1)}{\alpha}}}} + \rho.
\end{align*}
\end{corollary}
By finding the best $\rho$ for each $h$, and using the relationship between covering number, fat-shattering number and the Rademacher complexity, we obtain the following result.
\begin{corollary}\ignore{[Follows from the previous lemma]}
\label{cor:main1vc} 
Let $\epsilon < 1$, $1 < \alpha \le 2$, and $0 < \tau^{\frac{\alpha - 1}{\alpha}} < \epsilon^{\frac{\alpha}{\alpha - 1}}$. Then, 
for any loss function $L$ (not necessarily bounded)
and hypothesis set $H$ such that $\L_{\alpha}(h) < +\infty$ for all $h \in H$,
\begin{align*}
  \L(h) - \h \L_{S}(h) 
  &
\lesssim 
  \sqrt[\alpha/2]{\L_{\alpha}(h)} \frac{\sqrt{\R^{\max}(\L(\sH)) + \log \frac{1}{\delta}}}{ m^{\frac{(3\alpha - 2)}{4\alpha}}}.
\end{align*}
\end{corollary}
The above bounds are in terms of $\ell_\infty$
bounds as apposed to those of~\cite{CortesGreenbergMohri2019}, which depended on the dichotomies generated by the hypothesis sets.
}
\section{Applications}
\label{sec:applications}
\label{sec:H-applications}

In this section, we briefly highlight some applications of our learning bounds: 
both our covering number and Rademacher complexity margin bounds can be
used to derive finer margin-based guarantees for several commonly used
hypothesis sets. Below we briefly illustrate these applications.

\text{Linear hypothesis sets}: let $\sH$ be the family of liner hypotheses defined by 
\[
\sH = \set{\bx \mapsto \bw \cdot \bx\colon \| \bw \|_2 \leq 1, \bx \in \Rset^n, \| \bx \|_2 \leq R}.
\]
Then, the following upper bound holds for the fat-shattering dimension of $\sH$ \citep{BartlettShaweTaylor1998}: $\fat_\rho(\sH) \leq (R/\rho)^2$.
Plugging in this upper bound in the bound of Corollary~\ref{cor:3}
yields the following:
\begin{equation}
\label{bound:NSVM}
R(h) \leq \h R^\rho_{S}(h) + 2\sqrt{\h R^\rho_{S}(h) \, \beta_m}  
+ \beta_m,
\end{equation}
with $\beta_m = \wt O\left(\frac{(R/\rho)^2} {m} \right)$. In comparison, the best existing margin bound for SVM by \citep[Theorem~1.7]{BartlettShaweTaylor1998} is
\begin{equation}
\label{bound:OSVM}
R(h) \leq \h R^\rho_{S}(h) + c' \sqrt{\beta'_m},
\end{equation}
where $c'$ is some universal constant
and where $\beta'_m = \wt O\left(\frac{(R/\rho)^2} {m} \right)$. 
The margin bound \eqref{bound:NSVM} is thus more favorable than \eqref{bound:OSVM}.

\text{Ensembles of predictors in base hypothesis set $\sH$}: 
let $d$ be the VC-dimension of $\sH$ and consider the family 
of ensembles $\sF = \set{x \mapsto
    \sum_{k = 1}^p w_k h_k(x)\colon h_k \in \sH, w_k \geq 0, \sum_{k =
    1}^p w_k = 1}$. Then, the following upper bound on the fat-shattering dimension holds \citep{BartlettShaweTaylor1998}:
$ \fat_\rho(\sF) \leq c (d/\rho)^2 \log(1/\rho)$, for some universal constant $c$.
Plugging in this upper bound in the bound of Corollary~\ref{cor:3}
yields the following:
\begin{equation}
\label{bound:NEnsembles}
R(h) \leq \h R^\rho_{S}(h) + 2\sqrt{\h R^\rho_{S}(h) \, \beta_m}  
+ \beta_m,
\end{equation}
with $\beta_m = \wt O\left(\frac{(d/\rho)^2}{m} \right)$. In comparison, the best existing margin bound for ensembles such as AdaBoost in terms of the VC-dimension of the base hypothesis given by \cite{SchapireFreundBartlettLee1997} is:
\begin{equation}
\label{bound:OEnsembles}
R(h) \leq \h R^\rho_{S}(h) + c' \sqrt{\beta'_m},
\end{equation}
where $c'$ is some universal constant
and where $\beta'_m = \wt O\left(\frac{(d/\rho)^2}{m} \right)$. The margin bound in \eqref{bound:NEnsembles} is thus more favorable than \eqref{bound:OEnsembles}.

\text{Feed-forward neural networks of depth $d$}: let $\sH_0 = \set{\bx \mapsto \bx_i \colon i \in \{0,1,\ldots n\}, \bx  \in [-1,1]^n} \cup \set{0, 1}$ and 
\[
\sH_i = \set{\sigma\left(\sum_{h \in \cup_{j < i} \sH_j} \bw \cdot h \right): \| \bw \|_1 \leq R }
\]
for $i \in [d]$, where
$\sigma$ is a $\mu$-Lipschitz activation function.
Then, the following upper bound holds for the fat-shattering dimension of $\sH$ \citep{BartlettShaweTaylor1998}:
$\fat_\rho(\sH_d) \leq  \frac{c^{d^2} (R\mu)^{d(d+1)}}{\rho^{2d}} \log n$.
Plugging in this upper bound in the bound of Corollary~\ref{cor:3} gives the following:
\begin{equation}
\label{bound:NNN}
R(h) \leq \h R^\rho_{S}(h) + 2\sqrt{\h R^\rho_{S}(h) \, \beta_m}  
+ \beta_m,
\end{equation}
with 
$\beta_m = \wt O\left( \frac{ c^{d^2} (R\mu)^{d(d+1)}/\rho^{2d} } {m} \right)$. In comparison, the best existing margin bound for neural networks by~\citep[Theorem 1.5 , Theorem 1.11]{BartlettShaweTaylor1998} is
\begin{equation}
\label{bound:ONN}
R(h) \leq \h R^\rho_{S}(h) + c' \sqrt{\beta'_m},
\end{equation}
where $c'$ is some universal constant
and where $\beta'_m = \wt O\left( \frac{ c^{d^2} (R\mu)^{d(d+1)}/\rho^{2d} } {m} \right)$. The margin bound in~\eqref{bound:NNN} is thus more favorable than~\eqref{bound:ONN}.
The Rademacher complexity bounds of Corollary~\ref{cor:smooth_alpha} can also be used to provide generalization bounds for neural networks.
For a matrix $\bW$, let $\|\bW\|_{p, q}$ denote the matrix $p,q$ norm and $\|\bW\|_2$ denote the spectral norm. 
Let $\sH_0 = \{\bx : \|\bx\|_2 \leq 1, \bx \in \bR^n\} $ and $\sH_i = \{\sigma (\bW \cdot h) : h \in \sH_{i-1}, \|\bW\|_2 \leq R,  \|\bW^T\|_{2,1} \leq R_{2,1}\|\bW\|_2  )\}$. Then, by \citep{BartlettFosterTelgarsky2017}, the following
upper bound holds:
\[
\h \R_m^{\max} (\sH)  \leq \wt O\left( \frac{d^{3/2} R R_{2,1}}{\rho^d \sqrt{m}} \cdot (R L)^{d} \right).
\]
Plugging in this upper bound in the bound of Corollary~\ref{cor:smooth_alpha} leads to the following:
\begin{equation}
\label{bound:NNN2}
R(h) \leq \h R^\rho_{S}(h) + 2\sqrt{\h R^\rho_{S}(h) \, \beta_m}  
+ \beta_m,
\end{equation}
where $\beta_m = \wt O\left( \frac{d^{3} R^2 R^2_{2,1}}{\rho^{2d} m} \cdot (R L)^{2d} \right)$. In comparison, the best existing neural network bounds by \citet[Theorem 1.1]{BartlettFosterTelgarsky2017} is
\begin{equation}
\label{bound:ONN2}
R(h) \leq \h R^\rho_{S}(h) + c' \sqrt{\beta'_m},
\end{equation}
where $c'$ is a universal constant and $\beta'_m$ is the empirical Rademacher complexity.
The margin bound \eqref{bound:NNN2} has the benefit of a more favorable dependency on 
the empirical margin loss than \eqref{bound:ONN2}, which can be significant when 
that empirical term is small. On other hand, the empirical Rademacher complexity
of \eqref{bound:ONN2} is more favorable than its counterpart in \eqref{bound:NNN2}.

In Appendix~\ref{app:applications}, we further 
discuss other potential applications of 
our learning guarantees.

\section{Conclusion}

We presented a series of general relative deviation margin bounds.
These are tighter margin bounds that can serve as useful tools to
derive guarantees for a variety of hypothesis sets and in a variety of applications. In particular, these bounds could help derive better
margin-based learning bounds for different families of neural networks,
which has been the topic of several recent research publications.

\ignore{
\section{Acknowledgments}
The work of Mehryar Mohri was partly supported by NSF CCF-1535987, NSF
IIS-1618662, and a Google Research Award.
}

\conf{
\newpage
\section*{Broader impact}

This paper presents a series of theoretical results unlikely to 
admit any immediate or practical relevance to social questions.
Nevertheless, the bounds presented can form a powerful tool for 
the design and analysis of learning algorithms with potentially
pivotal impact.
}

\bibliography{uboundf}

\newpage
\appendix

\conf{
\begin{center}
{\Large Appendix: Relative Deviation Margin Bounds}
\end{center}
}

\section{Symmetrization}
\label{app:symmetrization}

We use the following lemmas from \cite{CortesGreenbergMohri2019} in our
proofs.
\begin{lemma}[\cite{CortesGreenbergMohri2019}]
  \label{lem:monotone} 
  Fix $\eta > 0$ and $\alpha$ with $1 < \alpha \leq 2$. Let
  $f\colon (0, +\infty) \times (0, +\infty) \to \mathbb{R}$ be the
  function defined by
  $f\colon (x, y) \mapsto \frac{x - y}{\sqrt[\alpha]{x + y +
      \eta}}$. Then, $f$ is a strictly increasing function of $x$ and
  a strictly decreasing function of $y$.
\end{lemma}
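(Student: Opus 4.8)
The plan is a direct computation of the two partial derivatives of $f$ on the open domain $(0,+\infty)\times(0,+\infty)$, where $f$ is smooth because $x+y+\eta>0$ there. Write $f(x,y)=(x-y)\,u^{-1/\alpha}$ with $u=x+y+\eta$. First I would differentiate in $x$: by the product rule, $\partial f/\partial x = u^{-1/\alpha} + (x-y)\bigl(-\tfrac{1}{\alpha}\bigr)u^{-1/\alpha-1}$, and factoring out the positive quantity $u^{-1/\alpha-1}$ gives
$\partial f/\partial x = u^{-1/\alpha-1}\bigl[\,u - \tfrac{x-y}{\alpha}\,\bigr] = u^{-1/\alpha-1}\bigl[(1-\tfrac{1}{\alpha})x + (1+\tfrac{1}{\alpha})y + \eta\,\bigr].$
Since $\alpha>1$ we have $1-\tfrac1\alpha>0$, and $1+\tfrac1\alpha>0$, $x,y>0$, $\eta>0$, so the bracket is strictly positive; hence $\partial f/\partial x>0$ everywhere and $f$ is strictly increasing in $x$.

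Symmetrically, differentiating in $y$ and factoring out the same positive power yields
$\partial f/\partial y = u^{-1/\alpha-1}\bigl[-u - \tfrac{x-y}{\alpha}\bigr] = -\,u^{-1/\alpha-1}\bigl[(1+\tfrac{1}{\alpha})x + (1-\tfrac{1}{\alpha})y + \eta\,\bigr],$
and the bracket is again strictly positive under the same sign considerations, so $\partial f/\partial y<0$ everywhere and $f$ is strictly decreasing in $y$. Strict positivity (resp. negativity) of the partial on the whole domain then gives strict monotonicity of $f$ in each argument with the other held fixed.

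There is essentially no obstacle here; the only points requiring care are bookkeeping of the exponent $-1/\alpha-1$ and noting that the sign of each bracketed factor depends only on $\alpha>1$ — the upper bound $\alpha\le 2$ plays no role in this lemma and is only inherited from the ambient assumptions. If one prefers to avoid differentiation altogether, the same conclusion follows by fixing $y$ and checking directly that $x\mapsto f(x,y)$ is strictly increasing via the elementary inequality comparing $f(x_2,y)-f(x_1,y)$ to $0$, but the derivative computation above is the cleanest route.
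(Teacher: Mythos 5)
Your computation is correct: both partial derivatives are evaluated and factored accurately, the bracketed terms $(1-\tfrac{1}{\alpha})x+(1+\tfrac{1}{\alpha})y+\eta$ and $(1+\tfrac{1}{\alpha})x+(1-\tfrac{1}{\alpha})y+\eta$ are strictly positive for $\alpha>1$, $x,y>0$, $\eta>0$, and your observation that the upper bound $\alpha\leq 2$ is not needed is also right. Note that the paper itself gives no proof of this lemma, importing it from \cite{CortesGreenbergMohri2019}; your derivative argument is the standard and essentially the expected one, so there is nothing to add.
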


\begin{lemma}[\cite{GreenbergMohri2013}]
\label{lem:binomial}
Let $X$ be a random variable distributed according to the binomial
distribution $B(m, p)$ with $m$ a positive integer (the number of
trials) and $p > \frac{1}{m}$ (the probability of success of each
trial). Then, the following inequality holds:
\begin{equation}
\label{eq:main} \Pr\left[X \geq \E[X]\right] > \frac{1}{4},
\end{equation}
and, if instead of requiring $p >\frac{1}{m}$ we require $p < 1 - \frac{1}{m}$, then
\begin{equation}
\label{lemma:binomial2} \Pr\left[X \leq \E[X] \right] > \frac{1}{4},
\end{equation}
where in both cases $\E[X] = mp$.
\end{lemma}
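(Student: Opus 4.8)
The plan is to prove the first inequality \eqref{eq:main} directly and then deduce \eqref{lemma:binomial2} from it by a symmetry. For the reduction: if $Y := m - X$ then $Y \sim B(m, 1-p)$ with $\E[Y] = m(1-p)$, the event $\{X \le \E[X]\}$ equals $\{m - Y \le mp\} = \{Y \ge m(1-p)\} = \{Y \ge \E[Y]\}$, and the hypothesis $p < 1 - \tfrac1m$ is exactly $1 - p > \tfrac1m$; hence \eqref{lemma:binomial2} for $(m,p)$ is \eqref{eq:main} applied to $(m, 1-p)$. So it suffices to show $\Pr[B(m,p) \ge mp] > \tfrac14$ for every $p \in (\tfrac1m, 1]$ (the case $m = 1$ being vacuous).

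Next I would reduce this to a single discrete family of inequalities by studying $p \mapsto \Pr[B(m,p) \ge mp]$ on $(\tfrac1m, 1]$. On an open interval $p \in (\tfrac{\ell}{m}, \tfrac{\ell+1}{m})$ with $\ell \in \{1,\dots,m-1\}$ we have $\{X \ge mp\} = \{X \ge \ell+1\}$, and $\Pr[B(m,p) \ge \ell+1] = \sum_{i=\ell+1}^m \binom{m}{i} p^i (1-p)^{m-i}$ is a polynomial whose derivative in $p$ telescopes to $\tfrac{m!}{\ell!\,(m-\ell-1)!}\, p^\ell (1-p)^{m-\ell-1} > 0$, so it is strictly increasing there. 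At an endpoint $p = \tfrac{\ell}{m}$ with $\ell \ge 2$ the mean $mp = \ell$ is an integer, and by the classical fact that the median of a binomial with integer mean equals that mean, $\Pr[B(m,p) \ge mp] = \Pr[B(m,\tfrac{\ell}{m}) \ge \ell] \ge \tfrac12 > \tfrac14$. Since $(\tfrac1m,1]$ is covered by these open intervals together with the integer endpoints $\tfrac2m,\dots,\tfrac{m-1}m,1$, every admissible $p$ either already gives $\Pr[B(m,p) \ge mp] > \tfrac14$ or satisfies $\Pr[B(m,p) \ge mp] > \Pr[B(m,\tfrac{\ell}{m}) \ge \ell+1]$ for the relevant $\ell$. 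Hence it suffices to establish
\[
\Pr\big[B(m,\tfrac{\ell}{m}) \ge \ell+1\big] \ \ge\ \tfrac14 \qquad \text{for all } m \ge 2,\ 1 \le \ell \le m-1,
\]
and the strict inequality in \eqref{eq:main} then follows for free from the strict monotonicity on the open intervals, even though this displayed bound is only non-strict.

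To prove the displayed inequality I would pass to the incomplete-beta representation $\Pr[B(m,q) \ge j] = I_q(j,\, m-j+1)$, which recasts it as the analytic statement
\[
\int_0^{\ell/m} t^\ell (1-t)^{m-\ell-1}\, dt \ \ge\ \tfrac13 \int_{\ell/m}^1 t^\ell (1-t)^{m-\ell-1}\, dt,
\]
equivalently $\Pr[\mathrm{Beta}(\ell+1,\, m-\ell) \le \tfrac{\ell}{m}] \ge \tfrac14$. The integrand is log-concave with mode $\tfrac{\ell}{m-1}$ sitting just to the right of the split point $\tfrac{\ell}{m}$, so the plan is to bound the left integral below in terms of the right one using the near-symmetry of an integer-mean binomial about its mean together with the monotone decay of its pmf away from the mean; the border case $\ell = 1$ reduces to the one-variable inequality $(1-\tfrac1m)^{m-1}(2-\tfrac1m) \le \tfrac34$ for $m \ge 2$.

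The main obstacle is exactly this last step. The constant $\tfrac14$ is tight — it is approached as $(m,\ell) = (2,1)$, i.e. $p \to \tfrac12$ from above with $m = 2$, where $\Pr[B(2,p) \ge mp] = p^2 \to \tfrac14$ — so the integral (or pmf) comparison must be carried out sharply and uniformly in $m$. In particular the naive two-term estimate ``$\Pr[B(m,\tfrac{\ell}{m}) \ge \ell] \ge \tfrac12$ and $\Pr[B(m,\tfrac{\ell}{m}) = \ell] \le \tfrac14$'' already fails at $m = 2$ and has to be replaced by a more delicate joint control of the masses just below, at, and just above the mean. This is precisely the content of \cite{GreenbergMohri2013}, whose result we invoke.
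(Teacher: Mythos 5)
Your reductions are sound: the complementation argument $Y = m - X$ correctly derives \eqref{lemma:binomial2} from \eqref{eq:main}; the observation that $\Pr[X \geq mp] = \Pr[X \geq \ell+1]$ is strictly increasing in $p$ on each interval $(\tfrac{\ell}{m}, \tfrac{\ell+1}{m})$ is correct (the telescoping derivative is the standard identity); the integer-mean median fact handles the endpoints; and you correctly identify that the constant $\tfrac14$ is attained in the limit $(m,\ell) = (2,1)$, so the surviving task is the sharp inequality $\Pr[B(m,\tfrac{\ell}{m}) \geq \ell+1] \geq \tfrac14$. But that surviving task is the entire substance of the lemma, and you do not prove it: you sketch a beta-integral/log-concavity route, note that the naive estimate fails at the tight case, and then ``invoke'' the result of \cite{GreenbergMohri2013} --- which is circular, since that reference is precisely the source of the statement you are asked to prove. (It is also not quite the right statement to invoke: the cited result bounds $\Pr[X \geq \E[X]]$, whereas your reduced inequality concerns strict exceedance of an integer mean, so even the invocation would need an extra limiting argument in $p$.) As a self-contained proof, this is a genuine gap, and it sits exactly where the difficulty lies; the actual proof in \cite{GreenbergMohri2013} requires a delicate case analysis with sharp binomial tail estimates precisely because $\tfrac14$ is tight.

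That said, you should know that the paper itself gives no proof of this lemma: it is imported verbatim from \cite{GreenbergMohri2013} and used as a black box in the proof of Lemma~\ref{lem:four}. So if citing the reference is acceptable, your preliminary reductions are unnecessary; if a from-scratch proof is intended, the core inequality still has to be established and your proposal stops short of it.
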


The following symmetrization lemma in terms of empirical margin loss is
proven using the previous lemmas.

\begin{replemma}{lem:four}
Fix $\rho \geq 0$ and $1 < \alpha \leq 2$ and assume that
$m \e^{\frac{\alpha}{\alpha - 1}} > 1$. Then, for any any
$\e, \tau > 0$, the following inequality holds:
\begin{equation*}
  \Pr_{S \sim \sD^{m}} \left[\sup_{h \in \sH} \frac{R(h) - \h
      R^\rho_{S}(h)}{\sqrt[\alpha]{R(h) + \tau}} > \e \right] 
  \leq 4 \Pr_{S, S^{\prime} \sim \sD^{m}} \left[\sup_{h \in \sH} \frac{ \h
      R_{S^{\prime}}(h) - \h R^\rho_{S}(h) }{\sqrt[\alpha]{\frac{1}{2} [ \h R_{S^{\prime}}(h) + \h R^\rho_{S}(h) +\frac{1}{m}] } } > \e \right].
\end{equation*}
\end{replemma}
\begin{proof}
We will use the function $F$ defined over $(0, +\infty) \times (0, +\infty)$ by $F\colon (x, y) \mapsto \frac{x - y}{\sqrt[\alpha]{\frac{1}{2}[x
+ y + \frac{1}{m}]}}$.

Fix $S, S' \in \sZ^m$. We first show that the following implication holds for any $h \in \sH$:
\begin{equation}
\label{eq:implication} \left( \frac{R(h) - \h R^\rho_{S}(h)}{\sqrt[\alpha]{R(h) + \tau}} > \e \right) \wedge \left( \h R_{S^{\prime}}(h) > R(h) \right) \Rightarrow F(\h R_{S^{\prime}}(h),
\h R^\rho_{S}(h)) > \e.
\end{equation}
The first condition can be equivalently rewritten as $\h R^\rho_{S}(h) < R(h)
- \e \sqrt[\alpha]{(R(h) + \tau)}$, which implies
\begin{equation}
\label{eq:h0}
\h R^\rho_{S}(h) < R(h) - \e \sqrt[\alpha]{R(h)}\\
\qquad \wedge \qquad \e^{\frac{\alpha}{\alpha - 1}} < R(h),
\end{equation}
since $\h R^\rho_{S}(h) \ge 0$. Assume that the antecedent of the implication (\ref{eq:implication}) holds for $h \in \sH$. Then, in view of the monotonicity properties of function $F$
(Lemma~\ref{lem:monotone}), we can write:
\begin{align*}
F(\h R_{S^{\prime}}(h), \h R^\rho_{S}(h)) 
& \geq F(R(h), R(h) - \e \sqrt[\alpha]{R(h)}) & \text{($\h R_{S^{\prime}}(h) > R(h)$ and 1st ineq. of (\ref{eq:h0}))}\\
& = \frac{R(h) - (R(h) - \e R(h)^{\frac{1}{\alpha}}}{\sqrt[\alpha]{\frac{1}{2}[2 R(h) - \e R(h)^{\frac{1}{\alpha}} + \frac{1}{m}]}} \\
& \geq \frac{\e R(h)^{\frac{1}{\alpha}}}{\sqrt{\frac{1}{2}[2 R(h) -
\e^{\frac{\alpha}{\alpha - 1}} + \frac{1}{m}]}} & \text{(second ineq. of (\ref{eq:h0}))}\\
& > \frac{\e R(h)^{\frac{1}{\alpha}}}{\sqrt[\alpha]{\frac{1}{2}[2 R(h) ]}} =
  \e, & \text{($m \e^{\frac{\alpha}{\alpha - 1}} > 1$)}
\end{align*}
which proves (\ref{eq:implication}). 

Now, by definition of the supremum, for any $\eta > 0$, there exists $h_S \in \sH$ such that
\begin{equation}
\label{eq:sup} \sup_{h \in \sH} \frac{R(h) - \h R^\rho_{S}(h)}{\sqrt[\alpha]{R(h) + \tau}} - \frac{R(h_S) - \h R^\rho_{S}(h_S)}{\sqrt[\alpha]{R(h_S) + \tau}}  \le \eta.
\end{equation}
Using the definition of $h_S$ and the 
implication (\ref{eq:implication}), we can write
\begin{align*}
& \Pr_{S, S^{\prime} \sim \sD^{m}} \left[\sup_{h \in \sH} \frac{ \h R_{S^{\prime}}(h) - \h R^\rho_{S}(h) }{ \sqrt[\alpha]{\frac{1}{2} [\h R^\rho_{S}(h) + \h R_{S^{\prime}}(h) + \frac{1}{m}]} } > \e \right] \\
& \geq \Pr_{S, S^{\prime} \sim \sD^{m}} \left[\frac{ \h R_{S^{\prime}}(h_S) - \h R^\rho_{S}(h_S) }{ \sqrt[\alpha]{\frac{1}{2} [\h R^\rho_{S}(h_S) + \h R_{S^{\prime}}(h_S) + \frac{1}{m}]}
} > \e \right] & \text{(def. of $\sup$)}\\
& \geq \Pr_{S, S^{\prime} \sim \sD^{m}} \left[ \left( \frac{R(h_S) - \h R^\rho_{S}(h_S)}{\sqrt[\alpha]{R(h_S)+\tau}} > \e \right) \wedge \left( \h R_{S^{\prime}}(h_S) > R(h_S) \right) \right] & (\text{implication (\ref{eq:implication})})\\
& = \E_{S, S' \sim \sD^{m}} \left[1_{\frac{R(h_S) - \h
      R^\rho_{S}(h_S)}{\sqrt[\alpha]{R(h_S)+\tau}} > \e}
      1_{\h R_{S^{\prime}}(h_S) > R(h_S ) } \right] & (\text{def. of expectation})\\
& = \E_{S \sim \sD^{m}} \left[1_{\frac{R(h_S) - \h
      R^\rho_{S}(h_S)}{\sqrt[\alpha]{R(h_S)+\tau}} > \e}
      \Pr_{S^{\prime} \sim \sD^{m}} \left[ \h
      R_{S^{\prime}}(h_S) > R(h_S ) \right] \right]. &
                                                       \text{(linearity
                                                       of expectation)}
\end{align*}
Now, observe that,
if $R(h_S) \leq \e^{\frac{\alpha}{\alpha - 1}}$, then the following inequalities hold:
\begin{align}
\frac{R(h_S) - \h R^\rho_{S}(h_S)}{\sqrt[\alpha]{R(h_S) + \tau}}
\leq \frac{R(h_S)}{\sqrt[\alpha]{R(h_S)}} =  R(h_S)^{\frac{\alpha
  - 1}{\alpha}} \leq \e.
\end{align}
In light of that, we can write
\begin{align*}
& \Pr_{S, S^{\prime} \sim \sD^{m}} \left[\sup_{h \in \sH} \frac{ \h R_{S^{\prime}}(h) - \h R^\rho_{S}(h) }{ \sqrt[\alpha]{\frac{1}{2} [\h R^\rho_{S}(h) + \h R_{S^{\prime}}(h) + \frac{1}{m}]} } > \e \right] \\
& \geq \E_{S \sim \sD^{m}} \left[1_{\frac{R(h_S) - \h
      R^\rho_{S}(h_S)}{\sqrt[\alpha]{R(h_S) + \tau}} > \e} 1_{R(h_S) > \e^{\frac{\alpha}{\alpha - 1}}}
      \Pr_{S^{\prime} \sim \sD^{m}} \left[ \h
      R_{S^{\prime}}(h_S) > R(h_S ) \right] \right]\\
& \geq \frac{1}{4} \E_{S \sim \sD^{m}} \left[1_{\frac{R(h_S) - \h
      R^\rho_{S}(h_S)}{\sqrt[\alpha]{R(h_S) + \tau}} > \e} \right] &
                                                                 (\text{$\e^{\frac{\alpha}{\alpha - 1}}
                                                                 >
                                                                 \tfrac{1}{m}$
                                                                 and
                                                                 Lemma~\ref{lem:binomial}})\\
& \geq \frac{1}{4} \E_{S \sim \sD^{m}} \left[1_{\sup_{h \in \sH} \frac{R(h) - \h
      R^\rho_{S}(h)}{\sqrt[\alpha]{R(h) + \tau}} > \e + \eta} \right] &
                                                                 (\text{def. of
                                                                 $h_S$})\\
& = \frac{1}{4} \Pr_{S \sim \sD^m} \left[\sup_{h \in \sH} \frac{R(h) - \h
      R^\rho_{S}(h)}{\sqrt[\alpha]{R(h) + \tau}} > \e + \eta \right]. &
                                                                 (\text{def. of
                                                                 expectation})
\end{align*}
Now, since this inequality holds for all $\eta > 0$, we can take the
limit $\eta \to 0$ and use the right-continuity of the cumulative
distribution to obtain
\begin{equation*}
\Pr_{S, S^{\prime} \sim \sD^{m}} \left[\sup_{h \in \sH} \frac{ \h R_{S^{\prime}}(h) - \h R^\rho_{S}(h) }{ \sqrt[\alpha]{\frac{1}{2} [\h R^\rho_{S}(h) + \h R_{S^{\prime}}(h) + \frac{1}{m}]} } > \e
\right] \geq \frac{1}{4} \Pr_{S \sim \sD^{m}} \left[ \sup_{h \in \sH} \frac{R(h) - \h R^\rho_{S}(h)}{\sqrt[\alpha]{R(h) + \tau}} > \e \right],
\end{equation*}
which completes the proof.
\end{proof}

\begin{replemma}{lem:cover}
Fix $\rho \geq 0$ and $1 < \alpha \leq 2$. Then, the
following inequality holds:
\begin{equation*}
\Pr_{S, S^{\prime} \sim \sD^{m}} \left[\sup_{h \in \sH} \frac{ \h
R_{S^{\prime}}(h) - \h R^\rho_{S}(h) }{ \sqrt[\alpha]{\frac{1}{2} [ \h
  R_{S^{\prime}}(h) + \h R^\rho_{S}(h) +\frac{1}{m}] } } > \e
\right] \leq
\Pr_{S, S^{\prime} \sim \sD^{m}} \left[\sup_{g \in \sG} \frac{ \h
R_{S^{\prime}}(g) - \h R_{S}(g) }{ \sqrt[\alpha]{\frac{1}{2} [ \h R_{S^{\prime}}(g) + \h R_{S}(g) +\frac{1}{m}] } } > \e \right].
\end{equation*}
Further when $g(z)= 1_{yh(x) < \rho/2}$, then 
\begin{equation*}
\Pr_{S, S^{\prime} \sim \sD^{m}} \left[\sup_{h \in \sH} \frac{ \h
R_{S^{\prime}}(h) - \h R^\rho_{S}(h) }{ \sqrt[\alpha]{\frac{1}{2} [ \h
  R_{S^{\prime}}(h) + \h R^\rho_{S}(h) +\frac{1}{m}] } } > \e
\right] \leq
\Pr_{S, S^{\prime} \sim \sD^{m}} \left[\sup_{h \in \sC(\sH_\rho,
    \frac{\rho}{2}, S \cup S')} \frac{ \h
R^{\frac{\rho}{2}}_{S^{\prime}}(h) - \h R^{\frac{\rho}{2}}_{S}(h) }{ \sqrt[\alpha]{\frac{1}{2} [ \h R^{\frac{\rho}{2}}_{S^{\prime}}(h) + \h R^{\frac{\rho}{2}}_{S}(h) +\frac{1}{m}] } } > \e \right].
\end{equation*}
\end{replemma}
\begin{proof}
For the first part of the lemma, note that for any given $h$ and the corresponding $g$, and sample $z \in S \cup S'$, using inequalities 
\[
1_{yh(x) <0} \leq g(z) \leq 1_{yh(x) < \rho}.
\]
and taking expectations yields for any sample $S$:
\[
\h R_{S}(h) \leq R_{S}(g) \leq \h R^\rho_{S}(h) .
\]
The result then follows by Lemma~\ref{lem:monotone}.

For the second part of the lemma, observe that restricting the output of $h \in \sH$
to be in $[-\rho, \rho]$ does not change its binary
or margin-loss:  $1_{y h(x) < \rho} = 1_{y h_\rho(x) < \rho}$
and $1_{y h(x) \leq 0} = 1_{y h_\rho(x) \leq 0}$. Thus,
we can write
\[
\Pr_{S, S^{\prime} \sim \sD^{m}} \left[\sup_{h \in \sH} \frac{ \h
R_{S^{\prime}}(h) - \h R^\rho_{S}(h) }{ \sqrt[\alpha]{\frac{1}{2} [ \h R_{S^{\prime}}(h) + \h R^\rho_{S}(h) +\frac{1}{m}] } } > \e \right]
= \Pr_{S, S^{\prime} \sim \sD^{m}} \left[\sup_{h \in \sH_\rho} \frac{ \h
R_{S^{\prime}}(h) - \h R^\rho_{S}(h) }{ \sqrt[\alpha]{\frac{1}{2} [ \h R_{S^{\prime}}(h) + \h R^\rho_{S}(h) +\frac{1}{m}] } } > \e \right].
\]
Now, by definition of $\sC(\sH_\rho, \frac{\rho}{2}, S \cup S')$,
for any $h \in \sH_\rho$ there exists $g \in \sC(\sH_\rho,
\frac{\rho}{2}, S \cup S')$ such that for any  $x \in S \cup S'$,
\[
|g(x) - h(x)| \leq \frac{\rho}{2}.
\]
Thus, for any $y \in \set{-1, +1}$ and $x \in S \cup S'$, we have
$|yg(x) - yh(x)| \leq \frac{\rho}{2}$, which implies:
\[
1_{y h(x) \leq 0} \leq 1_{y g(x) \leq \frac{\rho}{2}} \leq 1_{y h(x) \leq \rho}.
\]
Hence, we have
$\h R_{S^{\prime}}(h) \leq \h R^{\frac{\rho}{2}}_{S^{\prime}}(g)$ and
$\h R^\rho_{S}(h)  \geq \h R^{\frac{\rho}{2}}_{S}(g)$ and,
by the monotonicity properties of Lemma~\ref{lem:monotone}:
\[
\frac{ \h
R_{S^{\prime}}(h) - \h R^\rho_{S}(h) }{ \sqrt[\alpha]{\frac{1}{2} [ \h R_{S^{\prime}}(h) + \h R^\rho_{S}(h) +\frac{1}{m}] } }
\leq 
 \frac{ \h
R^{\frac{\rho}{2}}_{S^{\prime}}(g) - \h R^{\frac{\rho}{2}}_{S}(g) }{ \sqrt[\alpha]{\frac{1}{2} [ \h R^{\frac{\rho}{2}}_{S^{\prime}}(g) + \h R^{\frac{\rho}{2}}_{S}(g) +\frac{1}{m}] } }.
\]
Taking the supremum over both sides yields the result.
\end{proof}

\newpage
\section{Relative deviation margin bounds -- Covering numbers}
\label{app:covering}

\begin{reptheorem}{th:relative} 
Fix $\rho \geq 0$ and $1 < \alpha \leq 2$. Then,
  for any hypothesis set $\sH$ of functions mapping from $\sX$ to
  $\Rset$ and any $\tau > 0$, the following inequality holds:
\begin{align*}
& \Pr_{S \sim \sD^{m}} \left[\sup_{h \in \sH} \frac{R(h) - \h
  R^\rho_{S}(h)}{\sqrt[\alpha]{R(h) + \tau}} > \e \right] 
\leq 4 \, \E_{x_1^{2m} \sim \sD^{2m}}[\cN_\infty(\sH_\rho, \tfrac{\rho}{2}, x_1^{2m})] \ 
\exp \left[ \frac{-m^{\frac{2 (\alpha - 1)}{\alpha}} \e^2}{2^{\frac{\alpha + 2}{\alpha}}} \right].
\end{align*}
\end{reptheorem}
\begin{proof}
Consider first the case where $m \e^{\frac{\alpha}{\alpha - 1}}
\leq 1$. The bound then holds trivially since we have:
\[
4 \exp \left( \frac{-m^{\frac{2 (\alpha - 1)}{\alpha}}
    \e^2}{2^{\frac{\alpha + 2}{\alpha}}} \right)
\geq 4 \exp \left( \frac{-1}{2^{\frac{\alpha + 2}{\alpha}}} \right) > 1.
\]
On the other hand, when $m \e^{\frac{\alpha}{\alpha - 1}} > 1$, 
by Lemmas~\ref{lem:four} and~\ref{lem:cover} we can write:
\[
\Pr_{S \sim \sD^{m}} \left[\sup_{h \in \sH} \frac{R(h) - \h R^\rho_{S}(h)}{\sqrt[\alpha]{R(h) + \tau}} > \e \right]
\leq 4 
\Pr_{S, S^{\prime} \sim \sD^{m}} \left[\sup_{h \in \sC(\sH_\rho,
    \frac{\rho}{2}, S \cup S') } \frac{ \h
R^{\frac{\rho}{2}}_{S^{\prime}}(h) - \h R^{\frac{\rho}{2}}_{S}(h) }{ \sqrt[\alpha]{\frac{1}{2} [ \h R^{\frac{\rho}{2}}_{S^{\prime}}(h) + \h R^{\frac{\rho}{2}}_{S}(h) +\frac{1}{m}] } } > \e \right].
\]
To upper bound the probability that the symmetrized expression is
larger than $\e$, we begin by introducing a vector of Rademacher
random variables
$\sigma = (\sigma_{1}, \sigma_{2}, \ldots, \sigma_{m})$, where
$\sigma_i$s are independent identically distributed random variables
each equally likely to take the value $+1$ or $-1$. Let
$x_1, x_2,\ldots x_m$ be samples in $S$ and
$x_{m+1}, x_{m+2},\ldots x_{2m}$ be samples in $S'$. Using the
shorthands $z = (x, y)$, $g(z) = 1_{yh(x) \leq \frac{\rho}{2}}$, and
$\sG(x_1^{2m}) = \sC(\sH_\rho, \frac{\rho}{2}, S \cup S')$, we can then
write the above quantity as
\begin{align*}
& \Pr_{S, S^{\prime} \sim \sD^{m}} \left[\sup_{h \in \sC(\sH_\rho, \frac{\rho}{2}, S \cup S')} \frac{ \h
R^{\frac{\rho}{2}}_{S^{\prime}}(h) - \h R^{\frac{\rho}{2}}_{S}(h) }{ \sqrt[\alpha]{\frac{1}{2} [ \h R^{\frac{\rho}{2}}_{S^{\prime}}(h) + \h R^{\frac{\rho}{2}}_{S}(h) +\frac{1}{m}] } } > \e \right]\\
& = \Pr_{z_{1}^{2m} \sim \sD^{2m}} \left[\sup_{g \in \sG(x^{2m})}  \frac{ \frac{1}{m}\sum_{i = 1}^{m} (g(z_{m + i}) - g(z_{i})) }{ \sqrt[\alpha]{\frac{1}{2m} [\sum_{i = 1}^{m}
(g(z_{m + i}) + g(z_{i})) + 1]} } > \e \right]\\
& = \Pr_{z_{1}^{2m} \sim \sD^{2m}, \bsigma} \left[\sup_{g \in \sG(x^{2m})} \frac{ \frac{1}{m} \sum_{i = 1}^{m} \sigma_{i} (g(z_{m + i}) - g(z_{i})) }{ \sqrt[\alpha]{\frac{1}{2m} [\sum_{i = 1}^{m}
(g(z_{m + i}) + g(z_{i})) + 1]} } > \e \right]\\
& = \E_{z_{1}^{2m} \sim \sD^{2m}} \left[ \Pr_{\bsigma} \left[\sup_{g \in \sG(x^{2m})} \frac{\frac{1}{m} \sum_{i = 1}^{m} \sigma_{i} (g(z_{m + i}) - g(z_{i})) }{ \sqrt[\alpha]{\frac{1}{2m} [\sum_{i
= 1}^{m} (g(z_{m + i}) + g(z_{i}))+1]} } > \e \, \bigg| \, z_{1}^{2m} \right]\right].
\end{align*}
Now, for a fixed $z_{1}^{2m}$, we have
$\E_{\bsigma}\left[\frac{\frac{1}{m} \sum_{i = 1}^{m} \sigma_{i}
    (g(z_{m + i}) - g(z_{i})) }{ \sqrt[\alpha]{\frac{1}{2m} [\sum_{i =
        1}^{m} (g(z_{m + i}) + g(z_{i}))+1]} }\right] = 0$, thus, by
Hoeffding's inequality, we can write
\begin{align*}
\Pr_{\bsigma} \left[\frac{\frac{1}{m} \sum_{i = 1}^{m} \sigma_{i} (g(z_{m + i}) - g(z_{i})) }{ \sqrt[\alpha]{\frac{1}{2m} [\sum_{i
= 1}^{m} (g(z_{m + i}) + g(z_{i}))+1]} } > \e \, \bigg| \, z_{1}^{2m} \right] 
& \leq \exp \left(  \frac{-[\sum_{i = 1}^{m} (g(z_{m + i}) + g(z_{i})) + 1]^{\frac{2}{\alpha}}
m^{\frac{2 (\alpha - 1)}{\alpha}}\e^{2} }{2^{\frac{\alpha + 2}{\alpha}} \sum_{i = 1}^{m} (g(z_{m + i}) - g(z_{i}))^{2}} \right) \\
& \leq \exp \left(  \frac{-[\sum_{i = 1}^{m} (g(z_{m + i}) + g(z_{i}))]^{\frac{2}{\alpha}} m^{\frac{2 (\alpha - 1)}{\alpha}}\e^{2} }{2^{\frac{\alpha + 2}{\alpha}} \sum_{i = 1}^{m}
(g(z_{m + i}) - g(z_{i}))^{2}} \right).
\end{align*}
Since the variables $g(z_{i})$, $i \in [1, 2m]$, take values in $\set{0, 1}$, we can write
\begin{align*}
\sum_{i = 1}^{m} (g(z_{m + i}) - g(z_{i}))^{2}
& = \sum_{i = 1}^{m} g(z_{m + i}) + g(z_{i}) -2 g(z_{m + i})g(z_{i})\\
& \leq \sum_{i = 1}^{m} g(z_{m + i}) + g(z_{i})\\
& \leq \sum_{i = 1}^{m} \left[ g(z_{m + i}) + g(z_{i}) \right]^{\frac{2}{\alpha}},
\end{align*}
where the last inequality holds since $\alpha \leq 2$ and since
the sum is either zero or greater than or equal to one. In view of this identity, we can write
\begin{equation*}
\Pr_{\bsigma} \left[ \frac{ \frac{1}{m} \sum_{i = 1}^{m} \sigma_{i} (g(z_{m + i}) - g(z_{i})) }{ \sqrt[\alpha]{\frac{1}{2m} [ \sum_{i = 1}^{m} (g(z_{m + i}) + g(z_{i}))]} } > \e \,
\bigg| \, z_{1}^{2m} \right] \leq \exp \left(  \frac{-m^{\frac{2 (\alpha - 1)}{\alpha}}\e^{2} }{2^{\frac{\alpha + 2}{\alpha}}  } \right).
\end{equation*}
The number of such hypotheses is
$\cN_\infty(\sH_\rho, \tfrac{\rho}{2}, x_1^{2m})$, thus, by the union bound, the following holds:
\begin{equation*}
\Pr_{\bsigma} \left[\sup_{g \in \sG(x^{2m})} \frac{ \sum_{i = 1}^{m} \sigma_{i} (g(z_{m + i}) - g(z_{i})) }{ \sqrt[\alpha]{\frac{1}{2} [\sum_{i = 1}^{m} (g(z_{m + i}) + g(z_{i}))]} } > \e \,
\bigg| \, z_{1}^{2m} \right] \leq \cN_\infty(\sH_\rho, \tfrac{\rho}{2}, x_1^{2m}) \exp \left(  \frac{-m^{\frac{2 (\alpha - 1)}{\alpha}}\e^{2} }{2^{\frac{\alpha + 2}{\alpha}}  } \right).
\end{equation*}
The result follows by taking expectations with respect to $z_{1}^{2m}$
and applying the previous lemmas. 
\end{proof}

\newpage
\section{Relative deviation margin bounds -- Rademacher complexity}
\label{app:rademacher}

The following lemma relates the symmetrized expression of
Lemma~\ref{lem:cover} to a Rademacher average quantity.

\begin{replemma}{lem:to_rad}
Fix $1 < \alpha \leq 2$. Then, the
following inequality holds:
\begin{equation*}
\Pr_{S, S^{\prime} \sim \sD^{m}} \left[\sup_{g \in \sG} \frac{ \h
R_{S^{\prime}}(g) - \h R_{S}(g) }{ \sqrt[\alpha]{\frac{1}{2} [ \h R_{S^{\prime}}(g) + \h R_{S}(g) +\frac{1}{m}] } } > \e \right]
\leq 
2 \Pr_{z_{1}^{m} \sim \sD^{m}, \bsigma} \left[\sup_{g \in \sG} \frac{ \frac{1}{m} \sum_{i = 1}^{m} \sigma_{i} g(z_{i}) }{ \sqrt[\alpha]{\frac{1}{m} [\sum_{i = 1}^{m}
(g(z_{i})) + 1]} } > \frac{\e}{2\sqrt{2}} \right].
\end{equation*}
\end{replemma}
\begin{proof}
To upper bound the probability that the symmetrized expression is
larger than $\e$, we begin by introducing a vector of Rademacher
random variables
$\sigma = (\sigma_{1}, \sigma_{2}, \ldots, \sigma_{m})$, where
$\sigma_i$s are independent identically distributed random variables
each equally likely to take the value $+1$ or $-1$.
Let
$z_1, z_2,\ldots z_m$ be samples in $S$ and
$z_{m+1}, z_{m+2},\ldots z_{2m}$ be samples in $S'$. We can then
write the above quantity as
\begin{align*}
& \Pr_{S, S^{\prime} \sim \sD^{m}} \left[\sup_{g \in \sG} \frac{ \h
R_{S^{\prime}}(g) - \h R_{S}(g) }{ \sqrt[\alpha]{\frac{1}{2} [ \h R_{S^{\prime}}(g) + \h R_{S}(g) +\frac{1}{m}] } } > \e \right]
\\
& = \Pr_{z_{1}^{2m} \sim \sD^{2m}} \left[\sup_{g \in \sG}  \frac{ \frac{1}{m}\sum_{i = 1}^{m} (g(z_{m + i}) - g(z_{i})) }{ \sqrt[\alpha]{\frac{1}{2m} [\sum_{i = 1}^{m}
(g(z_{m + i}) + g(z_{i})) + 1]} } > \e \right]\\
& = \Pr_{z_{1}^{2m} \sim \sD^{2m}, \bsigma} \left[\sup_{g \in \sG} \frac{ \frac{1}{m} \sum_{i = 1}^{m} \sigma_{i} (g(z_{m + i}) - g(z_{i})) }{ \sqrt[\alpha]{\frac{1}{2m} [\sum_{i = 1}^{m}
(g(z_{m + i}) + g(z_{i})) + 1]} } > \e \right].
\end{align*}
If $a +b \geq \e$, then either $a \geq \e/2$ or $b \geq \e/2$, hence
\begin{align*}
&\Pr_{z_{1}^{2m} \sim \sD^{2m}, \bsigma} \left[\sup_{g \in \sG} \frac{ \frac{1}{m} \sum_{i = 1}^{m} \sigma_{i} (g(z_{m + i}) - g(z_{i})) }{ \sqrt[\alpha]{\frac{1}{2m} [\sum_{i = 1}^{m}
(g(z_{m + i}) + g(z_{i})) + 1]} } > \e \right] \\
& \leq \Pr_{z_{1}^{2m} \sim \sD^{2m}, \bsigma} \left[\sup_{g \in \sG} \frac{ \frac{1}{m} \sum_{i = 1}^{m} \sigma_{i} (g(z_{m + i})) }{ \sqrt[\alpha]{\frac{1}{2m} [\sum_{i = 1}^{m}
(g(z_{m + i}) + g(z_{i})) + 1]} } > \frac{\e}{2} \right] \\
& + \Pr_{z_{1}^{2m} \sim \sD^{2m}, \bsigma} \left[\sup_{g \in \sG} \frac{ \frac{1}{m} \sum_{i = 1}^{m} \sigma_{i} ( - g(z_{i})) }{ \sqrt[\alpha]{\frac{1}{2m} [\sum_{i = 1}^{m}
(g(z_{m + i}) + g(z_{i})) + 1]} } > \frac{\e}{2} \right] \\
&= 2 \Pr_{z_{1}^{2m} \sim \sD^{2m}, \bsigma} \left[\sup_{g \in \sG} \frac{ \frac{1}{m} \sum_{i = 1}^{m} \sigma_{i} g(z_{i}) }{ \sqrt[\alpha]{\frac{1}{2m} [\sum_{i = 1}^{m}
(g(z_{m + i}) + g(z_{i})) + 1]} } > \frac{\e}{2} \right]  \\
&\leq  2 \Pr_{z_{1}^{2m} \sim \sD^{2m}, \bsigma} \left[\sup_{g \in \sG} \frac{ \frac{1}{m} \sum_{i = 1}^{m} \sigma_{i} g(z_{i}) }{ \sqrt[\alpha]{\frac{1}{2m} [\sum_{i = 1}^{m}
(g(z_{i})) + 1]} } > \frac{\e}{2} \right]  \\
& \leq 2 \Pr_{z_{1}^{2m} \sim \sD^{2m}, \bsigma} \left[\sup_{g \in \sG} \frac{ \frac{1}{m} \sum_{i = 1}^{m} \sigma_{i} g(z_{i}) }{ \sqrt[\alpha]{\frac{1}{m} [\sum_{i = 1}^{m}
(g(z_{i})) + 1]} } > \frac{\e}{2\sqrt{2}} \right] \\
& = 2 \Pr_{z_{1}^{m} \sim \sD^{m}, \bsigma} \left[\sup_{g \in \sG} \frac{ \frac{1}{m} \sum_{i = 1}^{m} \sigma_{i} g(z_{i}) }{ \sqrt[\alpha]{\frac{1}{m} [\sum_{i = 1}^{m}
(g(z_{i})) + 1]} } > \frac{\e}{2\sqrt{2}} \right],
\end{align*}
where the penultimate inequality follow by observing that if $a/c \geq \e$, then $a/c' \geq \e$, for all $c' \leq c$ and the last inequality follows by observing $\alpha \geq 1$.
\end{proof}

We will use the following bounded difference inequality \citep[Theorem 3.18]{VanHandel2016}, which provide us with
a finer tool that McDiarmid's inequality.

      \begin{lemma}[\citep{VanHandel2016}]
      \label{lem:ramon}
      Let $f(x_1, x_2, \ldots, x_n)$ be a function of $n$ independent samples $x_1, x_2, \ldots x_n$. 
      Let 
      \[
      c_i = \max_{x'_i} f(x_1, x_2, \ldots, x_n)
      - f(x_1, x_2, \ldots, x_{i-1}, x'_i, x_{i+1}, \ldots, x_n).
      \]
      Then,
      \[
      \Pr\left(f(x_1, x_2, \ldots, x_n) \geq \E[f(x_1, x_2, \ldots, x_n)] + \epsilon \right)
      \leq \exp \left( - \frac{\epsilon^2}{4 \sum_i c^2_i} \right).
      \]
      \end{lemma}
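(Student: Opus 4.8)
The plan is to prove this by the standard route to bounded-difference concentration: build the Doob (exposure) martingale of $f$, bound its increments using the constants $c_i$, derive a sub-Gaussian moment generating function estimate by iterated conditioning, and finish with a Chernoff bound. Concretely, let $\mathcal F_k = \sigma(x_1,\dots,x_k)$ and set $Z_k = \E[f(x_1,\dots,x_n)\mid \mathcal F_k]$ for $k=0,\dots,n$, so that $Z_0=\E[f]$ and $Z_n=f(x_1,\dots,x_n)$. By the tower rule $(Z_k)$ is an $(\mathcal F_k)$-martingale; writing $D_k=Z_k-Z_{k-1}$ we have $\E[D_k\mid\mathcal F_{k-1}]=0$ and $f-\E[f]=\sum_{k=1}^n D_k$.

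The first real step is to bound the increments. Because $x_1,\dots,x_n$ are independent, $Z_k$ is a function $\psi_k(x_1,\dots,x_k)$ of the first $k$ coordinates obtained by integrating $f$ over the future coordinates $x_{k+1},\dots,x_n$, and $Z_{k-1}=\E_{x_k}[\psi_k(x_1,\dots,x_{k-1},x_k)]$. The hypothesis on $f$ says that replacing its $k$-th coordinate changes its value by at most $c_k$ (in either direction, since the bound holds for every choice of the substituted value); integrating over the future coordinates, the same holds for $\psi_k$, so for all $u,v$ one has $|\psi_k(x_1,\dots,x_{k-1},u)-\psi_k(x_1,\dots,x_{k-1},v)|\le c_k$. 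Hence, conditionally on $\mathcal F_{k-1}$, the centered variable $D_k$ lies in an interval of length at most $c_k$. By Hoeffding's lemma (or directly, from convexity of $t\mapsto e^{\lambda t}$ together with $\E[D_k\mid\mathcal F_{k-1}]=0$), $\E[e^{\lambda D_k}\mid\mathcal F_{k-1}]\le \exp(\lambda^2 c_k^2)$ for all $\lambda\in\Rset$ (the sharp constant is smaller, which would only strengthen the conclusion).

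Finally, I would multiply these conditional estimates along the martingale: conditioning on $\mathcal F_{n-1},\mathcal F_{n-2},\dots$ in turn,
\[
\E\bigl[e^{\lambda(f-\E[f])}\bigr]=\E\Bigl[e^{\lambda\sum_{k=1}^n D_k}\Bigr]\le \exp\Bigl(\lambda^2\sum_{k=1}^n c_k^2\Bigr),
\]
and then apply Markov's inequality to $e^{\lambda(f-\E[f])}$ with $\lambda>0$, giving $\Pr(f-\E[f]\ge\epsilon)\le\exp\bigl(-\lambda\epsilon+\lambda^2\sum_k c_k^2\bigr)$. Choosing $\lambda=\epsilon/(2\sum_k c_k^2)$ yields $\exp\bigl(-\epsilon^2/(4\sum_k c_k^2)\bigr)$, as claimed. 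There is no deep obstacle here; the only points requiring care are the transfer of the single-coordinate modulus from $f$ to each martingale increment $D_k$ (which is where independence of the $x_i$ enters, via integration over the remaining coordinates) and the correct order of conditioning when collapsing the product of conditional MGFs — after that it is the textbook Azuma--Hoeffding/Chernoff computation.
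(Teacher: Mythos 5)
Your argument proves a different (weaker) statement than the one the paper needs. You read the $c_i$ as deterministic, a priori bounds on the oscillation of $f$ in the $i$-th coordinate; under that reading your Doob-martingale/Hoeffding/Chernoff computation is just the textbook Azuma--McDiarmid proof (with a deliberately lax constant), and the arithmetic indeed lands on $\exp(-\epsilon^2/(4\sum_i c_i^2))$. But in the cited result (Van Handel, Theorem 3.18) and in the way the paper uses it, $c_i$ is the \emph{one-sided difference evaluated at the actual random configuration}, $c_i(x)=f(x)-\inf_{x_i'}f(x_1,\dots,x_i',\dots,x_n)$, and the quantity controlling the exponent is the supremum over configurations of the \emph{sum} $\sum_i c_i(x)^2$, not the sum of per-coordinate worst cases. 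That distinction is exactly why the paper calls it ``finer than McDiarmid'': in the proof of Lemma~\ref{lem:indic}, $f(\bsigma)=\sup_{g\in\sG_k(z_1^m)}\frac1m\sum_i\sigma_i g(z_i)$, the change in coordinate $j$ at the point $\bsigma$ is $2g_{\bsigma}(z_j)/m$, and $\sum_j c_j(\bsigma)^2\lesssim 2^{k+3}/m^2$ uniformly in $\bsigma$; the per-coordinate worst-case constants would instead be of order $2/m$ for every $j$, giving $\sum_j c_j^2\approx 4/m$ and destroying the $2^k$ gain on which the whole peeling argument rests. So your proof establishes only McDiarmid's inequality (with a worse constant), not the lemma as used.

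The precise point where your route cannot be repaired is the step ``conditionally on $\mathcal{F}_{k-1}$, $D_k$ lies in an interval of length at most $c_k$'': with the random $c_k$ of the lemma this is not a legitimate application of Hoeffding's lemma, because $c_k$ depends on $x_k$ and on the future coordinates $x_{k+1},\dots,x_n$ and is therefore not $\mathcal{F}_{k-1}$-measurable; to make the bound adapted you must take a supremum over the unrevealed coordinates, which collapses the estimate back to the ``sum of sups'' (McDiarmid) rather than the ``sup of the sum.'' The inequality actually cited is obtained by a different mechanism: tensorization of entropy together with a modified log-Sobolev (one-sided Hoeffding-type) estimate and the Herbst argument, which is also why the one-sided differences come with the constant $4$ rather than the symmetric $2$. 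A correct write-up should either reproduce that entropy-method proof or state explicitly that the bound holds with $\sup_{x}\sum_i c_i(x)^2$ and cite it as such; the martingale argument as written does not reach it.
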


Using the above inequality and a peeling argument, we show the following upper bound expressed in terms of Rademacher complexities.
\begin{replemma}{lem:indic}
Fix $1 < \alpha \leq 2 $ and $z_1^m \in \sZ^m$. Then, the following
inequality holds:
\[
\Pr_{\bsigma} \left[\sup_{g \in \sG} \frac{ \frac{1}{m} \sum_{i = 1}^{m} \sigma_{i} g(z_{i}) }{ \sqrt[\alpha]{\frac{1}{m} [\sum_{i = 1}^{m}
(g(z_{i})) + 1]} } > {\e} \mid z^{m}   \right] \leq 2 \sum_{k =
0}^{\lfloor \log_2 m \rfloor}
\exp \left( \frac{m^2\h \R_m^2(\sG_k(z_1^m))}{2^{k+5}}  - \frac{\e^2}{64\frac{2^{k(1-2/\alpha)}}{m^{2-2/\alpha}}} \right)  1_{\e \leq 2
\left(\frac{2^{k}}{m} \right)^{1-1/\alpha}}.
\]
\end{replemma}
\begin{proof}
By definition of $\sG_k$, the following inequality holds:
\[
\sup_{g \in \sG_k(z_1^m)} \frac{\frac{1}{m} \sum_{i = 1}^{m} \sigma_{i} g(z_{i})}{\sqrt[\alpha]{\frac{1}{m} [\sum_{i = 1}^{m}
(g(z_{i})) + 1]}} \leq \frac{\frac{2^{k+1}}{m}}{\sqrt[\alpha]{\frac{1}{m} [\sum_{i = 1}^{m}
(g(z_{i})) + 1]}} \leq \frac{\frac{2^{k+1}}{m}}{\left(\frac{2^k}{m}\right)^{1/\alpha}}.
\]
Thus, for $\e > 2
\left(\frac{2^{k}}{m} \right)^{1-1/\alpha}$, the left-hand side probability is
zero. This leads to the indicator function factor in the right-hand
side of the expression. We now prove the non-indicator part.

By the union bound,
\begin{align*}
\Pr_{\bsigma} \left[\sup_{g \in \sG} \frac{ \frac{1}{m} \sum_{i = 1}^{m} \sigma_{i} g(z_{i}) }{ \sqrt[\alpha]{\frac{1}{m} [\sum_{i = 1}^{m}
(g(z_{i})) + 1]} } > \e \, \Bigg\mid \, z^{m} \right]
& = 
\Pr_{\bsigma} \left[\sup_{k} \sup_{g \in \sG_k(z_1^m)} \frac{ \frac{1}{m} \sum_{i = 1}^{m} \sigma_{i} g(z_{i}) }{ \sqrt[\alpha]{\frac{1}{m} [\sum_{i = 1}^{m}
(g(z_{i})) + 1]} } > \e \, \Bigg\mid \, z^{m} \right] \\
& \leq  
\sum_k \Pr_{\bsigma} \left[ \sup_{g \in \sG_k(z_1^m)} \frac{ \frac{1}{m} \sum_{i = 1}^{m} \sigma_{i} g(z_{i}) }{ \sqrt[\alpha]{\frac{1}{m} [\sum_{i = 1}^{m}
(g(z_{i})) + 1]} } > \e \, \Bigg\mid \, z^{m} \right] \\
& \leq  
\sum_k \Pr_{\bsigma} \left[ \sup_{g \in \sG_k(z_1^m)} \frac{ \frac{1}{m} |\sum_{i = 1}^{m} \sigma_{i} g(z_{i}) | }{ \sqrt[\alpha]{\frac{1}{m} [\sum_{i = 1}^{m}
(g(z_{i})) + 1]} } > \e \, \Bigg\mid \, z^{m} \right] \\
& \stackrel{(a)}{\leq}
\sum_k  \Pr_{\bsigma} \left[ \sup_{g \in \sG_k(z_1^m)} { \frac{1}{m} |\sum_{i = 1}^{m} \sigma_{i} g(z_{i}) | } > \e \sqrt[\alpha]{\frac{2^k}{m}} \, \Bigg\mid \, z^{m} \right] \\
& \stackrel{(b)}{\leq} 
\sum_k 2\Pr_{\bsigma} \left[ \sup_{g \in \sG_k(z_1^m)} { \frac{1}{m} \sum_{i = 1}^{m} \sigma_{i} g(z_{i})  } > \e \sqrt[\alpha]{\frac{2^k}{m}} \, \Bigg\mid \, z^{m} \right],
\end{align*}
where the $(a)$ follows by observing that for all $g\in \sG_k$, $[\sum_{i = 1}^{m}
(g(z_{i})) + 1] \geq 2^{k}/m$ and $(b)$ follows by observing that for a particular $\bsigma$, $\frac{1}{m} \sum_{i = 1}^{m} \sigma_{i} g(z_{i}) < \e \sqrt[\alpha]{\frac{2^k}{m}} $, then for $\bsigma' = -\bsigma$, the value would be 
$\frac{1}{m} \sum_{i = 1}^{m} \sigma'_{i} g(z_{i}) > \e \sqrt[\alpha]{\frac{2^k}{m}} $.
Hence it suffices to bound
\[
\Pr_{\bsigma} \left[ \sup_{g \in \sG_k(z_1^m)} \frac{1}{m} \sum_{i = 1}^{m} \sigma_{i} g(z_{i}) > \e\sqrt[\alpha]{\frac{2^{k}}{m}} \, \Bigg\mid \, z^{m} \right],
\]
for a given $k$. 
We will apply the bounded difference inequality
(\citep[Theorem 3.18]{VanHandel2016}), which is a finer concentration
bound than McDiarmid's inequality in this context, to the random
variable
$\sup_{g \in \sG_k(z_1^m)} \frac{1}{m} \sum_{i = 1}^{m} \sigma_{i}
g(z_{i})$. For any $\bsigma$, let $g_{\bsigma}$ denote the function in
$\sG_k(z_1^m)$ that achieves the supremum. For simplicity, we assume that the supremum can be achieved. The proof can be extended to the case when its not achieved. Then, for any two vectors
of Rademacher variables $\bsigma$ and $\bsigma'$ that differ only in
the $j^{\text{th}}$ coordinate, the difference of suprema can be
bounded as follows:
\begin{align*}
  \frac{1}{m} \sum_{i = 1}^{m} \sigma_{i} g_{\bsigma}(z_{i})
  -  \frac{1}{m} \sum_{i = 1}^{m} \sigma'_{i} g_{\bsigma'}(z_{i})
  & \leq  \frac{1}{m} \sum_{i = 1}^{m} \sigma_{i} g_{\bsigma}(z_{i})
  -  \frac{1}{m} \sum_{i = 1}^{m} \sigma'_{i} g_{\bsigma}(z_{i}) \\
  & = \frac{1}{m} (\sigma_j - \sigma'_j) g_\bsigma(z_j)\\
  & \leq \frac{2g_{\bsigma}(z_j)}{m}.
\end{align*}
The sum of the squares of the changes is therefore bounded by
\[
\frac{4}{m^2} \sum_{i = 1}^m g^2_{\bsigma}(z_i)
\leq \frac{4}{m^2} \sup_{g \in \sG_k(z_1^m)} \sum_{i = 1}^m g^2(z_i) 
\leq \frac{4}{m^2} \sup_{g \in \sG_k(z_1^m)} \sum_{i = 1}^m g(z_i) 
\leq \frac{4}{m^2} m 2^{k + 1} 
= \frac{2^{k + 3}}{m}.
\]
Since
$\E_\bsigma \left[ \sup_{g \in \sG_k(z_1^m)} \frac{1}{m} \sum_{i =
    1}^{m} \sigma_{i} g(z_{i}) \right] = \h \R_{z_1^m}(\sG_k(z_1^m))$, by the
Lemma~\ref{lem:ramon}, for
$\e \geq \frac{\h \R_{z_1^m} (\sG_k(z_1^m))}{\sqrt[\alpha] {2^k/m}}$, the
following holds:
\begin{align*}
& \Pr_{\bsigma} \left[ \sup_{g \in \sG_k(z_1^m)} \frac{1}{m} \sum_{i =
  1}^{m} \sigma_{i} g(z_{i}) > \e\sqrt[\alpha]{\frac{2^{k}}{m}} \,
  \Bigg\mid \, z^{m} \right]\\
& = 
\Pr_{\bsigma} \left[ \sup_{g \in \sG_k(z_1^m)} 
\frac{1}{m} \sum_{i = 1}^{m} \sigma_{i} g(z_{i}) - \h \R_m(\sG_k(z_1^m)) 
> \e \sqrt[\alpha]{\frac{2^{k}}{m}} - \h \R_m(\sG_k(z_1^m)) \, \Bigg\mid \, z^{m} \right]\\
& \leq \exp \left( -\frac{m \left[ \e \sqrt[\alpha]{\frac{2^{k}}{m}} -
      \h \R_{z_1^m} (\sG_k(z_1^m))\right]^2}{2^{k + 5}} \right)
= \exp \left( - \frac{\left(\e - \frac{\h \R_{z_1^m} (\sG_k(z_1^m))}{\sqrt[\alpha]{\frac{2^{k}}{m}}}\right)^2}{32\frac{2^{k(1-2/\alpha)}}{m^{2-2/\alpha}}} \right).
\end{align*}
Since, $-(\e - a)^2 \leq a^2 - \e^2/2$, for $\e \geq
\frac{\h \R_m(\sG_k(z_1^m))}{\sqrt[\alpha] {2^k/m}}$, we can write:
\begin{align*}
\Pr_{\bsigma} \left[ \sup_{g \in \sG_k(z_1^m)} \frac{1}{m} \sum_{i = 1}^{m} \sigma_{i} g(z_{i}) > \e\sqrt[\alpha]{\frac{2^{k}}{m}} \, \Bigg\mid \, z^{m} \right]
& \leq \exp \left(  \frac{\left( \frac{\h \R_m(\sG_k(z_1^m))}{\sqrt[\alpha] {2^k/m}}\right)^2}{32\frac{2^{k(1-2/\alpha)}}{m^{2-2/\alpha}}} \right) \cdot \exp \left( - \frac{\e^2}{64\frac{2^{k(1-2/\alpha)}}{m^{2-2/\alpha}}} \right) \\
& = \exp \left( \frac{m^2\h \R_m^2(\sG_k(z_1^m))}{2^{k+5}} \right) \cdot \exp \left( - \frac{\e^2}{64\frac{2^{k(1-2/\alpha)}}{m^{2-2/\alpha}}} \right).
\end{align*}
For $\e < \frac{\h \R_m(\sG_k(z_1^m))}{\sqrt[\alpha] {2^k/m}}$, the bound holds trivially since the right-hand side is at most one.
\end{proof}

The following is a margin-based relative deviation bound expressed in terms of
Rademacher complexities.

\begin{reptheorem}{thm:rad}
Fix $1 < \alpha \leq 2$. 
Then, with probability at least $1 - \delta$, 
for all hypothesis $h \in \sH$, the following inequality holds:
\[
R(h) - \h R^\rho_{S}(h) 
\leq  16\sqrt{2}  \sqrt[\alpha]{R(h)} \left( \frac{\A_m(\sG) + \log \log m + \log \frac{16}{\delta}}{m}\right)^{1-1/\alpha}.
\]
\end{reptheorem}
\begin{proof}
Let $\A_m^k(\sG)$ be the $k$-peeling-based Rademacher complexity of $\sG$ defined as follows:
\[
\A_m^k(\sG) = \log \E_{z_1^m} \left[\exp \left( \frac{m^2\h \R_m^2(\sG_k(z_1^m))}{2^{k+5}} \right) \right].
\]
Combining Lemmas~\ref{lem:four}, \ref{lem:cover}, \ref{lem:to_rad},
and \ref{lem:indic} yields:
\begin{align*}
& \Pr_{S \sim \sD^{m}} \left[\sup_{h \in \sH} \frac{R(h) - \h
  R^\rho_{S}(h)}{\sqrt[\alpha]{R(h) + \tau}} > \e \right]  \\
 & \leq 8 \Pr_{z_{1}^{m} \sim \sD^{m}, \bsigma} \left[\sup_{g \in \sG} \frac{ \frac{1}{m} \sum_{i = 1}^{m} \sigma_{i} g(z_{i}) }{ \sqrt[\alpha]{\frac{1}{m} [\sum_{i = 1}^{m}
(g(z_{i})) + 1]} } > \frac{\e}{2\sqrt{2}} \right] \\
& = 8 \E_{z^m \sim \sD^m} \left[\Pr_{\bsigma} \left[\sup_{g \in \sG} \frac{ \frac{1}{m} \sum_{i = 1}^{m} \sigma_{i} g(z_{i}) }{ \sqrt[\alpha]{\frac{1}{m} [\sum_{i = 1}^{m}
(g(z_{i})) + 1]} } > \frac{\e}{2\sqrt{2}} \, \Bigg\mid \, z^{m} \right]\right] \\
& \leq 16 \E_{z^m \sim \sD^m} \left[ \sum_k
\exp \left( \frac{m^2\h \R_m^2(\sG_k(z_1^m))}{2^{k+5}} \right) \cdot \exp \left( - \frac{\e^2}{512\frac{2^{k(1-2/\alpha)}}{m^{2-2/\alpha}}} \right)  1_{\e \leq 4\sqrt{2}
\left(\frac{2^{k}}{m} \right)^{1-1/\alpha}} \right] \\
&  = 16  \sum_k
\E_{z^m \sim \sD^m} \left[ \exp \left( \frac{m^2\h \R_m^2(\sG_k(z_1^m))}{2^{k+5}} \right)\right] \cdot \exp \left( - \frac{\e^2}{512\frac{2^{k(1-2/\alpha)}}{m^{2-2/\alpha}}} \right)  1_{\e \leq 4\sqrt{2}
\left(\frac{2^{k}}{m} \right)^{1-1/\alpha}}  \\
&  \leq  16  (\log_2 m )
\E_{z^m \sim \sD^m} \left[ \exp \left( \frac{m^2\h \R_m^2(\sG_k(z_1^m))}{2^{k+5}} \right)\right] \cdot \exp \left( - \frac{\e^2}{512\frac{2^{k(1-2/\alpha)}}{m^{2-2/\alpha}}} \right)  1_{\e \leq 4\sqrt{2}
\left(\frac{2^{k}}{m} \right)^{1-1/\alpha}}  \\
& \leq   16  (\log_2 m)   \sup_k e^{\A_m^k(\sG)} \cdot \exp \left( - \frac{\e^2}{512\frac{2^{k(1-2/\alpha)}}{m^{2-2/\alpha}}} \right)  1_{\e \leq 4\sqrt{2}
\left(\frac{2^{k}}{m} \right)^{1-1/\alpha}}  \\
\end{align*}
Hence, with probability at least $1 - \delta$,
\[
\sup_{h \in \sH} \frac{R(h) - \h
  R^\rho_{S}(h)}{\sqrt[\alpha]{R(h) + \tau}}
  \leq \sup_k \min \left( 16\sqrt{2} \frac{2^{k(1/2-1/\alpha)}}{m^{1-1/\alpha}} \sqrt{\A_m^k(\sG) + \log \log m + \log \frac{16}{\delta}} , 4\sqrt{2}
\left(\frac{2^{k}}{m} \right)^{1-1/\alpha} \right).
\]
For $\alpha \leq 2$, the first term in the minimum decreases with $k$ and the second term
increases with $k$. Let $k_0$ be such that 
\[
2^{k_0} = 16 \left(\sup_k \A_m^k(\sG) + \log \log m + \log \frac{16}{\delta}\right) = 16 \left(\A_m(\sG) + \log \log m + \log \frac{16}{\delta} \right).
\]
Then for any $k$, 
\begin{align*}
&  \sup_k \min \left( 16\sqrt{2} \frac{2^{k(1/2-1/\alpha)}}{m^{1-1/\alpha}} \sqrt{\A_m^k(\sG) + \log \log m + \log \frac{16}{\delta}} , 4\sqrt{2}
\left(\frac{2^{k}}{m} \right)^{1-1/\alpha} \right) \\ 
& \leq \sup_k \max \left( 16\sqrt{2} \frac{2^{k_0(1/2-1/\alpha)}}{m^{1-1/\alpha}} \sqrt{\A_m^k(\sG) + \log \log m + \log \frac{16}{\delta}}, 4 \sqrt{2}
\left(\frac{2^{k_0}}{m} \right)^{1-1/\alpha} \right)\\
& \leq \max \left(16 \sqrt{2} \frac{2^{k_0(1/2-1/\alpha)}}{m^{1-1/\alpha}} \sqrt{\A_m(\sG) + \log \log m + \log \frac{16}{\delta}}, 4 \sqrt{2}
\left(\frac{2^{k_0}}{m} \right)^{1-1/\alpha} \right) \\
& \leq 4\sqrt{2} \left(\frac{2^{k_0}}{m} \right)^{1-1/\alpha}  \\
& \leq  16\sqrt{2} \left( \frac{\A_m(\sG) + \log \log m + \log \frac{16}{\delta}}{m} \right)^{1-1/\alpha}.
\end{align*}
Rearranging and taking the limit as $\tau \to 0$ yields the result.
\end{proof}

\begin{lemma}
\label{lem:ineq}
For any $x, y, z \geq 0$, if 
$(x - y \sqrt[\alpha]{x} \leq z)$,
then the following inequality holds:
\[
x \leq z +  2y \sqrt[\alpha]{z} + (2 y)^{\frac{\alpha}{\alpha-1}}.
\]
\end{lemma}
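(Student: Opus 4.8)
The plan is to prove Lemma~\ref{lem:ineq} by an elementary two-case analysis, starting from the equivalent form of the hypothesis, $x \le z + y\sqrt[\alpha]{x}$. This inequality is self-referential in $x$, and the case split will be chosen precisely to break that recursion while keeping the coefficient of $z$ equal to one in the final bound.

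First I would treat the case $y\sqrt[\alpha]{x} \le z$. Then $x \le z + y\sqrt[\alpha]{x} \le 2z$, so $\sqrt[\alpha]{x} \le (2z)^{1/\alpha} \le 2\sqrt[\alpha]{z}$, where the last step uses $2^{1/\alpha} \le 2$, valid since $\alpha \ge 1$. Substituting this bound on $\sqrt[\alpha]{x}$ back into $x \le z + y\sqrt[\alpha]{x}$ yields $x \le z + 2y\sqrt[\alpha]{z}$, which is at most the claimed bound because $(2y)^{\alpha/(\alpha-1)} \ge 0$. Next I would treat the complementary case $y\sqrt[\alpha]{x} > z$. Then $x \le z + y\sqrt[\alpha]{x} < 2y\sqrt[\alpha]{x}$; if $x = 0$ the conclusion is immediate, and otherwise dividing by $\sqrt[\alpha]{x} > 0$ gives $x^{(\alpha-1)/\alpha} < 2y$. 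Since $\alpha > 1$, the exponent $(\alpha-1)/\alpha$ is positive, so raising both sides to the power $\alpha/(\alpha-1)$ gives $x < (2y)^{\alpha/(\alpha-1)}$, which is again at most the claimed bound. As the two cases are exhaustive, this completes the argument.

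The only genuine decision point, and the one place where a naive attempt fails, is the choice of case split: the more natural dichotomy ``$x \le 2z$ or not'' does not suffice, since $2z$ can exceed $z + 2y\sqrt[\alpha]{z} + (2y)^{\alpha/(\alpha-1)}$ when $z$ is large relative to $y$. Splitting instead on whether $y\sqrt[\alpha]{x} \le z$ is the right move, because in that branch it simultaneously bounds $x$ by $2z$ \emph{and} $\sqrt[\alpha]{x}$ by $2\sqrt[\alpha]{z}$, which lets the self-referential term be reabsorbed without losing the unit coefficient on $z$. Everything else is routine manipulation of powers, using $\alpha \in (1,2]$ only through $2^{1/\alpha} \le 2$ and $(\alpha-1)/\alpha > 0$.
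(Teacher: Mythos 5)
Your proof is correct and follows essentially the same route as the paper's: both hinge on the case split $y\sqrt[\alpha]{x}\le z$ versus $y\sqrt[\alpha]{x}>z$ applied to $x \le z + y\sqrt[\alpha]{x}$, and both use only $2^{1/\alpha}\le 2$. The only difference is presentational: the paper combines the two cases into $x \le 2\max(z,(2y)^{\alpha/(\alpha-1)})$ and re-substitutes with subadditivity of $t\mapsto\sqrt[\alpha]{t}$, whereas you resolve each case directly, which in fact yields the slightly sharper bound $x \le \max\bigl(z+2y\sqrt[\alpha]{z},\,(2y)^{\alpha/(\alpha-1)}\bigr)$.
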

\begin{proof}
In view of the assumption, we can write:
\[
x \leq z + y \sqrt[\alpha]{x} \leq 2 \max(z,    y \sqrt[\alpha]{x}),
\]
If $z \geq y \sqrt[\alpha]{x}$, then $x \leq 2z$. if
$z \leq y \sqrt[\alpha]{x}$, then $x \leq (2
y)^{\alpha/(\alpha-1)}$. This shows that we have
$x \leq 2 \max(z, (2y)^{1-1/\alpha})$. Plugging in the right-hand side
in the previous inequality and using the sub-additivity of
$x \mapsto \sqrt[\alpha]{x}$ gives:
\[
x \leq z + y \sqrt[\alpha]{x}
\leq  z + y \sqrt[\alpha]{2 \max(z, (2y)^{\alpha/(\alpha-1)})} \leq
z + y \sqrt[\alpha]{2z} + y^{\frac{\alpha}{\alpha - 1}} 2^{\frac{1}{\alpha} + \frac{1}{\alpha - 1}}.
\]
The lemma follows by observing that $2^{\frac{1}{\alpha}} \leq 2$ for $\alpha \geq 1$.
\end{proof}

\begin{repcorollary}{cor:all_alpha}
Let $\sG$ be defined as above. Then, with probability at least $1 - \delta$, for all
hypothesis $h \in \sH$ and $\alpha \in (0, 1]$,
\[
R(h) - \h
  R^\rho_{S}(h) \leq  32\sqrt{2}  \sqrt[\alpha]{R(h) }  \left(  \frac{\A_m(\sG) + \log \log m + \log \frac{16}{\delta}}{m}\right)^{1-1/\alpha}.
\]
\end{repcorollary}
\begin{proof}
By Theorem~\ref{thm:rad},
\[
R(h) - \h
  R^\rho_{S}(h) \leq  16  \sqrt[\alpha]{R(h) }  \left(  \frac{\A_m(\sG) + \log \log m + \log \frac{16}{\delta}}{m}\right)^{1-1/\alpha}.
\]
Let $B = \A_m(\sG) + \log \log m + \log \frac{16}{\delta}$.  Let
$\alpha_k = 1+e^{-\epsilon k}$. Let $\delta_k = \delta/k^2$. Then, by
the union bound, for all $\alpha_k$, with probability at least 
$1 - \delta$,
\[
R(h) - \h
  R^\rho_{S}(h) \leq  16 \sqrt{2} \sqrt[\alpha_k]{R(h) }  \left(  \frac{B + 2\log k}{m}\right)^{1-1/\alpha_k}.
\]
Let $\alpha_{k} \geq \alpha \geq \alpha_{k+1}$. 
Then $(k+1) \leq  \frac{1}{\epsilon} \log \frac{1}{\alpha - 1}$.
Then,
\begin{align*}
&  \sqrt[\alpha]{R(h) }  \left(  \frac{B +\log \frac{1}{\alpha - 1} }{m}\right)^{1-1/\alpha}  \\
&  \sqrt[\alpha]{R(h) }  \left(  \frac{B + 2\log (k+1)}{m}\right)^{1-1/\alpha} \\
& \geq \min \left( \sqrt[\alpha_k]{R(h) }  \left(  \frac{B + 2\log (k+1)}{m}\right)^{1-1/\alpha_k}, \sqrt[\alpha_{k+1}]{R(h) }  \left(  \frac{B + 2\log (k+1)}{m}\right)^{1-1/\alpha_{k+1}} \right).
\end{align*}
Hence, with probability at least $1 - \delta$, for all $\alpha \in (1,
2]$, 
\[
R(h) - \h
  R^\rho_{S}(h) \leq  16 \sqrt{2} \sqrt[\alpha]{R(h)}  \left(  \frac{B + 2 \log \frac{1}{\alpha - 1}}{m}\right)^{1-1/\alpha}.
\]
The lemma follows by observing that 
\[
 \left(  \frac{B + 2\log \frac{1}{\alpha - 1}}{m}\right)^{1-1/\alpha}
 \leq  \left(  \frac{B}{m}\right)^{1-1/\alpha} +  \left( 2 \frac{\log \frac{1}{\alpha - 1}}{m}\right)^{1-1/\alpha}
 \leq \left(  \frac{B}{m}\right)^{1-1/\alpha} +  \left(  \frac{1}{m}\right)^{1-1/\alpha} \leq 2\left(  \frac{B}{m}\right)^{1-1/\alpha} .
\]
\end{proof}

\newpage
\section{Upper bounds on peeling-based Rademacher complexity}
\label{app:peel}

\begin{replemma}{lem:card}
For any class $\sG$,
\[
\A_m(\sG) \leq \frac{1}{8} \log \E_{z_1^m} [\Sm_{\sG}({z_{1}^{m}}].
\]
\end{replemma}
\begin{proof}
By definition,
\[
\A_m(\sG) = \sup_k \log \E_{z_1^m} \left[\exp \left( \frac{m^2\h \R_m^2(\sG_k(z_1^m))}{2^{k+5}} \right) \right].
\]
For any $g \in \sG_k(z_1^m)$, since $g$ takes values in $[0, 1]$, we have:
\[
\sum_{i = 1}^m g^2(z_i) \leq \sum_{i = 1}^m g(z_i) \leq \frac{2^{k + 1}}{m}.
\]
Thus, by Massart's lemma and Jensen's inequality, the following
inequality holds:
\[
  \h \R_m(\sG_k(z_1^m)) \leq \sqrt{2 \log \E_{z_1^m} [|\sG_k(z_1^m)|]}
  \sqrt{\frac{2^{k+1}}{m}} \leq \sqrt{2 \log \E_{z_1^m} [\Sm_{\sG}({z_{1}^{m}})]}
  \sqrt{\frac{2^{k+1}}{m^2}}.
\]
Hence,
\[
\A_m(\sG) \leq  \sup_k \frac{1}{2^3} \log \E_{z_1^m} [\Sm_{\sG}({z_{1}^{m}}] 
= \frac{1}{8} \log \E_{z_1^m} [\Sm_{\sG}({z_{1}^{m}}].
\]
\end{proof}
\begin{replemma}{lem:covering}
For a set of hypotheses $\sG$,
\[
\A_m(\sG) \leq \sup_{0 \leq k \leq \log_2 (m)} \log \left[ \E_{z_1^m \sim
    \sD^m} \left[\exp \left( \frac{1}{16} \left(1 + \int^{1}_{\epsilon = 1/\sqrt{m}}
\log N_2(\sG_k(z_1^m), \epsilon \sqrt{2^k/m}) d \epsilon  \right)\right) \right] \, \right].
\]
\end{replemma}
\begin{proof}
By Dudley's integral,
\[
\h \R_m(\sG_k(z_1^m))
= \min_{\tau}  \tau + \int^{2^k/m}_{\epsilon = \tau}
\sqrt{\frac{\log N_2(\sG_k(z_1^m), \epsilon)}{m}} d \epsilon.
\]
Choosing $\tau = \frac{2^{k/2}}{m}$ and changing variables from $\epsilon$ to $\epsilon \frac{2^{k/2}}{\sqrt{m}}$ yields,
\[
\h \R_m(\sG_k(z_1^m))
= \frac{2^{k/2}}{m}+ \frac{2^{k/2}}{m}\int^{1}_{\epsilon = 1/\sqrt{m}}
\sqrt{\log N_2(\sG_k(z_1^m), \epsilon \sqrt{2^k/m})} d \epsilon.
\]
Using $(a + b)^2 \leq 2 a^2 + 2b^2$ and the Cauchy-Schwarz inequality yields,
\begin{align*}
\frac{m^2\h
        \R_m^2(\sG_k(z_1^m))}{2^{k+5}} 
     &   \leq \frac{1}{16} \left(1 + \left(\int^{1}_{\epsilon = 1/\sqrt{m}}
\sqrt{\log N_2(\sG_k(z_1^m), \epsilon \sqrt{2^k/m})} d \epsilon \right)^2 \right) \\
   &   \leq \frac{1}{16} \left(1 + \int^{1}_{\epsilon = 1/\sqrt{m}}
\log N_2(\sG_k(z_1^m), \epsilon \sqrt{2^k/m}) d \epsilon  \right).
\end{align*}
\end{proof}

Recall that the worst case Rademacher complexity is defined as follows.
\[
\h \R_m^{\max} (\sH) = \sup_{z^m_1} \h \R_{m} (\sH)
\]
\begin{replemma}{lem:smooth}
Let $g$ be the smoothed margin loss from~\citep[Section 5.1]{SrebroSridharanTewari2010},
with its second moment is bounded by $\pi^2/4\rho^2$.
then 
\[
\A_m(\sG) \leq \frac{16 \pi^2 m}{\rho^2} ( \h \R^{\max}_m (\sH))^2
 \left(2 \log^{3/2} \frac{m}{\h \R_m^{\max} (\sH)} 
- \log^{3/2} \frac{2\pi m}{ \rho\h \R_m^{\max} (\sH)}\right)^2.
\]
\end{replemma}
\begin{proof}
Recall that the smoothed margin loss of~\cite{SrebroSridharanTewari2010} is given by
\begin{equation}
  g(yh(x)) =
    \begin{cases}
      1 & \text{if } yh(x) < 0 \\
      \frac{1 + \cos(\pi yh(x)/ \rho)}{2} & \text{if } yh(x) \in [0, \rho] \\
      0 & \text{if } yh(x) > \rho.
    \end{cases}       
\end{equation}
Upper bounding the expectation by the maximum gives:
\[
\A_m(\sG) \leq  \sup_k \log \sup_{z^m_1} \left[\exp \left( \frac{m^2\h \R_m^2(\sG_k(z_1^m))}{2^{k+5}} \right) \right] \leq 
\sup_k \sup_{z^m_1}   \frac{m^2\h \R_m^2(\sG_k(z_1^m))}{2^{k+5}}.
\]
Let $\sG'_k(z_1^m) = \set[\Big]{g \in \sG\colon \sum_{i = 1}^m g(z_i) + 1  \leq
  2^{k + 1}}$. Since $\sG_k(z_1^m) \subseteq \sG'_k(z^m)$,
\[
\A_m(\sG) \leq  \sup_k \sup_{z^m_1}   \frac{m^2\h \R_m^2(\sG'_k(z^m))}{2^{k+5}}.
\]
Now, $\h \R_m(\sG'_k(z^m))$ coincides with the local Rademacher
complexity term defined in \citep [Section
2]{SrebroSridharanTewari2010}. Thus, by \citep [Lemma
2.2]{SrebroSridharanTewari2010},
\[
\h \R_m(\sG'_k(z^m)) \leq \frac{16\pi}{\rho} \h \R_m^{\max} (\sH)
\sqrt{\frac{2^{k+1}}{m}} \left(2 \log^{3/2} \frac{m}{\h \R_m^{\max} (\sH)} -   \log^{3/2} \frac{2\pi m}{ \rho\h \R_m^{\max} (\sH)}\right).
\]
\end{proof}

\newpage
\section{Unbounded margin losses}
\label{app:ubound}
\begin{reptheorem}{th:unbound} Fix $\rho \geq 0$. Let $1 < \alpha \leq 2$, $0 < \epsilon \leq 1$, and $0 < \tau^{\frac{\alpha - 1}{\alpha}} < \epsilon^{\frac{\alpha}{\alpha - 1}}$. For any loss function $L$ (not necessarily
bounded) and hypothesis set $H$ such that $\L_{\alpha}(h) < +\infty$ for all $h \in H$, 
\begin{multline*}
\Pr \left[\sup_{h \in H} \L(h) - \h \L_{S}(h)\,  > {\Gamma}_{\tau}(\alpha, \epsilon)\, \epsilon {\sqrt[\alpha]{\L_{\alpha}(h) + \tau}}+ \rho
\right] \\
\leq \Pr \left[\sup_{h \in H, t \in \Rset} 
\frac{\Pr[L(h, z) > t] - \h \Pr[L(h, z) > t - \rho]}{{\sqrt[\alpha]{\Pr[L(h, z) > t] + \tau}}}
 > \epsilon\right],
\end{multline*}
where ${\Gamma}_{\tau}(\alpha, \epsilon) = \frac{\alpha - 1}{\alpha} (1 + \tau)^{\frac{1}{\alpha}} + \frac{1}{\alpha} \left( \frac{\alpha}{\alpha - 1} \right)^{\alpha - 1} (1 + \left(\frac{\alpha -
1}{\alpha}\right)^{\alpha} \tau^{\frac{1}{\alpha}})^{\frac{1}{\alpha}} \left[ 1 + \frac{\log (1/\epsilon)}{\left( \frac{\alpha}{\alpha - 1} \right)^{\alpha - 1}} \right]^{\frac{\alpha - 1}{\alpha}}$
\ignore{ ${\Gamma}_{\tau}(\beta, \epsilon) = \frac{1}{\beta} + \left( \frac{\beta - 1}{\beta} \right) \beta^{\frac{1}{\beta - 1}} \left[1 + \frac{\log(1/\epsilon)}{\beta^{\frac{1}{\beta - 1}}}
\right]^{\frac{1}{\beta}}$, with $\frac{1}{\alpha} + \frac{1}{\beta} = 1$}.
\end{reptheorem}
\begin{proof}
Fix $ 1 < \alpha \le 2$ and $\epsilon > 0$ and $\mathcal{S}$ assume that for any $h \in H$ and $t \geq 0$, the following holds:
\begin{equation}
\label{eq:14} \frac{\Pr[L(h, z) > t ] - \widehat{\Pr}[L(h, z) > t - \rho] }{\sqrt[\alpha]{\Pr[L(h, z) > t] + \tau}} \leq \epsilon.
\end{equation}
Let $t_1 = \frac{\alpha - 1}{\alpha} \sqrt[\alpha]{\mathcal{L}_{\alpha}(h) + \tau}\left[ \frac{1}{\epsilon}
\right]^{\frac{1}{\alpha - 1}}$. We show that this implies that for any $h \in H$, $\mathcal{L}(h) - \widehat{\mathcal{L}}_{S}(h)\leq {\Gamma}_{\tau}(\alpha, \epsilon) \epsilon {\sqrt[\alpha]{\mathcal{L}_{\alpha}(h) + \tau}}  + \min(\rho, t_1)$. By the properties of
the Lebesgue integral, we can write
\begin{eqnarray*}
&& \mathcal{L}(h) = \mathrm{E}_{z \sim D}[L(h, z)] = \int_{0}^{+\infty} \Pr[L(h, z) > t]\, dt.
\end{eqnarray*}
Similarly, we can write
\begin{align*}
 \widehat{\mathcal{L}}(h) = \mathrm{E}_{z \sim \widehat{D}}[L(h, z)] &= \int_{0}^{+\infty} \widehat{\Pr}[L(h, z) > u]\, du \\
 & = \int_{\rho}^{+\infty} \widehat{\Pr}[L(h, z) > t - \rho]\, dt \\
 & = \int_{0}^{+\infty} \widehat{\Pr}[L(h, z) > t - \rho]\, dt - \int_{0}^{\rho} \widehat{\Pr}[L(h, z) > t - \rho]\, dt \\
\text{and} \quad \mathcal{L}_{\alpha}(h) 
& = \int_{0}^{+\infty} \Pr[L^{\alpha}(h, z) > t]\, dt = \int_{0}^{+\infty} \alpha t^{\alpha - 1} \Pr[L(h, z) > t]\, dt.
\end{align*}
To bound $\mathcal{L}(h) - \widehat{\mathcal{L}}(h)$, we simply bound $\Pr[L(h, z) > t] - \widehat{\Pr}[L(h, z) > t - \rho]$ by $\Pr[L(h, z) > t]$ for large values of $t$, that
is $t > t_{1}$, and use inequality (\ref{eq:14}) for smaller values of $t$:
\begin{eqnarray*}
& = & \mathcal{L}(h) - \widehat{\mathcal{L}}(h) \\
& = & \int_{0}^{+\infty} \Pr[L(h, z) > t] - \widehat{\Pr}[L(h, z) > t-\rho]\, dt + \int_{0}^{\rho} \widehat{\Pr}[L(h, z) > t - \rho]\, dt\\
& \leq & \int_{0}^{+\infty} \Pr[L(h, z) > t] - \widehat{\Pr}[L(h, z) > t-\rho]\, dt + \rho\\
& \leq & \int_{0}^{t_{1}} \epsilon \sqrt[\alpha]{\Pr[L(h, z) > t] + \tau} \, dt + \int_{t_{1}}^{+\infty} \Pr[L(h, z) > t] \, dt + \min(t_1, \rho),
\end{eqnarray*}
where the last two inequalities use the fact that 
$L$ is non-negative.
The rest of the proof is similar to \cite[Theorem 3]{CortesGreenbergMohri2019}.
\end{proof}

\begin{corollary}
\label{cor:ubound_all_rho}  Let $\epsilon < 1$, $1 < \alpha \le 2$.
and hypothesis set $\sH$ such that $\L_{\alpha}(h) < +\infty$ for all $h \in \sH$,
\begin{align*}
  \L(h) - \h \L_{S}(h) 
& \leq \min_{\rho \leq r} \gamma
  \sqrt[\alpha]{\L_{\alpha}(h)} 
  \sqrt{\frac{\log \E[\cN_\infty(\L(\sH), \tfrac{\rho}{2}, x_1^{2m})] + \log \frac{1}{\delta} + \log \log \frac{2r}{\rho}}{m^{\frac{2 (\alpha - 1)}{\alpha}}}} + \rho,
\end{align*}
where $\gamma = \Gamma_0\left(\alpha,   \sqrt{\frac{\log \E[\cN_\infty(\L(\sH), \tfrac{\rho}{2}, x_1^{2m})] + \log \frac{1}{\delta} + \log \log \frac{2r}{\rho}}{m^{\frac{2 (\alpha - 1)}{\alpha}}}}
\right) = \mathcal{O}(\log m)$.
\end{corollary}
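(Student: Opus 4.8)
The plan is to obtain Corollary~\ref{cor:ubound_all_rho} from the fixed-margin bound of Corollary~\ref{cor:main1vc} by the same geometric-grid and union-bound device used to pass from Corollary~\ref{cor:1} to Corollary~\ref{cor:4}. Fix $r>0$. For each integer $k\ge 1$ set $\rho_k = r/2^{k}$ and $\delta_k = \delta/k^{2}$ (or, to be precise, $\delta_k = 6\delta/(\pi^{2}k^{2})$, so that $\sum_k \delta_k = \delta$). I would apply Corollary~\ref{cor:main1vc} with margin $\rho_k$ and confidence $\delta_k$ for every $k$; this is valid as soon as $m$ is large enough that the associated square-root quantity is below $1$, which I assume throughout (otherwise the bound is vacuous). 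By the union bound, with probability at least $1-\sum_k\delta_k \ge 1-\delta$, the bound of Corollary~\ref{cor:main1vc} holds \emph{simultaneously} for all $\rho_k$, with $\log\frac1\delta$ replaced by $\log\frac1{\delta_k} = \log\frac1\delta + O(\log k)$.

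The next step is to transfer this countable family of bounds to an arbitrary $\rho\in(0,r]$. Given such a $\rho$, pick the unique $k$ with $\rho\in(\rho_k,\rho_{k-1}]$; then $\rho_k\le\rho$, $\rho_k\ge\rho/2$ (since $\rho\le\rho_{k-1}=2\rho_k$), and $k\le\log_2\frac{2r}{\rho}$. The additive term satisfies $\rho_k\le\rho$; the complexity term is handled by monotonicity of the covering number in its radius, namely $\cN_\infty(\L(\sH),\tfrac{\rho_k}{2},x_1^{2m})\le\cN_\infty(\L(\sH),\tfrac{\rho}{4},x_1^{2m})$, exactly as in the proof of Corollary~\ref{cor:4}; and the $O(\log k)$ penalty coming from $\delta_k$ is absorbed into a $\log\log_2\frac{2r}{\rho}$ term. (Up to the constant in the covering radius, this is the stated bound; if one insists on the radius $\rho/2$ appearing in the statement, it suffices to use a grid with ratio strictly smaller than $2$, at the cost of the numerical constants.) After these substitutions, the square-root quantity $\epsilon_k$ of the $\rho_k$-bound is dominated by the quantity, call it $\tilde\epsilon$, that appears in the statement for $\rho$.

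The only point requiring care beyond bookkeeping is the prefactor $\gamma = \Gamma_0(\alpha,\cdot)$, whose second argument is itself margin-dependent. Because $\epsilon_k\le\tilde\epsilon$ one has $\log(1/\epsilon_k)\ge\log(1/\tilde\epsilon)$ and therefore $\Gamma_0(\alpha,\epsilon_k)\ge\Gamma_0(\alpha,\tilde\epsilon)$, so the claim does not follow from monotonicity of $\Gamma_0$ alone; instead I would use that the \emph{product} $\epsilon\mapsto\Gamma_0(\alpha,\epsilon)\,\epsilon$ is nondecreasing for $\epsilon$ below a fixed threshold, which both $\epsilon_k$ and $\tilde\epsilon$ respect once $m$ is large (they tend to $0$ with $m$), since $\Gamma_0(\alpha,\epsilon)$ grows only polylogarithmically in $1/\epsilon$. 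This gives $\Gamma_0(\alpha,\epsilon_k)\,\epsilon_k\le\Gamma_0(\alpha,\tilde\epsilon)\,\tilde\epsilon$, hence the stated bound with $\gamma=\Gamma_0(\alpha,\tilde\epsilon)$; alternatively, since $\epsilon_k\ge\sqrt{\log(1/\delta)}\,m^{-(\alpha-1)/\alpha}$, one simply gets $\gamma=\mathcal{O}(\log m)$ uniformly in $k$, which is all the corollary asserts. Since the resulting inequality now holds simultaneously for all $\rho\le r$ on the same event, taking the minimum over $\rho$ finishes the proof. The main obstacle is thus not conceptual but this twofold verification — the benign degradation of the covering radius under the grid, and the control of the margin-dependent prefactor — together with checking the mild regime conditions (the square-root quantities $<1$) under which Corollary~\ref{cor:main1vc} may be invoked.
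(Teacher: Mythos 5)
Your proposal is correct and matches the paper's intended argument: the paper omits the proof, stating only that it is ``similar to that of Corollary~\ref{cor:4},'' i.e.\ exactly your geometric grid $\rho_k = r/2^k$ with $\delta_k = \delta/k^2$, a union bound, monotonicity of the covering number in its radius, and $k \le \log_2\frac{2r}{\rho}$. Your two extra observations are well taken and go beyond what the paper records: the covering radius should really degrade to $\rho/4$ (as in Corollary~\ref{cor:4}) unless a finer grid is used, and the margin-dependent prefactor $\Gamma_0(\alpha,\cdot)$ needs the monotonicity of $\epsilon \mapsto \Gamma_0(\alpha,\epsilon)\,\epsilon$ (or the cruder uniform $\mathcal{O}(\log m)$ bound), both of which you handle correctly.
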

The proof of Corollary~\ref{cor:ubound_all_rho} is similar to that of Corollary~\ref{cor:4}
and is omitted.

\newpage
\section{Applications}
\label{app:applications}

\subsection{Algorithms}

As discussed in Section~\ref{sec:applications}, our results can help
derive tighter guarantees for margin-based algorithms such as Support
Vector Machines (SVM) \citep{CortesVapnik1995} and other algorithms
such as those based on neural networks that can be analyzed in terms
of their margin.
But, another potential application of our learning bounds is to design
new algorithms, either by seeking to directly minimize the resulting
upper bound, or by using the bound as an inspiration for devising a
new algorithm.

In this sub-section, we briefly initiate this study in the case
of linear hypotheses. We describe an algorithm seeking to minimize the
upper bound of Corollary~\ref{cor:3} (or
Corollary~\ref{cor:smooth_alpha}) in the case of linear hypotheses.
Let $R$ be the radius of the sphere containing the data. Then, the
bound of the corollary holds with high probability for any function
$h\colon \bx \mapsto \bw \cdot \bx$ with $\bw \in \Rset^d$,
$\| \bw \|_2 \leq 1$, and for any $\rho > 0$ for $d =
(R/\rho)^2$. Ignoring lower order terms and logarithmic factors, the
guarantee suggests seeking to choose $\bw$ with $\| \bw \| \leq 1$ and
$\rho > 0$ to minimize the following:
\begin{align*}
\h R^{\rho}_{S}(\bw) +  \frac{\lambda}{\rho} \sqrt{\h R^{\rho}_{S}(\bw)},
\end{align*}
where we denote by $\h R^{\rho}_{S}(\bw)$ the empirical margin
loss of $h\colon \bx \mapsto \bw \cdot \bx$. 
Thus, using the so-called ramp loss
$\Phi_\rho\colon u \mapsto \min (1, \max(0, 1 - \frac{u}{\rho}))$,
this suggests choosing $\bw$ with $\| \bw \| \leq 1$ and $\rho > 0$ to
minimize the following:
\begin{align*}
\frac{1}{m} \sum_{i = 1}^m \Phi_\rho(y_i \bw \cdot \bx_i) 
+ \frac{\lambda}{\rho} \sqrt{\frac{1}{m} \sum_{i = 1}^m 
\Phi_\rho(y_i \bw \cdot \bx_i) }.
\end{align*}
This optimization problem is closely related to that of SVM but it is
distinct. The problem is non-convex, even if $\Phi_\rho$ is upper
bounded by the hinge loss. The solution may also not coincide with
that of SVM in general.
As an example, when the training sample is linearly separable, any
pair $(\bw^*, \rho^*)$ with a weight vector $\bw^*$ defining a
separating hyperplane and $\rho^*$ sufficiently large is solution,
since we have
$\sum_{i = 1}^m \Phi_{\rho^*}(y_i \bw^* \cdot \bx_i) = 0$.  In
contrast, for (non-separable) SVM, in general the solution may not be
a hyperplane with zero error on the training sample, even when the
training sample is linearly separable. Furthermore, the SVM solution
is unique \citep{CortesVapnik1995}.

\subsection{Active learning}

Here, we briefly highlight the relevance of our learning bounds to the
design and analysis of active learning algorithms. One of the key
learning guarantees used in active learning is a standard relative
deviation bound. This is because scaled \emph{multiplicative bounds}
can help achieve a better label complexity.

Many active learning algorithms such as DHM
\citep{DasguptaHsuMonteleoni2008} rely on these bounds.
However, as pointed out by the authors, the empirical error
minimization required at each step of the algorithm is NP-hard for
many classes, for example linear hypothesis sets. To be precise, the
algorithm requires a hypothesis consistent with sample A, with minimum
error on sample B.  That requires hard constraints corresponding to
every sample in A.
An open question raised by the authors is whether a
margin-maximization algorithm such as SVM can be used instead, while
preserving generalization and label complexity guarantees
(\citep[section 3.1, p.~5]{DasguptaHsuMonteleoni2008}).

To do so, the key lemma used by the authors for much of their proofs
needs to be extended to the empirical margin loss case
\citep[Lemma~1]{DasguptaHsuMonteleoni2008}.  That lemma is precisely
the relative deviation bounds for the zero-one loss case
\citep{Vapnik1998,Vapnik2006,AnthonyShaweTaylor1993,
CortesGreenbergMohri2019}. Using
a notation similar to the one adopted by
\cite{DasguptaHsuMonteleoni2008}, the extension to the empirical
margin loss case of that lemma would have the following form:
\begin{align*}
 R(h) - \h R_S^\rho(h) 
\leq \min \left\{ \alpha_m \sqrt{\h R_S^\rho(h) } +
  \alpha_m^2, \alpha_m \sqrt{R(h)} \right\}.
\end{align*}
This is precisely the results shown in Theorem~\ref{th:relative} and
Corollary~\ref{cor:2}, which hold with probability at least
$1 - \delta$ for all $h \in \sH$, for
$\alpha_m = 2\sqrt{ \frac{\log \E[\cN_\infty(\sH_\rho, \frac{\rho}{2},
    x_1^{2m})] + \log \frac{1}{\delta}}{m}}$. Similar results can also
be shown using our Rademacher complexity bounds of
Section~\ref{sec:rademacher}.

\end{document}